\documentclass[final]{article}

\usepackage{graphicx} 
\usepackage{comment}

\usepackage{rotating}
\usepackage{tabularx}
\usepackage{url}

\usepackage{euscript}
\usepackage{amsmath}
\usepackage{mathtools}
\usepackage{amssymb}
\usepackage{tikz-cd} 
\usetikzlibrary{fadings}
\usepackage{fancyvrb} 
\usepackage{amsthm}
\usepackage{array}

\usepackage{hyperref}
\usepackage[inline]{enumitem} 

\usepackage{cite}
\usepackage{fontawesome5}

\VerbatimFootnotes  

\newcommand\numberthis{\addtocounter{equation}{1}\tag{\theequation}}

\newcommand{\C}{\mathbb{C}}
\newcommand{\R}{\mathbb{R}}
\newcommand{\K}{\mathbb{K}}
\newcommand{\Z}{\mathbb{Z}}

\newcommand{\PP}{\mathbb{P}}

\newcommand{\Mat}{\text{Mat}}
\newcommand{\rank}{\text{rank}}
\newcommand{\Sym}{\text{Sym}}
\newcommand{\End}{\text{End}}

\newcommand{\ccr}{\star}
\newcommand{\conv}{\ast}

\newcommand{\circbd}{\mathrel{\text{\faDotCircle[regular]}}}

\newcommand{\kprod}{\otimes}

\theoremstyle{definition}
\newtheorem{definition}{Definition}[section] 
\newtheorem{example}[definition]{Example}

\newtheorem{remark}[definition]{Remark}
\AtBeginEnvironment{remark}{\setlength{\parindent}{0pt}}

\theoremstyle{plain}
\newtheorem{theorem}[definition]{Theorem}
\newtheorem{lemma}[definition]{Lemma}
\newtheorem{proposition}[definition]{Proposition}
\newtheorem{corollary}[definition]{Corollary}

\usepackage{color}

\usepackage[normalem]{ulem}
\title{The Geometry of Polynomial Group Convolutional Neural Networks}
\author{Yacoub Hendi \thanks{Uppsala University, \url{yacoub.hendi@math.uu.se}} , 
Daniel Persson \thanks{Chalmers University of Technology , \url{daniel.persson@chalmers.se}} , 
Magdalena Larfors \thanks{Uppsala University, \url{magdalena.larfors@physics.uu.se}}}
\date{September 2026}

\begin{document}
\maketitle

\begin{abstract}
\noindent We study polynomial group convolutional neural networks (PGCNNs) for an arbitrary finite group 
$G$. In particular, we introduce a new mathematical framework for PGCNNs using the language of graded group algebras. This framework yields two natural parameterizations of the architecture, based on Hadamard and Kronecker products, related by a linear map. 
We compute the dimension of the associated neuromanifold, verifying that it depends only on the number of layers and the size of the group. Furthermore, we show the general fiber of both parameterizations is trivial up to the regular group action and rescaling. Hence both parametrization maps are identifiable.
\end{abstract}

\tableofcontents

\section{Introduction}

Geometric deep learning \cite{7974879, bronstein2021geometricdeeplearninggrids, Gerken2023} is the subfield in deep learning that studies equivariant architectures with respect to a given group $G$. 
The main principle of geometric deep learning is that when training over a dataset that has a set of symmetries, we can introduce an informed bias towards equivariant and invariant models by using equivariant architectures such as convolutional layers or graph layers. 
The benefit of doing this is a reduction of sample complexity; this can be seen since an equivariant architecture has fewer parameters due to weight-sharing, and fewer parameters need fewer training data points to determine their values and, henceforth, smaller sample complexity \cite{NEURIPS2018_03c6b069}. 

A classical example of equivariant architectures is convolutional neural networks (CNN), which are equivariant under discrete translations, corresponding to the group $G=\mathbb{Z}$.
A general theory for linear equivariant layers for finite groups and compact groups is presented in \cite{pmlr-v48-cohenc16, 10.5555/3454287.3455107}.
The core idea is that enforcing equivariance on linear maps, under given representation, results in weight-sharing constraints which $G$-convolutional layers have.
Hence all linear equivariant layers are convolutional and vice-versa.
In \cite{nyholm2025equivariantnonlinearmapsneural}, this approach is extended to non-linear equivariant layers.

Apart from designing equivariant models, another direction of geometric deep learning is concerned with understanding the expressivity and training dynamics of such models.
In this article we are interested in the latter direction, and to investigate this, we apply tools from the recently emergent field of \emph{neuroalgebraic geometry} \cite{marchetti2025algebraunveilsdeeplearning}.

The main idea of neuroalgebraic geometry can be briefly understood as follows. 
Given a fixed neural network architecture $\Phi$, one obtains a parametric family of functions $\Phi_\theta$ indexed by the network parameters $\theta$.
We call the space of all realizable functions \emph{the neuromanifold} $\mathcal{M}_\Phi$ associated with the architecture $\Phi$. 
The term “manifold” is a slight misnomer, as these functional spaces usually exhibit singularities.
Given a loss function $\mathcal{L}$ and a dataset $\mathcal{D}$, then learning via stochastic gradient descent can be interpreted as a distance minimization problem over the neuromanifold to an objective function $f_{\mathcal{L}, \mathcal{D}}$ in the ambient space, that depends on both the loss function $\mathcal{L}$ and the dataset $\mathcal{D}$.
From this perspective, the geometric properties of the neuromanifold translate into statistical counterparts. 
For example, dimension and curvature translate into sample complexity and expressivity, fiber classification translates into identifiability, and so on.

However, the analysis of these neuromanifolds is often intractable due to their compositional complexity and the use of nonlinear activation functions, which in some cases even fail to be analytic (e.g. the RELU function). 
One useful simplification is to restrict to polynomial activation functions, which can be justified for three reasons. Firstly, polynomials are universal approximators by Weierstrass Approximation Theorem, which implies that neural networks with non-polynomial activation functions can be approximated by networks with polynomial activation functions.
Secondly, the neuromanifold becomes a subspace of the space of homogeneous polynomials of fixed degree and in a fixed number of variables. 
More precisely, the neuromanifold becomes a semi-algebraic set (i.e. it can be described as a solution set of polynomial equations and inequalities in its finite dimensional ambient space), which can be analyzed using tools from algebraic geometry. 
Thirdly, even though non-polynomial activations are more used in mainstream uses, neural networks with polynomial activation have performed well in many different applications (e.g healthcare, finance, computer vision, cryptography, theoretical physics) \cite{DEY2016130, Fong2022, GHAZALI20113765, Nayak2018, 9665897, poly2021, DBLP:journals/corr/abs-1711-05189, pmlr-v145-douglas22a}. 

In this paper, we focus on the study of polynomial group convolutional neural networks (PGCNN) i.e. neural networks with $G$-convolutional layers and polynomial activation functions.
We formalize the construction of PGCNNs using the language of group algebras and describe the neuromanifolds as subspaces of these algebras.
The aim of this paper is to generalize the results in \cite{shahverdi2024geometryoptimizationpolynomialconvolutional} which hold for polynomial CNNs to PGCNN for an arbitrary finite group $G$.

\begin{figure}
    \centering
    \includegraphics[width=0.5\linewidth]{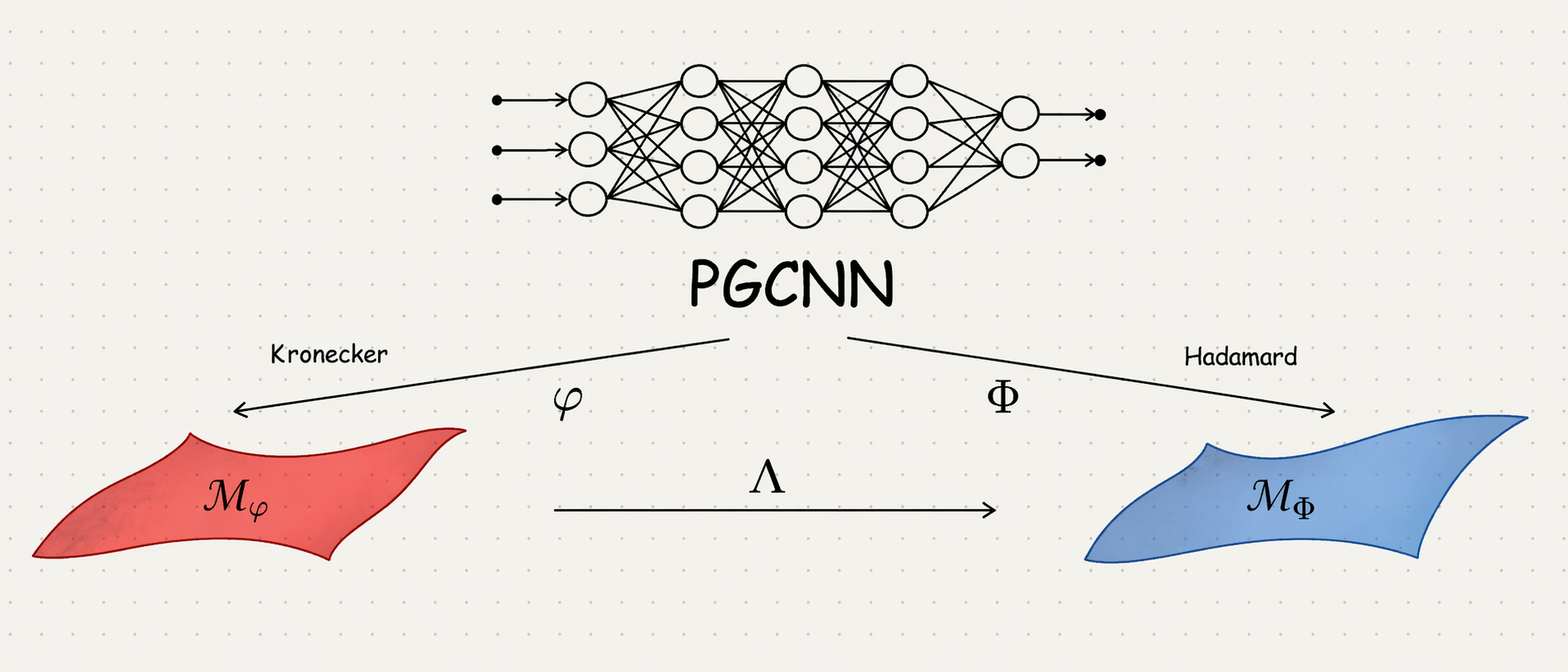}
    \caption{
    PGCNN parametrization maps $\varphi$ and $\Phi$ \protect\footnotemark.
    }
    \label{fig:pgcnn_parametrization}
\end{figure}
\footnotetext{The figure is generated by Gemini Pro.}

\paragraph{Our contribution.}

\begin{enumerate}
    \item  We propose a mathematical framework for polynomial group convolutional networks in the language of graded group algebras. See §\ref{sec: preliminaries} and §\ref{sec: pgcnn}.

    \item
    More importantly, we provide two ways to describe a polynomial activation function of degree $r$ in this language. 
    It can either be described by the Kronecker product or the Hadamard product.
    Depending on which we choose, we get two different parametrization maps denoted by $\varphi$ and $\Phi$, respectively.
    These two maps are related by $\Phi = \Lambda \circ \varphi$ where $\Lambda$ is a linear map; see Figure \ref{fig:pgcnn_parametrization}.

    \item For the polynomial group convolutional architecture of depth $L\geq 1$ and a polynomial activation function  of degree $r\geq 2$ associated with a finite group $G$, the neuromanifold $\mathcal{M}_\Phi$ and the image $\mathcal{M}_\varphi$ of $\varphi$ have the same dimension $L(|G|-1)+1$; see Theorem \ref{thm: tilde_Phi_is_regular} and Theorem \ref{thm: tilde_varphi_is_regular}.
    Note that the dimension does not depend on the degree $r$ or the particular group structure of $G$ but only on the size of $G$ and the number of layers $L$.

    \item From the previous point, it follows directly that the general fibers of the parametrization maps $\Phi$ and $\varphi$ are finite, up to scaling. Moreover, we give a precise description of the general fibers of $\varphi$ and $\Phi$, see Theorems \ref{thm: general_fiber_of_varphi}, and \ref{thm: general_fiber_of_Phi}.
    Theorem \ref{thm: general_fiber_of_Phi} shows that a PGCNN is \emph{globally identifiable} for degree $r\geq 2$ and any depth $L$.  

    \item Finally, we provide a package \texttt{PGCNNGeometry}\footnote{\url{https://github.com/jake997/PGCNNGeometry}} that uses \texttt{Sage} and \texttt{Macaulay2} to compute the dimension and the size of the general fiber of $\varphi$ and $\Phi$ for an arbitrary finite group.

\end{enumerate}

It is worth highlighting here that the results from 
\cite{massarenti2025alexanderhirschowitztheoremneurovarieties, usevich2025identifiabilitydeeppolynomialneural, finkel2025activationdegreethresholdsexpressiveness,  craciun2025linearindependencepowerspolynomials} 
show that the neuromanifolds of polynomial neural networks (PNN) with fixed width have the expected (maximum) dimension. More precisely, for a \emph{general element} of such a neuromanifold, the tangent space has the expected dimension.
Since a PGCNN can be represented as a PNN with fixed width, then one might expect that the dimension of the neuromanifold of PGCNN can be inferred immediately from these results. 
This is not true because a PGCNN enforces weight-sharing  constraints that violate the required generality assumption. In terms of algebraic geometry, the neuromanifold of PGCNN is a specialization of the neuromanifold of PNN with fixed width, and general conditions do not necessarily hold over specializations.

\paragraph{Related works.}

We review here works related to neuroalgebraic geometry and focus on those concerning convolutional and equivariant architectures.

Mehta et al. \cite{mehta2018losssurfacedeeplinear} presented one of the first attempts to use algebraic geometry to count critical points in the loss landscape for deep linear neural networks. 
Building on this perspective, Traget et al.  \cite{trager2020purespuriouscriticalpoints} introduced the classification of critical points as spurious (due to parametrization) and pure (due to geometry).
In \cite{DBLP:journals/corr/abs-1905-12207}, Kileel et al. connected the dimension of the neuromanifold of PNNs to the expressivity of the model and established upper and lower bounds on layer width for neuromanifolds to be filling (i.e. they fill their ambient space).
In the same paper, they conjectured that for sufficiently high activation degrees, all PNN architectures are filling. This conjecture was later proven independently in each of \cite{finkel2025activationdegreethresholdsexpressiveness, massarenti2025alexanderhirschowitztheoremneurovarieties, craciun2025linearindependencepowerspolynomials}. 
Furthermore, questions about non-defectiveness and identifiability of PNNs were settled separately by
Usevich et al. \cite{usevich2025identifiabilitydeeppolynomialneural} using techniques from tensor decompositions and by Massarenti and Mella \cite{massarenti2025alexanderhirschowitztheoremneurovarieties} by analyzing secant varieties.

Turning to convolutional architectures, the neuromanifold of a linear convolutional network was first studied in \cite{kohn2022geometrylinearconvolutionalnetworks} and \cite{kohn2024functionspacecriticalpoints}. 
In these works, convolution filters were identified with homogeneous bivariate polynomials and the convolution operation with polynomial multiplication. These allowed the authors to define the neuromanifold by factorization conditions and to show it is a semi-algebraic set; moreover, conditions under which it is filling are established.
Building on this line of work, Shahverdi \cite{shahverdi2024algebraiccomplexityneurovarietylinear} developed recursive algorithms to construct the polynomial equations and inequalities that define the neural manifold for a linear convolutional network, and it relates the number of critical points of the square loss function to the generic Euclidean degree of the Segre variety.

The work most closely related to this article is \cite{shahverdi2024geometryoptimizationpolynomialconvolutional}. In this work, Shahverdi et al. compute the dimension, the degree and the generic Euclidean degree for polynomial convolutional  networks in addition to classifying the general fiber and the singular locus. A central aspect of their analysis is based on the relation between  the neuromanifold and the Segre-Veronese variety; in particular, they show that the neuromanifold for CNN is birational to the Segre-Veronese variety.
Our work generalizes their dimension result for PGCNNs. 
In a later work \cite{shahverdi2026learning}, Shahverdi et al. considered neural networks and CNN for a generic polynomial as an activation functions, and they showed that for CNN the fibers are generally singletons and for fully connected neural networks fibers are finite.
Finally, in \cite{shahverdi2026identifiableequivariantnetworkslayerwise}, Shahverdi et al. treated identifiable equivariant models and showed that, under the condition of identifiability, these models are layerwise equivariant for some representation on latent spaces. 
An interesting feature of this result is that it is architecture-agnostic.

\paragraph{Notation.} We denote by $\K$a field with characteristic $0$ and its closure by $\overline{K}$.
For example, $\K$ can be the real numbers, the complex numbers, the rational numbers or p-adic numbers. 
For a field $\K$ and $n$ variables $x =(x_1, \dots, x_{n})$ we denote the ring of polynomials in $x$ over the field $\K$ by $\K[x_1, \dots, x_{n}]$ and the space of homogeneous polynomials in the variables $x$ of degree $d$ by $\Sym_\K(x, d)$. 
More generally, let the variables be partitioned into $m$ blocks $x^{(1)},\dots,x^{(m)}$, where $x^{(i)}=(x^{(i)}_1,\dots,x^{(i)}_{n_i})$. 
A polynomial $f \in \K[x^{(1)},\dots,x^{(m)}]$ is called \emph{multihomogeneous} of multidegree $(d_1,\dots,d_m)$ if it is homogeneous of degree $d_i$ with respect to each block $x^{(i)}$. 
We denote the space of such polynomials by $\Sym_\K\bigl((x^{(1)},\dots,x^{(m)}),(d_1,\dots,d_m)\bigr)$.

\section{Preliminaries on graded group algebras}\label{sec: preliminaries}
In this section, we present graded group algebras, their regular representations, and how Kronecker and Hadamard products act on them.
These are the building blocks for polynomial group convolutional networks. The section can be skipped, in case the reader is familiar with these concepts from algebra, and returned to later when needed. 

In the following, $G$ and $H$ are general finite groups with order $n=|G|$ and $|H|$ and $S:= \bigoplus_{i} S_i$ is a graded commutative ring such as the ring of polynomials in $n$ variables $\K[x_1, \dots, x_n]$.
For two elements $g, h\in G$ we denote their group product by $gh$ and for two elements $s, t \in S$ we denote their ring product by $st$ and their sum by $s+t$. 

\subsection{Graded group algebras and regular representations}
Here we present graded group algebras and their regular representations, which describe the $G$ action on group convolutional networks. 
We show, under a fixed choice of basis, these representations are described by $G$-circulant matrices, which are a generalization of the circulent matrices in the case where $G=\Z_n$.
\begin{definition}
 The graded group algebra $S[G]$ is an algebra
\[ 
S[G]:= \{\theta\ |\ \theta: G \to S\} \cong S^{|G|},
\]
where $S^{|G|}$ is the free $S$-module of rank ${|G|} $. The graded algebra bilinear product in $S[G]$ is given by \emph{the convolution operation,}
\[
(\theta \conv \psi)(g) := \sum_{h\in G}\theta(g h^{-1}) \psi(h),
\]
and the scalar multiplication is defined componentwise; i.e. for $s \in S$ we define $(s\theta)(g) := s\theta(g)$.
A basis of $S[G]$ is given by the group elements in $G$.
We call elements in $S[G]$ $S$-filters over $G$.
\end{definition}

\begin{remark}
    There are multiple equivalent ways to describe a filter $\theta \in S[G]$.
    In the definition it is described as a function from $G$ to $S$.
    Another useful way to describe this filter is to identify it with the formal linear combination 
    \[
    \theta \equiv \sum_{g\in G}\theta(g) g.
    \]
    Then the convolution defined above becomes just the product of such formal linear combinations.
    Note that if $G=(\Z, +)$, then the description of a finite filter as a formal linear combination identifies it with a polynomial in one variable, which is the identification of convolutional filters in linear CNN presented in \cite{kohn2022geometrylinearconvolutionalnetworks}.
    
    Also from here it is clear how $S[G]$ is isomorphic to the free $S$-module of rank ${|G|}$.
    Lastly, if we fix an order $(g_1, \dots, g_n)$ on the elements of $G$, then we can also identify the filter $\theta$ with the tuple $(\theta(g_1), \dots, \theta(g_n))$.
\end{remark}

\begin{example}
Let $G = \Z_3 = \{0,1,2\}$ be the cyclic group of order $3$ (with addition modulo $3$),
and let $S = \R$ concentrated in degree $0$.
Then
\[
\R[G] = \{ \theta : \Z_3 \to \R \} \cong \R^3.
\]

A filter $\theta \in \R[G]$ is determined by its values
\[
\theta(0)=a,\qquad \theta(1)=b,\qquad \theta(2)=c,
\]
and can be written as the formal linear combination
\[
\theta = a\,0 + b\,1 + c\,2.
\]
Similarly, let $\psi \in \R[G]$ be given by
\[
\psi(0)=x,\qquad \psi(1)=y,\qquad \psi(2)=z,
\quad\text{so that}\quad
\psi = x\,0 + y\,1 + z\,2.
\]
The convolution $\theta \conv \psi$ is the function $\Z_3 \to \R$ defined by
\[
(\theta \conv \psi)(g) = \sum_{h\in \Z_3} \theta(g-h)\psi(h),
\]
where subtraction is taken modulo $3$.
Explicitly,
\begin{align*}
(\theta \conv \psi)(0) &= a x + b z + c y,\\
(\theta \conv \psi)(1) &= a y + b x + c z,\\
(\theta \conv \psi)(2) &= a z + b y + c x.
\end{align*}
\end{example}

\begin{definition}[$S\lbrack G\rbrack$ regular representation]
The regular algebra representation $\rho$ of $S[G]$ on itself is defined as
\begin{align*}
\rho : S[G] &\to \End(S[G]), \qquad \rho_\theta(\psi) = \theta \conv \psi .\\
\theta &\mapsto \rho_\theta
\end{align*}
The algebra representation $\rho$ is completely determined by the induced group representation of $G$ on $S[G]$.
\end{definition}

\begin{definition}[$G$-circulent matrix]
Fix an order $(g_1,\dots,g_{n})$ of the elements of $G$. With respect to this basis, the map $\rho_\theta$ is represented by an $n \times n$ \emph{$G$-circulent matrix} $\Mat_\theta$ over $S$, whose entries are given by
\[
(\Mat_\theta)_{ij} := \theta(g_i g_j^{-1}).
\]

So the vector representation of $\Mat_\theta \psi$ under this base is given by
\[
(\Mat_\theta \psi)_i = \sum_{j = 1}^{n} (\Mat_\theta)_{ij} \psi(g_j),
\]
which is just a rewriting of the convolution $(\theta \conv \psi) (g_i)$.
In the special case $G = \mathbb{Z}_n$, the matrix $\Mat_\theta$ reduces to the classical circulent matrix.
\end{definition}

\begin{remark}\label{remark: Mat_is_rep}
    Note that since $\Mat$ is the algebra representation $\rho$ under a choice of base, it follows immediately that for any filters $\theta, \psi \in S[G]$ we get
    \[
    \Mat_\theta \Mat_\psi = \Mat_{\theta \conv \psi}\ \ \text{and}\ \ \Mat_\theta + \Mat_\psi = \Mat_{\theta + \psi}.
    \]
    Also, the set $\{\Mat_g\ |\ g\in G\}$ defines a basis for $G$-circulent matrices over the algebra $S$, where we identify $g\in G$ with the filter $g(h) = 1$ if $h=g$ otherwise $g(h)=0$.
\end{remark}

In the following proposition, we relate the invertibility of $G$-circulent matrices to the invertibility of filters as elements in the group algebra $S[G]$.

\begin{proposition}
    Let $\theta \in S[G]$. If $\Mat_\theta$ is invertible, then its inverse is also $G$-circulent.
    By Remark \ref{remark: Mat_is_rep}, $\Mat_\theta$ is invertible if and only if $\theta$ is invertible in $S[G]$.
\end{proposition}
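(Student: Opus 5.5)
The plan is to show that the inverse of $\Mat_\theta$ lies in the subalgebra $\mathcal{C} := \{\Mat_\psi \mid \psi\in S[G]\}$ of $n\times n$ matrices over $S$. By Remark \ref{remark: Mat_is_rep}, $\mathcal{C}$ is closed under sums and products and contains the identity $\Mat_e$, where $e$ is the identity element of $G$. Moreover, the map $\psi\mapsto\Mat_\psi$ is injective: the first column of $\Mat_\psi$ (taking $g_1=e$) is $(\psi(g_1),\dots,\psi(g_n))$.

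The most robust route avoids any field assumption on $S$ and uses the adjugate. Over the commutative ring $S$ we have
\[
\Mat_\theta \operatorname{adj}(\Mat_\theta) = \operatorname{adj}(\Mat_\theta)\Mat_\theta = \det(\Mat_\theta) I .
\]
Invertibility of $\Mat_\theta$ over $S$ means that $\det(\Mat_\theta)$ is a unit of $S$. By the Cayley--Hamilton theorem, which holds over any commutative ring, the characteristic polynomial $\chi(t)=t^n+c_{n-1}t^{n-1}+\dots+c_0$ of $\Mat_\theta$ satisfies $\chi(\Mat_\theta)=0$, with $c_0=(-1)^n\det(\Mat_\theta)$. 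Rearranging gives
\[
\Mat_\theta\cdot\bigl(-c_0^{-1}(\Mat_\theta^{n-1}+c_{n-1}\Mat_\theta^{n-2}+\dots+c_1 I)\bigr)=I,
\]
so the inverse is an $S$-linear combination of powers of $\Mat_\theta$. Each power $\Mat_\theta^k=\Mat_{\theta^{\conv k}}$ is $G$-circulent by Remark \ref{remark: Mat_is_rep}, and so are scalar multiples and sums. Hence $\Mat_\theta^{-1}=\Mat_\psi$ with
\[
\psi=-c_0^{-1}\bigl(\theta^{\conv(n-1)}+c_{n-1}\theta^{\conv(n-2)}+\dots+c_1 e\bigr).
\]

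For the equivalence, suppose first that $\Mat_\theta$ is invertible. Take $\psi$ as above; then $\Mat_{\theta\conv\psi}=\Mat_\theta\Mat_\psi=I=\Mat_e$, and injectivity of $\Mat$ gives $\theta\conv\psi=e$. Similarly $\psi\conv\theta=e$, so $\theta$ is a unit of $S[G]$. Conversely, if $\theta\conv\psi=\psi\conv\theta=e$, then applying the homomorphism $\Mat$ yields $\Mat_\theta\Mat_\psi=\Mat_\psi\Mat_\theta=I$.

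The only delicate point is the meaning of "invertible" when $S$ is a graded ring rather than a field. I would state that it means invertible in $\Mat_n(S)$, equivalently that $\det$ is a unit. With this convention the Cayley--Hamilton argument goes through uniformly, so I expect no real obstacle beyond checking that injectivity of $\Mat$ uses $g_1=e$ (or reading off the column indexed by $e$).
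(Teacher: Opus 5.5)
Your proof is correct and takes the same route as the paper: Cayley--Hamilton writes $\Mat_\theta^{-1}$ as an $S$-linear combination of powers of $\Mat_\theta$, and this is $G$-circulent because $\psi\mapsto\Mat_\psi$ is an algebra homomorphism (you divide by the unit $c_0=(-1)^n\det(\Mat_\theta)$, while the paper phrases it as applying Cayley--Hamilton to $\Mat_\theta^{-1}$ itself). Your injectivity argument, reading off the column indexed by $e$, also gives an explicit proof of the equivalence with invertibility in $S[G]$, which the paper only attributes to Remark~\ref{remark: Mat_is_rep}.
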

\begin{proof}
   Since $S$ is commutative, we can apply Cayley-Hamilton to the inverse $\Mat_\theta^{-1}$ to write it as a polynomial in $\Mat_\theta$.
    Since $\Mat_\theta = \sum_{g\in G}\theta(g)\Mat_g$, then $\Mat_\theta^{-1}$ is also a linear combination of the matrices $\Mat_g$'s with coefficients in $S$. Therefore $\Mat_\theta^{-1}$ is also $G$-circulent.
\end{proof}

\subsection{Kronecker and Hadamard products}
Next we define the Kronecker product and the Hadamard product between filters and state propositions about their interactions with convolution. 
These two operations will give us two ways to define polynomial activation functions.

\begin{proposition}[Definition of Kronecker Product]
    There is a canonical isomorphism $\iota$ between $S[G] \otimes S[H]$ and $S[G\times H]$ given by
    \begin{align*}
        \iota: S[G] \otimes S[H] &\to S[G \times H]\\
        \theta \otimes \psi &\mapsto \iota(\theta \otimes \psi)
    \end{align*}
    where for $(g, h)\in G\times H$, $\iota(\theta \otimes \psi)(g, h) := \theta(g)\psi(h)$. By abuse of notation, we denote the filter $\iota(\theta \otimes \psi)$ by $\theta \otimes \psi$ and call it the Kronecker product of $\theta$ and $\psi$. 
\end{proposition}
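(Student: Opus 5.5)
We have to show that the map $\iota$ is well defined on the tensor product $S[G]\otimes S[H]$ and that it is an isomorphism of free $S$-modules. Throughout, the tensor product is over $S$. The natural route is the universal property of the tensor product, followed by a comparison of bases.

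\emph{Well-definedness.} Consider the map
\[
\beta: S[G]\times S[H]\to S[G\times H], \qquad \beta(\theta,\psi)(g,h)=\theta(g)\psi(h).
\]
We check that $\beta$ is $S$-bilinear. Additivity in each slot follows from distributivity in $S$. The identity
\[
\beta(s\theta,\psi)=s\,\beta(\theta,\psi)=\beta(\theta,s\psi)
\]
uses the commutativity of $S$ together with the fact that scalar multiplication is componentwise. By the universal property, $\beta$ factors through a unique $S$-linear map $\iota: S[G]\otimes_S S[H]\to S[G\times H]$ with $\iota(\theta\otimes\psi)=\beta(\theta,\psi)$. This is exactly the formula in the statement, now defined on all of the tensor product, including non-pure tensors.

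\emph{Bijectivity.} The group elements form a basis: $\{g\}_{g\in G}$ is an $S$-basis of $S[G]$, $\{h\}_{h\in H}$ is one of $S[H]$, and $\{(g,h)\}$ is one of $S[G\times H]$. Here $g$ denotes the delta filter, as in Remark~\ref{remark: Mat_is_rep}. Since $S[G]$ and $S[H]$ are free of ranks $|G|$ and $|H|$, the tensor product is free of rank $|G||H|$ with basis $\{g\otimes h\}$. Evaluating on pairs gives
\[
\iota(g\otimes h)(g',h')=\delta_{g,g'}\,\delta_{h,h'},
\]
so $\iota(g\otimes h)$ is the delta filter at $(g,h)$. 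Hence $\iota$ maps a basis bijectively onto a basis, and is therefore an isomorphism of $S$-modules. Its inverse sends $\Theta\in S[G\times H]$ to $\sum_{(g,h)}\Theta(g,h)\, g\otimes h$.

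\emph{Canonicity and grading.} The map involves no choices of ordering or basis. Its defining formula is intrinsic, since the bases were used only to verify bijectivity. If one also wants $\iota$ to be compatible with the grading, note that $\deg(\theta(g)\psi(h))=\deg\theta(g)+\deg\psi(h)$ for homogeneous entries. Thus $\iota$ respects the induced grading on the tensor product.

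The main point requiring care is the freeness step: that $S[G]\otimes_S S[H]$ is free with basis the pure tensors of basis elements. This is the standard fact that a tensor product of free modules is free on the products of basis elements. Everything else is routine bookkeeping.
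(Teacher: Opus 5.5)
Your argument is correct. The paper states this proposition without proof and treats the isomorphism as standard; it serves as the definition of $\theta\otimes\psi$. Your universal-property-plus-bases verification therefore supplies the routine argument the paper leaves implicit.

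Two choices you made explicit are genuinely needed and are not spelled out in the paper. First, the tensor product has to be taken over $S$. Over $\K$ with $S=\K[x_1,\dots,x_n]$, the nonzero element $x_1 g\otimes h-g\otimes x_1 h$ is sent to $0$, so $\iota$ would not be injective. Second, commutativity of $S$ is what makes $\beta$ $S$-bilinear in both slots.

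What you have proved is an isomorphism of graded $S$-modules. Multiplicativity with respect to convolution, which upgrades $\iota$ to an isomorphism of $S$-algebras, is the content of the paper's separate Proposition~\ref{prop: kprod_conv_kprod_=_conv_kprod_conv}.
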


\begin{remark}
    For $r\geq 1$, and $\theta\in S[G]$ we use the notation 
\[
\theta^{\otimes r} := \underbrace{\theta \otimes \hdots \otimes \theta}_r ,
\]
which is a filter in $S[G^r]$ where $G^r$ is the direct product of $r$ copies of $G$.
Note that the Kronecker product respects the grading; i.e. if $\theta\in S_k[G]$ and $\psi \in S_l[H]$ then $\theta \otimes \psi \in S_{k+l}[G\times H]$ and $\theta^{\otimes r} \in S_{rk}[G^r]$.
\end{remark}

\begin{definition}[Hadamard Product]
Let $\theta, \psi \in S[G]$, their \emph{Hadamard product} is the filter denoted by $\theta \circbd \psi \in S[G]$ and is given by
    \[
    (\theta \circbd \psi)(g) := \theta(g)\psi(g).
    \]
    Moreover, we denote the $r$th Hadamard power of $\theta$ by $\sigma_r(\theta):= \underbrace{\theta \circbd \hdots \circbd \theta}_{r}$. The Hadamard product also respects the grading of $S$.
\end{definition}

To justify our terminology, note that for any two filters $\theta$ and $\psi$ we have the identities
$
\Mat_{\theta\otimes \psi} = \Mat_{\theta} \otimes \Mat_{\psi}
$
and
$
\Mat_{\theta\circbd \psi} = \Mat_{\theta} \circbd \Mat_{\psi}
$
where on the right hand side in both of them, we use the Kronecker product and the Hadamard product between matrices, respectively.

\begin{example}
Let $G=H=\Z_2=\{0,1\}$ and $S=\R$.
Fix the ordered bases $\{0,1\}$ of $\R[G]$ and
$\{(0,0),(0,1),(1,0),(1,1)\}$ of $\R[G\times H]$.

Let $\theta\in \R[G]$ and $\psi\in \R[H]$ be given by
\[
\theta(0)=a,\ \theta(1)=b,
\qquad
\psi(0)=x,\ \psi(1)=y.
\]
A direct computation shows that left convolution by $\theta$ and $\psi$
is represented by the matrices
\[
\Mat_\theta=
\begin{pmatrix}
a & b\\
b & a
\end{pmatrix},
\qquad
\Mat_\psi=
\begin{pmatrix}
x & y\\
y & x
\end{pmatrix}.
\]

The Kronecker product of these matrices is
\[
\Mat_\theta\otimes \Mat_\psi=
\begin{pmatrix}
ax & ay & bx & by\\
ay & ax & by & bx\\
bx & by & ax & ay\\
by & bx & ay & ax
\end{pmatrix}.
\]

On the other hand, the filter $\theta\otimes\psi\in \R[G\times H]$ satisfies
$(\theta\otimes\psi)(g,h)=\theta(g)\psi(h)$, and convolution by
$\theta\otimes\psi$ on $\R[G\times H]$ is represented, in the chosen basis,
by the same matrix.
\end{example}

The following proposition shows that the Kronecker product distributes over convolution.
\begin{proposition}\label{prop: kprod_conv_kprod_=_conv_kprod_conv}
    Let $\theta_1, \theta_2 \in S[G]$  and $\psi_1, \psi_2 \in S[H]$. Then $(\theta_1 \kprod \psi_1) \conv (\theta_2 \kprod \psi_2) = (\theta_1 \conv \theta_2) \kprod (\psi_1 \conv \psi_2)$.
    By induction it holds that for $\theta_i, \psi_i \in S[G_i]$  for $1\leq i\leq r$
    \[
    (\theta_1 \kprod \hdots \kprod \theta_r) \conv (\psi_1 \kprod \hdots \kprod \psi_r) = (\theta_1 \conv \psi_1)\kprod \hdots \kprod (\theta_r \conv \psi_r).
    \]
\end{proposition}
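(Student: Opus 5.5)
The plan is to prove the two-factor identity by evaluating both sides pointwise on an arbitrary element $(g,h)\in G\times H$, using only the definitions of convolution in $S[G\times H]$ and of the Kronecker product via $\iota$. Then we obtain the $r$-fold statement by induction on $r$, using associativity of the identification $S[G_1]\otimes\cdots\otimes S[G_r]\cong S[G_1\times\cdots\times G_r]$.

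For the two-factor case, recall that in $G\times H$ the inverse of $(g',h')$ is $(g'^{-1},h'^{-1})$ and the product is componentwise. Hence
\[
\bigl((\theta_1\kprod\psi_1)\conv(\theta_2\kprod\psi_2)\bigr)(g,h)=\sum_{(g',h')\in G\times H}\theta_1(gg'^{-1})\psi_1(hh'^{-1})\,\theta_2(g')\psi_2(h').
\]
Because $S$ is commutative, each summand can be rearranged as $\bigl(\theta_1(gg'^{-1})\theta_2(g')\bigr)\bigl(\psi_1(hh'^{-1})\psi_2(h')\bigr)$. Since $G\times H$ is finite, the double sum over $(g',h')$ factors as a product of a sum over $g'\in G$ and a sum over $h'\in H$. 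The result is $(\theta_1\conv\theta_2)(g)\,(\psi_1\conv\psi_2)(h)$, which by definition equals $\bigl((\theta_1\conv\theta_2)\kprod(\psi_1\conv\psi_2)\bigr)(g,h)$. As $(g,h)$ was arbitrary, the two filters coincide.

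For the general case we induct on $r$; the case $r=1$ is trivial. We write $\theta_1\kprod\cdots\kprod\theta_r=(\theta_1\kprod\cdots\kprod\theta_{r-1})\kprod\theta_r$ in $S[(G_1\times\cdots\times G_{r-1})\times G_r]$. This step requires checking that the iterated Kronecker product is associative under the canonical identification $(G_1\times\cdots\times G_{r-1})\times G_r\cong G_1\times\cdots\times G_r$. That check is immediate, since both sides evaluate to $\theta_1(g_1)\cdots\theta_r(g_r)$, and the identification is a group isomorphism, so it respects convolution. We then apply the two-factor case with $G=G_1\times\cdots\times G_{r-1}$ and $H=G_r$, followed by the induction hypothesis on the first factor.

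We expect no real obstacle here. The only points requiring care are the use of commutativity of $S$ to reorder the factors, and the associativity/identification bookkeeping in the induction step.
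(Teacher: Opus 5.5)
Your proposal is correct and follows the paper's proof: both evaluate at an arbitrary $(g,h)\in G\times H$, use commutativity of $S$ to regroup each summand, and split the double sum into a product of a sum over $G$ and a sum over $H$. Your explicit induction step spells out what the paper leaves to the phrase ``by induction'': you identify $(G_1\times\cdots\times G_{r-1})\times G_r$ with $G_1\times\cdots\times G_r$, apply the two-factor case with $H=G_r$, and then invoke the induction hypothesis on the first factor.
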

\begin{proof}
Use definition of convolution, we get
    \begin{align*}
        (\theta_1 \kprod \psi_1) \conv (\theta_2 \kprod \psi_2)(g, h) &= \sum_{g', h'} (\theta_1 \kprod \psi_1)(g g'^{-1}, h h'^{-1}) (\theta_2 \kprod \psi_2)(g', h')\\
        \text{(use definition of $\kprod$)} &= \sum_{g', h'} \theta_1(g g'^{-1})\psi_1( h h'^{-1})\theta_2(g')\psi_2(h')\\
        \text{(reorder the terms)} &= \sum_{g', h'} \theta_1(g g'^{-1}) \theta_2(g')\psi_1( h h'^{-1})\psi_2(h')\\
        \text{(split the sum into product of two sums)} &= \left(\sum_{g'} \theta_1(g g'^{-1}) \theta_2(g')\right)\left( \sum_{h'}\psi_1( h h'^{-1})\psi_2(h')\right)\\
        &= (\theta_1 \conv \theta_2)(g) (\psi_1 \conv \psi_2)(h) = (\theta_1 \conv \theta_2) \kprod (\psi_1 \conv \psi_2) (g, h)
    \end{align*}
\end{proof}

On the other hand, the Hadamard product does not distribute over convolution, and this is easy to check as follows.
Let $\theta_1, \theta_2, \psi_1, \psi_2 \in S[G]$ then
\begin{align*}
        (\theta_1 \circbd \theta_2)\conv(\psi_1 \circbd \psi_2)&(g) = \sum_{h\in G}\theta_1(gh^{-1})\theta_2(gh^{-1})\psi_1(h)\psi_2(h)\\
        &\neq \left(\sum_{h\in G}\theta_1(gh^{-1})\psi_1(h)\right) \left(\sum_{h\in G}\theta_2(gh^{-1})\psi_2(h)\right) = (\theta_1 \conv \psi_1)\circbd (\theta_2 \conv \psi_2)(g).
\end{align*}

The following proposition explains the relation between the Hadamard product, the Kronecker product, and convolution.
This is an important proposition, as it tells us how to move between the two descriptions of polynomial activation functions; see §\ref{sec: pgcnn}.

\begin{proposition}\label{prop: circbd_kprod_conv_relation}
    Let $\theta_1, \psi_1, \theta_2, \psi_2 \in S[G]$. Then
    \[
    (\theta_1 \conv \psi_1) \circbd (\theta_2 \conv \psi_2) = \left((\theta_1 \otimes \theta_2) \conv (\psi_1 \otimes \psi_2)\right) |_G.
    \]
    It follows that for $\theta, \psi \in S[G]$ and $r\geq 1$ we get $\sigma_r(\theta \conv \psi) = (\theta^{\otimes r} \conv \psi^{\otimes r})|_G$, where $G$ is identified with the diagonal subgroup in $G^r$.
\end{proposition}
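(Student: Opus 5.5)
The plan is to reduce the first identity to Proposition \ref{prop: kprod_conv_kprod_=_conv_kprod_conv}, and then to obtain the $r$-fold version by induction.

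\textbf{The two-factor identity.} By Proposition \ref{prop: kprod_conv_kprod_=_conv_kprod_conv}, with $H=G$, the filter $(\theta_1\otimes\theta_2)\conv(\psi_1\otimes\psi_2)\in S[G\times G]$ equals $(\theta_1\conv\psi_1)\otimes(\theta_2\conv\psi_2)$. By the definition of the Kronecker product, the value of this filter at $(g,h)$ is $(\theta_1\conv\psi_1)(g)\,(\theta_2\conv\psi_2)(h)$.

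Here the restriction $|_G$ means restriction of a function $G\times G\to S$ along the diagonal embedding $g\mapsto (g,g)$. Setting $h=g$ gives exactly $(\theta_1\conv\psi_1)(g)(\theta_2\conv\psi_2)(g)$, which is $\big((\theta_1\conv\psi_1)\circbd(\theta_2\conv\psi_2)\big)(g)$ by the definition of the Hadamard product. This proves the first identity.

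A direct check is also available and avoids the earlier proposition. Expand the diagonal value $\sum_{(g',h')}\theta_1(gg'^{-1})\theta_2(gh'^{-1})\psi_1(g')\psi_2(h')$ and factor the double sum. It is worth stressing that the sum runs over all of $G\times G$, not only over the diagonal. This is precisely why Hadamard does not distribute over convolution, even though Kronecker followed by diagonal restriction does reproduce it.

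\textbf{The $r$-fold version.} Apply the general form of Proposition \ref{prop: kprod_conv_kprod_=_conv_kprod_conv} with $G_i=G$, $\theta_i=\theta$ and $\psi_i=\psi$. This gives
\[
\theta^{\otimes r}\conv\psi^{\otimes r}=(\theta\conv\psi)^{\otimes r}.
\]
Restricting to the diagonal $\{(g,\dots,g)\}\cong G$ produces the value $(\theta\conv\psi)(g)^r$, and this is $\sigma_r(\theta\conv\psi)(g)$ by the definition of $\sigma_r$.

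\textbf{Expected obstacle.} The only real subtlety is making the meaning of $|_G$ precise. It must be read as restriction of functions along the diagonal embedding $G\hookrightarrow G^r$, so it is a map of $S$-modules and not an algebra map. It must not be read as convolution inside the diagonal subgroup. Once that reading is fixed, everything follows from the Kronecker distributivity proposition.
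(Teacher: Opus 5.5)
Your proposal is correct and follows the paper's proof. Like the paper, you apply Proposition \ref{prop: kprod_conv_kprod_=_conv_kprod_conv} to rewrite $(\theta_1\otimes\theta_2)\conv(\psi_1\otimes\psi_2)$ as $(\theta_1\conv\psi_1)\otimes(\theta_2\conv\psi_2)$, restrict to the diagonal, and get the $r$-fold case from the general form of that proposition; your point that $|_G$ means restriction of functions along the diagonal (not an algebra map) makes explicit what the paper leaves implicit.
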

\begin{proof}
    By Proposition \ref{prop: kprod_conv_kprod_=_conv_kprod_conv},
    \[
    (\theta_1 \otimes \theta_2) \conv (\psi_1 \otimes \psi_2) = (\theta_1 \conv \psi_1) \otimes (\theta_2 \conv \psi_2).
    \]
    Restricting the right hand side to the diagonal subgroup $G < G^2$ gives us $(\theta_1 \conv \psi_1) \circbd (\theta_2 \conv \psi_2)$, as required.
\end{proof}

\begin{remark}
    If $H$ is a subgroup of $G$ then $S[H]$ is a subalgebra of $S[G]$. 
    Therefore, for $\psi \in S[H]$ and $\theta\in S[G]$ we have $\theta \conv \psi, \psi \conv \theta \in S[G]$. Namely,
    \[
    (\theta \conv \psi)(g) = \sum_{h\in H} \theta(gh^{-1}) \psi(h)\ \text{and}\   (\psi \conv \theta)(g) = \sum_{h\in H} \psi(h)\theta(h^{-1} g).
    \]
    In other words, any filter $\psi \in S[H]$ can be extended to $G$ by setting $\psi(g) = 0$ for all $g\notin H$.
    In  case we need to distinguish between $\psi \in S[H]$ and its extension $\psi \in S[G]$ we denote the extension by $\psi^G$ and its associated matrix by $\Mat_\psi^G$. 

\end{remark}

\subsection{General conditions and symbolic ranks}
To compute the dimension of the neuromanifold, which is the image of the parametrization map $\Phi$ (see §\ref{sec: pgcnn}), we will need to compute the rank of the Jacobian of the parametrization map $\Phi$ at a general set of parameters $\theta$.
In this subsection, we define what a general condition is, and we introduce the concept of symbolic matrix and symbolic ranks that will help us define general conditions.

\begin{definition}[generality]
    We say that a condition $\mathcal{P}$ is \emph{general} in $\K^m$ if it holds over a dense set in $\K^m$ equipped with the Zariski topology. 
\end{definition}

Another more intuitive way to understand generality, although not exactly equivalent, is to think of $\mathcal{P}$ as holding everywhere in  $\K^m$ except for a set of Lebesgue measure zero.

\begin{definition}
    A symbolic matrix $M$ is a matrix over a ring of polynomials $\K[y_1, \dots, y_m]$.
    We say $M$ is symbolic in terms of the indeterminates $(y_1, \dots, y_m)$.
    The symbolic rank of $M$ is decided by the vanishing of its minors in the ring $\K[y_1, \dots, y_m]$.
\end{definition}
For a symbolic matrix $M$ over $\K[y_1, \dots, y_m]$ and a point $p=(p_1, \dots, p_m) \in \K^m$, we construct the matrix $M(p)$ from $M$ by substituting every $y_i$ for $p_i$.
Then the condition on $p$ to satisfy $\rank(M) = \rank(M(p))$ is general in $\K^m$, because this condition fails if and only if $p$ is the zero of all minors of $M$ of size $\rank(M) \times \rank(M)$.
We illustrate this in the following example.

\begin{example}
Consider the symbolic matrix  over $\K[y_1,y_2]$
\[
M=
\begin{pmatrix}
y_1 & y_2 \\
y_2 & y_1
\end{pmatrix}
.
\]
The determinant of \(M\) is the polynomial
\[
\det M = y_1^2 - y_2^2.
\]
Hence, over the polynomial ring \(\K[y_1,y_2]\), the symbolic rank of \(M\) is \(2\)since the determinant is not the zero polynomial in $\K[y_1, y_2]$. Therefore, a point $p=(p_1, p_2) \in \K^2$ satisfies $\rank(M) = \rank(M(p_1, p_2))$ if and only if $p_1^2 - p_2^2 \neq 0$ which is a general condition in $\K^2$.
This example illustrates that the rank of a symbolic matrix is determined by polynomial conditions on the indeterminates and that rank deficiencies occur precisely on algebraic varieties defined by the vanishing of suitable minors.
\end{example}

\subsection{Invertibility of filters}
In this subsection, a general filter $\theta\in S[G]$ is shown to be invertible in the group algebra.
Similar results hold for its extension to $G^r$, its $r$th Hadamard product, and its $r$th Kronecker product. 
These results will be relevant in the proof of the main theorems. Since we also work in a general setting, we will be allowed to assume that the matrices we work with are full rank.

To comment on what is aimed for in these proofs, note that a general square matrix $M$ has, of course, a full rank because its determinant defines a nontrivial polynomial in terms of its entries. 
What we show in what follows is that for $G$-circulent matrices, which have weight-sharing constraints, the determinant still defines a nontrivial polynomial regardless of the group $G$.
\begin{definition}
    For a filter $\theta \in S[G]$, we define its determinant to be the determinant of its associated matrix, i.e. $\det(\theta) := \det(\Mat_\theta)$.
\end{definition}

In the following proposition, we relate the determinants of the three filters $\theta \in S[G]$ and $\theta^{\otimes}, \theta^{G^r} \in S[G^r]$.

\begin{proposition}\label{prop: det_formualae}
Let $\theta \in S[G]$, then we have 
\[
\det(\theta^{\otimes r})= \det(\theta)^{rn^{r-1}}\ \ \text{and}\ \ \det(\theta^{G^r}) = \det(\theta)^{n^{r-1}}
\]
where $n= |G|$. 
This implies that $\theta \in S[G]$ is invertible if and only if $\theta^{\otimes r}$ and $\theta^{G^r}$ in $S[G^r]$ are both invertible.
\end{proposition}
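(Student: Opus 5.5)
The plan is to reduce both formulas to determinant identities for matrices, using that $\Mat$ is the regular representation under a chosen basis. Reordering the basis of $S[G^r]$ conjugates the associated matrix by a permutation matrix and so leaves the determinant unchanged. We may therefore pick whichever ordering of $G^r$ is convenient in each case.

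\textbf{Kronecker power.} With the lexicographic ordering on $G^r$ induced from an ordering $(g_1,\dots,g_n)$ of $G$, the identity $\Mat_{\theta\otimes\psi}=\Mat_\theta\otimes\Mat_\psi$ noted after the definition of the Hadamard product gives, by induction, $\Mat_{\theta^{\otimes r}}=\Mat_\theta^{\otimes r}$. I would then use the standard identity, valid over any commutative ring,
\[
\det(A\otimes B)=\det(A)^m\det(B)^k \qquad \text{for } A\in\Mat_{k\times k},\ B\in\Mat_{m\times m}.
\]
It follows from $A\otimes B=(A\otimes I_m)(I_k\otimes B)$, where $I_k\otimes B$ is block diagonal and $A\otimes I_m$ is permutation-similar to a block diagonal matrix. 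Induct on $r$, writing $\Mat_\theta^{\otimes r}=\Mat_\theta^{\otimes(r-1)}\otimes \Mat_\theta$, where the first factor has size $n^{r-1}$. This gives
\[
\det(\theta^{\otimes r})=\det(\theta)^{(r-1)n^{r-2}\cdot n}\,\det(\theta)^{n^{r-1}}=\det(\theta)^{rn^{r-1}}.
\]

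\textbf{Extension.} Let $H\cong G$ denote the copy of $G$ inside $G^r$ used for the extension; the argument works for any subgroup of index $n^{r-1}$. The entries of $\Mat^{G^r}_\theta$ are $\theta^{G^r}(xy^{-1})$, and these vanish unless $xy^{-1}\in H$, that is, unless $Hx=Hy$. Order $G^r$ coset by coset: choose right coset representatives $x_1,\dots,x_{n^{r-1}}$ and list each coset as $(g_1x_k,\dots,g_nx_k)$. The matrix then becomes block diagonal with $n^{r-1}$ blocks. The $k$-th block has entries $\theta(g_ix_k(g_jx_k)^{-1})=\theta(g_ig_j^{-1})$, so every block equals $\Mat_\theta$. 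Hence $\det(\theta^{G^r})=\det(\theta)^{n^{r-1}}$.

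\textbf{Invertibility.} Since $S$ is commutative, a square matrix over $S$ is invertible if and only if its determinant is a unit, via the adjugate. Moreover, $d^k$ is a unit for some $k\ge1$ if and only if $d$ is a unit. Combining this with the earlier proposition that $\theta$ is invertible in $S[G]$ if and only if $\Mat_\theta$ is invertible gives the stated equivalence for $\theta^{\otimes r}$ and $\theta^{G^r}$.

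The main obstacle is bookkeeping rather than conceptual difficulty. One must check that the matrix identity $\Mat_{\theta\otimes\psi}=\Mat_\theta\otimes\Mat_\psi$ holds for the lexicographic ordering of $G\times H$. One must also verify the coset computation, in particular that right cosets, and not left cosets, are what make the matrix block diagonal, given the convention $(\Mat_\theta)_{ij}=\theta(g_ig_j^{-1})$.
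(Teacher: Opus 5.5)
Your proposal is correct and follows the same route as the paper. The first formula comes from the determinant identity for Kronecker products of matrices. The second comes from splitting $G^r$ into the $n^{r-1}$ cosets of the diagonal copy of $G$, so that $\Mat_\theta^{G^r}$ becomes block diagonal with $n^{r-1}$ copies of $\Mat_\theta$; your explicit right-coset ordering and unit-determinant argument for the invertibility clause just spell out details the paper states in representation-theoretic language or leaves implicit.
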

\begin{proof}
    The first equality follows directly from the property of the Kronecker product for matrices. 

    The second equality follows from a standard argument in representation theory. 
    Since we identify $G$ with the diagonal subgroup $\Delta < G^r$, we obtain that the restriction of the regular algebra representation of $S[G^r]$ to $S[G]$ has $|G^r/G| = n^{r-1}$ unique $S[G]$-invariant sub-representation $V_i \subset S[G^r]$ where $S[G^r] = \bigoplus_{i=1}^{n^{r-1}} V_i$ and  $S[G] \cong V_i$ for all $i$. Therefore, under a suitable choice of basis, we have $\Mat_\theta^{G^r}$ is a block diagonal matrix of $n^{r-1}$ blocks where each block is equal to $\Mat_\theta$.
    This proves our claim.
\end{proof}

In the following proposition, we state that invertibility of filters is a general condition in $S[G]$.

\begin{proposition}\label{prop:general conditions}
    Let $\theta \in S[G]$ then the following conditions are  general for all $r\geq 1$:\vspace{2mm}
    \begin{center}
        
    \begin{enumerate*}
        \item $\det(\theta) \neq 0$\hspace{4mm}
        \item $\det(\theta^{G^r}) \neq 0$\hspace{4mm} 
        \item $\det(\theta^{\otimes r}) \neq 0$\hspace{4mm}
        \item $\det(\sigma_r(\theta)) \neq 0$.
    \end{enumerate*}
    
    \end{center}
    \vspace{2mm}
\end{proposition}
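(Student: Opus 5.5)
The plan is to treat the entries $\theta(g_1),\dots,\theta(g_n)$ as indeterminates $y_1,\dots,y_n$. Each of the four determinants is then a polynomial in $\K[y_1,\dots,y_n]$. Since the complement of the zero set of a nonzero polynomial is Zariski dense in $\K^n$ (because $\K$ is infinite, having characteristic $0$), it suffices to show that each of these four polynomials is not the zero polynomial. Moreover, by Proposition \ref{prop: det_formualae}, $\det(\theta^{\otimes r})=\det(\theta)^{rn^{r-1}}$ and $\det(\theta^{G^r})=\det(\theta)^{n^{r-1}}$. Since $\K[y]$ is an integral domain, conditions 2 and 3 follow from condition 1. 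So the work reduces to conditions 1 and 4.

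For condition 1, it is enough to exhibit a single specialization $p\in\K^n$ with $\det(\Mat_\theta(p))\neq 0$. The natural choice is the identity element: take $\theta = e$, i.e. $\theta(e)=1$ and $\theta(g)=0$ for $g\neq e$. Then $(\Mat_\theta)_{ij}=\theta(g_ig_j^{-1})$ equals $1$ exactly when $i=j$, so $\Mat_\theta$ is the identity matrix and has determinant $1$. Hence $\det(\theta)$ is a nonzero polynomial and condition 1 is general. Equivalently, one can note that $\det(\theta)$ contains the monomial $y_e^n$ with coefficient $1$.

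For condition 4, the Hadamard power $\sigma_r(\theta)$ has entries $\theta(g)^r$, so $\det(\sigma_r(\theta))$ is the polynomial $\det(\theta)$ with each $y_i$ replaced by $y_i^r$. If this were identically zero, then $\det(\theta)$ would vanish on all points whose coordinates are $r$th powers. Instead of arguing via images, I would again specialize directly: take $\theta=e$ once more. Then $\sigma_r(e)=e$, since $1^r=1$ and $0^r=0$, so $\Mat_{\sigma_r(\theta)}$ is the identity and the determinant is nonzero.

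I do not expect a real obstacle here. The only step needing care is the passage from ``nonzero polynomial'' to ``Zariski dense set'' for the open set $\{\det\neq0\}$ in $\K^n$. This holds because a nonempty Zariski open subset of an irreducible affine space over an infinite field is dense. That is presumably what the paper's notion of generality is meant to capture.
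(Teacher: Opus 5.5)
Your proposal is correct and follows essentially the paper's route. The paper likewise reduces conditions 2 and 3 to condition 1 via Proposition \ref{prop: det_formualae}. It shows $\det(\theta)$ and $\det(\sigma_r(\theta))$ are nonzero polynomials by observing that each monomial $\theta(g_k)^n$ (resp.\ $\theta(g_k)^{rn}$) comes from exactly one permutation in the Leibniz expansion; for $g_k=e$ this is exactly your specialization $\theta=e$, which gives the identity matrix. Your remark that a nonempty Zariski open subset of $\K^n$ is dense because $\K$ is infinite makes explicit a step the paper leaves implicit.
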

\begin{proof}
    Assume $G = \{g_1, \hdots, g_{n}\}$ and $\theta=(\theta_{g_1}, \hdots, \theta_{g_{n}})$ where $g_1 = e$ is the group identity. 
    We have $(\Mat_\theta)_{i,j} = \theta(g_i g_j^{-1})$. For fixed $g_i$ (row) and  $g_k$ there is a unique $g_j$ such that $g_ig_j^{-1} = g_k$, namely $g_j=g_k^{-1} g_i$.
    Similarly for fixed  $g_j$ (column) and  $g_k$ there is a unique $g_i$ such that $g_ig_j^{-i} = g_k$.
    This means the element $\theta(g_k)$ occurs once and only once in each column and in each row in the matrix $\Mat_\theta$.
    Therefore by elementary rules of the determinant, the term $\theta(g_k)^n$ for all $g_k \in G$ occurs once and only once in the determinant of $\Mat_\theta$. 
    This implies the condition $\det(\Mat_\theta) = \det(\theta) \neq 0$ in $S[G]$ is general.
    By Proposition \ref{prop: det_formualae}, the conditions 2 and 3 are general.

    The same argument shows that $\det(\sigma_r(\theta))$  is not the trivial polynomial because it includes the terms $\theta(g_k)^{rn}$ for all $g_k \in G$.   
\end{proof}

\begin{corollary}\label{cor: general_filter_not_zero_divisors}
For $S = \K$ a field of characteristic zero,
Propositions \ref{prop: det_formualae} and \ref{prop:general conditions} imply that generally $\theta, \sigma_r(\theta) \in \K[G]$ and $\theta^{\otimes r}, \theta^{G^r} \in \K[G^r]$ are invertible, for all $r\geq 1$. In particular, they are generally not zero divisors in their algebras. 
\end{corollary}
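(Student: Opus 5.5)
The corollary follows by combining the two propositions with the fact that, over a field, a nonvanishing determinant means invertibility in $\K[G]$. I will identify $\K[G]\cong\K^n$ via $\theta\mapsto(\theta(g_1),\dots,\theta(g_n))$ and treat the four quantities $\det(\theta)$, $\det(\theta^{G^r})$, $\det(\theta^{\otimes r})$ and $\det(\sigma_r(\theta))$ as polynomials in these $n$ coordinates.

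\textbf{Step 1: each nonvanishing locus is dense.} By Proposition \ref{prop:general conditions}, none of these polynomials is the zero polynomial. Over a field of characteristic zero, which is infinite, a nonzero polynomial defines a nonempty Zariski open subset of $\K^n$. Since $\K^n$ is irreducible, this subset is dense. A finite intersection of such opens is again a nonempty dense open, so all four conditions hold simultaneously for general $\theta$, for any fixed $r$. For all $r\geq1$ at once I would just note that the statement is applied for each fixed $r$. Alternatively, by Proposition \ref{prop: det_formualae}, conditions 2 and 3 are implied by condition 1, so only countably many conditions coming from $\sigma_r$ remain.

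\textbf{Step 2: nonzero determinant gives an inverse inside the algebra.} If $\det(\theta)\neq0$ in $\K$, then $\Mat_\theta$ is invertible over $\K$. By the proposition preceding Remark \ref{remark: Mat_is_rep}, with Cayley--Hamilton, $\Mat_\theta^{-1}$ is $G$-circulant, say $\Mat_\theta^{-1}=\Mat_\psi$. Then $\Mat_{\theta\conv\psi}=\Mat_e$ by Remark \ref{remark: Mat_is_rep}. Since $\rho$ is faithful (evaluate at $e$), $\theta\conv\psi=e$, and likewise $\psi\conv\theta=e$. The same argument in $\K[G^r]$ handles $\theta^{\otimes r}$ and $\theta^{G^r}$, and the argument in $\K[G]$ handles $\sigma_r(\theta)$.

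\textbf{Step 3: invertible elements are not zero divisors.} If $\theta\conv\phi=0$, then $\phi=\theta^{-1}\conv\theta\conv\phi=0$, and symmetrically on the other side.

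The main subtlety is Step 1, not Step 2. One must make sure that ``general'' is read as Zariski density in the parameter space $\K^n$ of $\theta$, not in the larger space $\K[G^r]$ where $\theta^{\otimes r}$ lives. This works because the determinants are pulled back as polynomials in $\theta$, and Proposition \ref{prop:general conditions} has already shown they are nonzero as such.
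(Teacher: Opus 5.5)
Your proposal is correct and follows the route the paper intends. The determinants are nonvanishing as polynomials in $\theta$ (Proposition~\ref{prop:general conditions}, with Proposition~\ref{prop: det_formualae} for $\theta^{\otimes r}$ and $\theta^{G^r}$); Cayley--Hamilton makes the inverse $G$-circulant, hence an inverse in the algebra, hence not a zero divisor. You simply spell out the steps the paper leaves implicit.

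The per-$r$ reading of ``for all $r\geq 1$'' is the one the paper actually uses. Your alternative remark about ``only countably many conditions'' would not by itself give a dense set over a countable field such as $\mathbb{Q}$, but you do not need it.
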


\section{Polynomial group convolutional neural networks}\label{sec: pgcnn}

In this section, polynomial group convolutional networks and their neuromanifolds are defined, which are the main object of study here. 
Recall that the main goal of this article is to compute the dimension of these neuromanifolds.

\subsection{Definition of polynomial group convolutional neural networks}
\begin{definition}[GCNN and PGCNN]\label{def: GCNN}
    A group convolutional neural network (GCNN) with the filters $\theta:=(\theta_1, \dots, \theta_{L}) \in (\K[G])^L$ and an activation function $\sigma: \K \to \K$ is the map
    \begin{align*}
       \Phi_\theta: \K[G] &\to \K[G] \\
        x &\mapsto \sigma(\hdots(\sigma(\sigma(x \conv \theta_1) \conv \theta_2)\conv \hdots \conv \theta_{L-1}) \conv \theta_{L},
    \end{align*}
    where $\sigma$ acts component wise.
    When $\sigma = \sigma_r$, where $\sigma_r$ denotes the $r$th Hadamard product,  we call $\Phi_\theta$ a polynomial group convolutional neural network (PGCNN). 
    We call $x$ the input filter and $\theta_l$ the $l$th weight filter for $1\leq l \leq L$.
\end{definition}

The GCNNs are $G$-equivariant, where the $G$-action is the one given by the regular representation in Section §\ref{sec: preliminaries}, and equivariance follows directly from the associativity of convolution. Before discussing PGCNN further, we present a few remarks to connect our definition with existing work in the literature.

\begin{remark}
Convolutional neural networks, regardless of their name, are in the literature defined using the operation \emph{cross correlation} $\ccr$ instead of convolution $\conv$. Nevertheless, these two operations are intrinsically connected and interchangeable via the relation $\theta \ccr \psi := \theta \conv \overline{\psi}$, where the filter $\overline{\psi}(g) := \psi(g^{-1})$ is called the involution of $\psi$.
Therefore these two conventions are equivalent in the sense that every neural network defined via convolution can be defined via cross correlation by involuting all its filters. 
In this article, we adopt the convolutional convention for two reasons. Firstly, under this convention, the $G$-action translates to just one extra left convolution, and the equivariance property translates into associativity of the convolution operation. Secondly, by associativity of convolutions, we can study the interaction between the weight filters $\theta$ away from the input filter $x$.
\end{remark}

\begin{remark}
    In machine learning literature, the input filter $x$ is called the input signal, and the weight filters correspond to convolutional layers. 
    Moreover, the signal $x$ is considered over a domain $\Omega$ equipped with a $G$ action. In other words, $x: \Omega \to \K$ where for $t\in \Omega$ and $g\in G$ we have $(g\cdot x)(t) := x(g^{-1}\cdot t)$. 
    In this context, what we define above is a special case of GCNN where the domain $\Omega = G$.
    This simplification is convenient because it keeps the group algebra structure at the first layer.
\end{remark}

\begin{remark}
Notice that when $G=\Z_n$ is the $n$th cyclic group, the GCNN does \emph{not} specialize to CNN considered, for example, in \cite{kohn2022geometrylinearconvolutionalnetworks} and \cite{shahverdi2024geometryoptimizationpolynomialconvolutional} because these CNNs are  \emph{not equivariant} with respect to shifts.
The main reason is that in this CNN architecture, the domain is considered to be a sequence rather than a circle.
To see this, let $C_k$ denotes the domain that is a sequence of length $k$. 
Moreover, let $x$ be a signal on $C_n$ and $\theta$ be a signal (or filter) over $C_k$ where $k< n$.
Then their cross correlation with a step of length 1 will be the signal $x \ccr_{CNN} \theta$ over $C_{n-k+1}$ given by
\[
(x \ccr_{CNN} \theta)(l) := \sum_{j=0}^{k-1} x(l+j) \theta(j) , \, \text{where} \, l < n-k+1.
\]
Assume that $s_d$ denotes the shift to the right with $d$ steps; then 
$(s_d \cdot x) (l) := \theta(l-d \mod n)$, note importantly that the order of the modulo operation depends on the size of the signal $x$.
Given this, we have
\[
s_d \cdot(x \ccr_{CNN} \theta)(l) = (x \ccr_{CNN} \theta)(l-d \mod (n-k+1)) = \sum_{j=0}^{k-1}  x( [l-d \mod (n-k+1)] + j) \theta(j) ,
\]
and
\[
(s_d \cdot x \ccr_{CNN}  \theta)(l) = \sum_{j=0}^{k-1} s_d \cdot x(j + l) \theta(j)  = \sum_{j=0}^{k-1} x([l - d \mod n] + j) \theta(j) .
\]

This implies $s_d \cdot(x \ccr_{CNN} \theta) \neq s_d \cdot x \ccr_{CNN}  \theta$, unless $k=1$ which means the convolution filter contains only one element.
Hence, standard discrete CNNs with finite filters are not equivariant with respect to cyclic shifts, except in the trivial case $k=1$.
\end{remark}

\subsection{Neuromanifolds of PGCNNs}
From Definition \ref{def: GCNN}, we  see that for any tuple of $L$ filters $\theta:=(\theta_1, \dots, \theta_{L}) \in (\K[G])^L$ and  activation function $\sigma_r$, the  PGCNN $\Phi_\theta \in \Sym_\K(x, r^{L-1})[G]$ defines $n$ homogeneous polynomials in $n$ variables $x:=(x_1, \dots, x_{n})$ with degree $r^{L-1}$. Moreover, each coefficient in the homogeneous polynomial $\Phi_\theta(g)$ is itself a multihomogeneous polynomial with the multidegree $\mathbf{r} := (r^{L-1}, r^{L-2}, \dots, 1)$ in the parameters of the weight filters $\theta = (\theta_1, \dots, \theta_{L})$. Thus, these coefficients belong to the space $\Sym_\K(\theta, \mathbf{r})$. 

\begin{definition}[Neuromanifold]
    The \emph{neuromanifold} $\mathcal{M}_\Phi$ of the architecture of PGCNN of depth $L$ and activation degree $r$ is the image of the parameterization map $\Phi: \theta \mapsto \Phi_\theta$ in $\Sym_\K(x, r^{L-1})$; namely
    \[
    \mathcal{M}_\Phi = \{\Phi_\theta\ |\ \theta \in (\K[G])^L\} \subset \Sym_\K(x, r^{L-1})[G].
    \]
    We define the associated \emph{neurovariety} $\mathcal{V}_\Phi$ to be the Zariski closure of $\mathcal{M}_\Phi$ in $\Sym_\K(x, r^{L-1})[G]$.
\end{definition}

Note that the projection map $\Phi_\theta \mapsto \Phi_\theta(e
)$ over the identity element is injective due to equivariance. Therefore we can consider the ambient space of the neuromanifold $\mathcal{M}_\Phi$ to be $\Sym_\K(x, r^{L-1})$ rather than $\Sym_\K(x, r^{L-1})[G]$.

As we have seen until now, the activation function is represented using the Hadamard product. 
Another equivalent way is to replace the Hadamard product with the Kronecker product using Proposition \ref{prop: circbd_kprod_conv_relation}.
So in terms of the Kronecker product, the filter $\Phi_\theta$ can be written as
\[
\Phi_\theta = (x^{\otimes r^{L-1}} \conv \varphi_\theta)|_G,
\]
where $\varphi_\theta = \theta_1^{\otimes r^{L-1}} \conv \theta_2^{\otimes r^{L-2}} \conv \dots \conv \theta_{L} \in \K[G^{r^{L-1}}]$.  We define the parametrization map 
\begin{align*}
    \varphi: \K[G]^L &\to \K[G^{r^{L-1}}]\\
    \theta \mapsto \varphi_\theta
\end{align*} 
and we denote its image by $\mathcal{M}_\varphi := \{\varphi_\theta\ |\ \theta \in \K[G]^L\} \subset \K[G^{r^{L-1}}]$.

Both of the maps $\Phi$ and $\varphi$ are defined by multihomogeneous polynomials in the weight parameters $\theta=(\theta_1, \dots, \theta_L)$. Therefore, when $\K=\R$ or $\C$ the Tarski-Seidenberg Theorem and Chevalley Theorem, respectively imply that both $\mathcal{M}_\Phi$ and $\mathcal{M}_\varphi$ are semi-algebraic sets in their respective ambient spaces. 
Moreover, they induce rational maps $\tilde \Phi$ and $\tilde\varphi$ on the projectivizations of their domains and codomains.
The relation between the rational maps $\tilde \varphi$ and $\tilde \Phi$ is shown in a commutative diagram, see Figure \ref{fig: varphi_Phi_Lambda}, where $\Lambda$ is a projective linear map (it consists of convolution with $x^{\otimes r^{L-1}}$ and the restriction to $e\in G$).

\begin{remark}[Linear GCNN]
    In the linear case, where $r=1$, we have 
    \[
    \Phi_\theta = x\conv \varphi_\theta = x \conv \theta_1 \conv \dots \conv \theta_L.
    \]
    Note then that the coefficients of the polynomial $\Phi_\theta(e) \in \Sym_\K(x, 1)$ determine completely the filter $\varphi_\theta \in \K[G]$, and vice versa. 
    Moreover, it is clear that $\varphi$ is surjective.
    Therefore in the linear case, we have $\mathcal{M}_\Phi \cong \mathcal{M}_\varphi = \K[G]$.
\end{remark}

\begin{figure}
 \centering
\begin{tikzcd}
  \PP(\K[G])^L
  \arrow[to=1-3, bend left=25, "\tilde\Phi"]
    \arrow[r, "\tilde\varphi"]
  &
  \PP\mathcal{M}_\varphi \subset \PP(\K[G^{r^{L-1}}])
    \arrow[r, "\Lambda "]
  &
  \PP\mathcal{M}_\Phi \subset \PP(\Sym_\K(x, r^{L-1})) \\
\end{tikzcd}
\caption{A commutative diagram that describes the relation between the projectivized parametrization maps $\tilde \Phi$ and $\tilde \varphi$ induced by the PGCNN architecture with $L$ layers and activation degree $r$ over the finite group $G$.}
\label{fig: varphi_Phi_Lambda}
\end{figure}

\section{Main results on neuromanifolds of PGCNN}\label{sec: main_results}

In this section, we state  our main results about the neuromanifolds of PGCNN: we compute the dimensions $\mathcal{M}_\varphi$ and $\mathcal{M}_\Phi$, in addition to give a description of the general fibers of $\varphi$ and $\Phi$.

\subsection{Dimension of $\mathcal{M}_{\varphi}$ and $\mathcal{M}_{\Phi}$}
We describe here broadly our technique to compute the dimension of the neuromanifold $\mathcal{M}_\Phi$ and the dimension of $\mathcal{M}_\varphi$.
\begin{enumerate}
\item We study the rank of the Jacobians of the parametrization maps $\Phi$ and $\varphi$ at general filters.
Note that for nonzero scalars $\lambda_l \in \K-\{0\}$, where $1\leq l \leq L-1$, and any tuple of filters $\theta =(\theta_1, \dots, \theta_L)\in \K[G]^L$, we have 
\[
\varphi_\theta = \varphi_\psi\ \text{and}\ \Phi_\theta = \Phi_\psi, \numberthis \label{eq: scaling_fibers}
\]
where $\psi = (\lambda_1 \theta_1, \lambda_1^{-r }\lambda_2 \theta_2, \dots, \lambda_{L-2}^{-r} \lambda_{L-1}\theta_{L-1}, \lambda_{L-1}^{-r}\theta_L)$. 
Therefore the rank of the Jacobians $J_\theta \varphi$ and $J_\theta \Phi$ is bounded up by $nL - L + 1 = L(n-1)+1$.

\item The vectors that lie in the kernels of $J_\theta \varphi$ and $J_\theta \Phi$ due to Eq \eqref{eq: scaling_fibers} are all killed by the projectivization of the domain. Therefore if $J_\theta \tilde \varphi$ and $J_\theta \tilde \Phi$ are full rank, then  the affine Jacobians $J_\theta \varphi$ and $J_\theta \Phi$ have the  maximal rank $n(L-1)+1$.

\item To compute the rank of the Jacobians at general filters $\theta$, we consider the Jacobians as  symbolic matrices in terms of the parameters $\theta$ and compute their symbolic rank.

\end{enumerate}

\begin{theorem}\label{thm: tilde_varphi_is_regular}
    For activation degree $r\geq 2$, the rational map $\tilde{\varphi}$ is \emph{generally regular,} i.e. its Jacobian is full rank at a general point in its domain. This implies that $\dim(\mathcal{M}_\varphi)=L(n-1)+1$, where $n=|G|$ and $L$ is the number of layers.
\end{theorem}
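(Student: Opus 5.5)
The plan is to show that at a general $\theta=(\theta_1,\dots,\theta_L)$ the kernel of the affine Jacobian $J_\theta\varphi$ is exactly the $(L-1)$-dimensional space of infinitesimal rescalings coming from \eqref{eq: scaling_fibers}. By steps 1 and 2 of the strategy above, this gives $\rank J_\theta\varphi = nL-(L-1)=L(n-1)+1$ and full rank of $J\tilde\varphi$. The argument is an induction on $L$, with $r$ fixed. I will use Proposition \ref{prop: kprod_conv_kprod_=_conv_kprod_conv} for differentiating Kronecker powers, and Corollary \ref{cor: general_filter_not_zero_divisors} for cancelling general factors. Since only finitely many general conditions are imposed, their intersection is still a dense open set.

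\textbf{Setup.} Write $R=r^{L-1}$. Then
\[
\varphi_\theta=\theta_1^{\otimes R}\conv \varphi'_{\theta'},\qquad \theta'=(\theta_2,\dots,\theta_L),
\]
where $\varphi'_{\theta'}=\theta_2^{\otimes r^{L-2}}\conv\cdots\conv\theta_L$ is the $\varphi$-map of depth $L-1$. This filter lives in $\K[G^{r^{L-2}}]$, which is embedded in $\K[G^{R}]$ as the subgroup $H$ of tuples that are constant on consecutive blocks of size $r$; this is the iterated diagonal from Proposition \ref{prop: circbd_kprod_conv_relation}.

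\textbf{Reduction to the first layer.} Take a tangent vector $(\delta_1,\dots,\delta_L)$ in the kernel. By the Leibniz rule,
\[
0=\Big(\sum_{i=1}^{R}\theta_1\otimes\cdots\otimes\delta_1\otimes\cdots\otimes\theta_1\Big)\conv\varphi'_{\theta'}+\theta_1^{\otimes R}\conv D\varphi'_{\theta'}(\delta_2,\dots,\delta_L),
\]
where in the $i$-th summand $\delta_1$ sits in slot $i$.
\begin{enumerate*}
\item Multiply on the left by $(\theta_1^{\otimes R})^{-1}=(\theta_1^{-1})^{\otimes R}$, which exists generally. Set $\eta=\theta_1^{-1}\conv\delta_1$ and use Proposition \ref{prop: kprod_conv_kprod_=_conv_kprod_conv}. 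The first term becomes $\big(\sum_i e\otimes\cdots\otimes\eta\otimes\cdots\otimes e\big)\conv\varphi'_{\theta'}$.
\item Multiply on the right by $(\varphi'_{\theta'})^{-1}$, computed in the subalgebra $\K[H]$. For this I need $\varphi'_{\theta'}$ to be generally invertible in $\K[H]$. That holds because it is a product of Kronecker powers of general filters, each invertible by Corollary \ref{cor: general_filter_not_zero_divisors}.
\end{enumerate*}
The result is
\[
\sum_{i}e\otimes\cdots\otimes\eta\otimes\cdots\otimes e\;=\;-\,D\varphi'_{\theta'}(\delta')\conv(\varphi'_{\theta'})^{-1}\in\K[H].
\]

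\textbf{Support argument.} The left-hand side is supported on tuples with at most one non-identity coordinate. Since $r\ge 2$, every non-identity element of $H$ has at least $r\ge2$ non-identity coordinates. Comparing supports therefore forces $\eta(g)=0$ for $g\neq e$. So $\eta=c\,e$, i.e. $\delta_1=c\,\theta_1$, which is a rescaling direction. Substituting back gives
\[
D\varphi'_{\theta'}(\delta')=-cR\,\varphi'_{\theta'}.
\]
By Euler's formula for the multihomogeneous map $\varphi'$, the tangent vector $(-cR/r^{L-2})\,(\theta_2,0,\dots,0)$ also satisfies this equation. Subtracting it, $\delta'$ minus this correction lies in $\ker J_{\theta'}\varphi'$. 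By the induction hypothesis, that kernel consists of the $L-2$ rescalings of depth $L-1$.

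\textbf{Counting and base case.} Together with the one-parameter family coming from $c$, the kernel of $J_\theta\varphi$ has dimension at most $L-1$; it contains the $(L-1)$-dimensional scaling space, so the two coincide. The base case $L=1$ is trivial, since there $\varphi_\theta=\theta_1$ and $\varphi$ is the identity.

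\textbf{Main obstacle.} I expect the hardest part to be making the embedding $\K[G^{r^{L-2}}]\hookrightarrow\K[G^{R}]$ precise. One has to verify that the convolution structure is really the subalgebra structure, so that right-cancellation by $(\varphi'_{\theta'})^{-1}$ stays inside $\K[H]$. One also has to confirm that $H$ contains no element with exactly one non-identity coordinate. This is exactly where $r\ge2$ is used: for $r=1$ we have $H=G^R$ and the argument breaks down.
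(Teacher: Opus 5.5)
Your argument is correct. It proves the same intermediate statement the paper uses, namely Proposition~\ref{prop: Jvarphi_kernel_has-dim_L-1} (at a general point, $\ker J_\theta\varphi$ is exactly the rescaling space), but by a different and simpler route.

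The paper peels off the \emph{last} layer, $\varphi_\theta=\varphi_{\theta'}^{\otimes r}\conv\theta_L$, and cancels $\theta_L$ on the right. It then needs Lemma~\ref{lemma: jac_varphi_simplified_case} to force $\dot\theta_L\conv\theta_L^{-1}\in\K e$. That lemma evaluates at special tuples to eliminate the unknown $J_{\theta'}\varphi(\dot\theta')$, and argues that an $(n-1)\times(n-1)$ symbolic minor is nonzero by tracking a distinguished monomial.

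You peel off the \emph{first} layer instead. The perturbed factor is then a full Kronecker power, which cancels on the left by $(\theta_1^{-1})^{\otimes R}$, while everything else lies in $\K[H]$. The map $G^{r^{L-2}}\to G^{R}$ that repeats each coordinate $r$ times is an injective homomorphism. Hence $\K[H]$ is a unital subalgebra containing the needed inverses, and the step you flag as the main obstacle is routine.

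The conclusion $\delta_1\in\K\theta_1$ then follows from a one-line support comparison, which makes the role of $r\ge2$ transparent. Moreover, the only conditions used along your induction are $\det\theta_i\neq0$. So you actually get full rank at \emph{every} $\theta$ with all layers invertible. This is an explicit open set, and it complements the observation in the paper's conclusion.

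What the paper's arrangement buys is uniformity with the proof for $\Phi$ in Appendix~\ref{appendix: Phi_is_regular}. There one only has the symmetrized, diagonally restricted polynomial $\Lambda(\varphi_\theta)$. Left cancellation by $\theta_1^{\otimes R}$ survives there, as the substitution $x\mapsto x\conv\theta_1$. Right cancellation by an element of $\K[H]$ does not commute with restriction to the diagonal, however. So your support argument does not carry over verbatim to Proposition~\ref{prop: JPhi_kernel_has_dim_L-1}.
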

\begin{proof}

    For $\theta := (\theta_1,\hdots, \theta_n)$ be general in $\K^{nL}$.
    We have $T_{[\theta]} (\PP(\K^n))^L = \bigoplus_{i} \K^{n}/\K\theta_i$, and if $[\dot{\theta}]\in \ker J_{[\theta]} \tilde{\varphi}$ then $J_\theta \varphi(\dot{\theta}) \in \K \varphi(\theta)$.
    By Euler theorem for homogeneous functions, we get that $J_\theta \varphi(\theta_1, 0, \hdots, 0) = r^{L-1} \varphi(\theta)$, and therefore $\dot{\theta} + k(\theta_1, 0, \dots, 0) \in \ker J_\theta \varphi$, for some $k \in \K$, which by Proposition \ref{prop: Jvarphi_kernel_has-dim_L-1} is contained in $\bigoplus_i \K \theta_i$.
    Hence $[\dot{\theta}] = 0 \in T_{[\theta]} (\PP(\K^n))^L$ and $J_{[\theta]} \tilde{\varphi}$ is full rank.

\end{proof}

The proof of Theorem \ref{thm: tilde_varphi_is_regular} relies strongly on the next proposition.
\begin{proposition}\label{prop: Jvarphi_kernel_has-dim_L-1}
    Let $L$ be the number of layers and $r$ be the degree of activation.
    Let also $\theta = (\theta_1, \hdots, \theta_L)$ denote the tuple of filters that define $\varphi_\theta$.
    Then for a general $\theta$, the kernel of the Jacobian $J_\theta\varphi$ has a basis of $L-1$ elements given by 
    \[
    \left((\theta_1, -r\theta_2, 0, \hdots, 0), (0, \theta_2, -r\theta_3, 0, \hdots, 0), \hdots, (0, \hdots, 0, \theta_{L-1}, -r\theta_L)\right).
    \]
\end{proposition}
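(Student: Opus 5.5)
The plan is to prove the statement by induction on $L$. The engine is a recursive factorization of $\varphi_\theta$, combined with the general invertibility of filters from Corollary \ref{cor: general_filter_not_zero_divisors}.

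\textbf{Containment.} The $L-1$ listed vectors lie in $\ker J_\theta\varphi$: they are the derivatives at $\lambda=1$ of the scaling curves in \eqref{eq: scaling_fibers}, applied to consecutive layers $l,l+1$. They are linearly independent because, for general $\theta$, every $\theta_l\neq 0$ and each vector has a distinct leading nonzero block. So it suffices to show $\dim\ker J_\theta\varphi\le L-1$.

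\textbf{Base case and factorization.} For $L=1$ we have $\varphi_\theta=\theta_1$, so the kernel is $0$. Write $\varphi^{(L)}$ for the map with $L$ layers and $\theta'=(\theta_1,\dots,\theta_{L-1})$. By Proposition \ref{prop: kprod_conv_kprod_=_conv_kprod_conv},
\[
\theta_1^{\otimes r^{L-1}}\conv\cdots\conv\theta_{L-1}^{\otimes r}=\bigl(\theta_1^{\otimes r^{L-2}}\conv\cdots\conv\theta_{L-1}\bigr)^{\otimes r}.
\]
Setting $\Psi:=\varphi^{(L-1)}_{\theta'}\in\K[G^{r^{L-2}}]$, this gives $\varphi^{(L)}_\theta=\Psi^{\otimes r}\conv\theta_L$. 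Here $\theta_L$ is viewed in $\K[G^{r^{L-1}}]$ via the diagonal copy of $G$, i.e.\ as $\theta_L^{G^{r^{L-1}}}$ supported on $\Delta$.

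Differentiating along $(\dot\theta',\dot\theta_L)\in\ker J_\theta\varphi^{(L)}$, and writing $\dot\Psi=J_{\theta'}\varphi^{(L-1)}(\dot\theta')$, gives
\[
\sum_{i=1}^r \Psi\otimes\cdots\otimes\dot\Psi\otimes\cdots\otimes\Psi\;\conv\;\theta_L \;+\; \Psi^{\otimes r}\conv\dot\theta_L = 0 ,
\]
with $\dot\Psi$ in the $i$th slot. For general $\theta$, the element $\theta_L^{G^{r^{L-1}}}$ is invertible by Corollary \ref{cor: general_filter_not_zero_divisors}. So is $\Psi$, being a convolution of Kronecker powers of general invertible filters; invertibility is preserved under $\otimes$ by Proposition \ref{prop: det_formualae} and under products. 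Hence $\Psi^{\otimes r}$ is invertible with inverse $(\Psi^{-1})^{\otimes r}$.

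Multiply on the left by $(\Psi^{-1})^{\otimes r}$ and on the right by $\theta_L^{-1}$, using Proposition \ref{prop: kprod_conv_kprod_=_conv_kprod_conv} to move the inverse into each tensor slot. With $\eta:=\Psi^{-1}\conv\dot\Psi$, this yields
\[
\sum_{i=1}^r e\otimes\cdots\otimes \eta\otimes\cdots\otimes e \;=\; -\,\dot\theta_L\conv\theta_L^{-1}.
\]

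\textbf{Support comparison (main obstacle).} The right-hand side lies in $\K[\Delta]$, because $\K[G]$ embedded diagonally is a subalgebra. The left-hand side is supported on tuples $(h_1,\dots,h_r)\in(G^{r^{L-2}})^r$ having at most one coordinate different from $e$. For $r\ge2$, the only diagonal tuple of that form is $(e,\dots,e)$. Comparing coefficients at a tuple $(e,\dots,h,\dots,e)$ with $h\neq e$ forces $\eta(h)=0$, so $\eta=c\,e$ for some $c\in\K$. Consequently $\dot\theta_L=-rc\,\theta_L$ and $\dot\Psi=c\Psi$.

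The point to be careful about is that the diagonal embedding used for $\theta_L$ is exactly $\Delta$, so that $\K[\Delta]$ is a subalgebra and this comparison is legitimate. This is also precisely where $r\ge2$ is used.

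\textbf{Closing the induction.} From $\dot\Psi=c\Psi$, Euler's identity gives $J_{\theta'}\varphi^{(L-1)}\bigl((\theta_1,0,\dots,0)\bigr)=r^{L-2}\Psi$. Therefore
\[
\dot\theta'-\tfrac{c}{r^{L-2}}(\theta_1,0,\dots,0)\in\ker J_{\theta'}\varphi^{(L-1)},
\]
and this kernel has dimension $L-2$ by induction, since generality of $\theta$ implies generality of $\theta'$. The solution set is thus parametrized by that $(L-2)$-dimensional kernel together with the single scalar $c$, which determines $\dot\theta_L$. This gives $\dim\ker J_\theta\varphi^{(L)}\le L-1$.

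Combined with the containment, the listed vectors form a basis of $\ker J_\theta\varphi$. Only finitely many nonvanishing-determinant conditions are imposed along the way, so the result holds on a Zariski-dense set.
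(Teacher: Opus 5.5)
Your proof is correct, and it takes a genuinely different route at the decisive step.

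\textbf{What is shared.} Both you and the paper induct on $L$ using $\varphi_\theta=\varphi_{\theta'}^{\otimes r}\conv\theta_L$, derive the same kernel equation, and close the induction by removing a known kernel direction and invoking the $(L-1)$-layer statement. Your Euler-identity bookkeeping is equivalent to the paper's subtraction of $(0,\dots,0,-\tfrac{\lambda}{r}\theta_{L-1},\lambda\theta_L)$.

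\textbf{Where you differ.} The difference lies in how one shows $\dot\theta_L\in\K\theta_L$. The paper only multiplies on the right by $\theta_L^{-1}$, which reduces to the case $\theta_L=e$ (Lemma \ref{lemma: jac_varphi_simplified_case}). It then evaluates the equation at specially chosen tuples, eliminates $J$, and argues that a particular $(n-1)\times(n-1)$ symbolic minor does not vanish identically. You normalize the other side as well. Since $\Psi$ is generically invertible and Kronecker powers distribute over convolution (Proposition \ref{prop: kprod_conv_kprod_=_conv_kprod_conv}), left multiplication by $(\Psi^{-1})^{\otimes r}$ cancels $\Psi$ slot by slot. The equation becomes $\sum_i e\otimes\cdots\otimes\eta\otimes\cdots\otimes e\in\K[\Delta]$, which a support comparison settles immediately.

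\textbf{What your route buys.}
\begin{enumerate}
\item You avoid the symbolic-minor computation, which the paper only sketches.
\item You get an explicit generality locus: your argument works for every $\theta$ with all $\theta_l$ invertible, i.e.\ $\prod_l\det(\theta_l)\neq 0$.
\item You treat the case $\dot\theta_L=0$ uniformly. The paper handles it separately, and that case tacitly needs $[J\otimes\varphi_{\theta'}^{\otimes(r-1)}]=0\Rightarrow J=0$.
\item You make transparent exactly where $r\ge 2$ enters.
\end{enumerate}
You also correctly keep track of left versus right multiplication in the possibly noncommutative algebra.

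\textbf{What the paper's route buys.} Its coefficientwise approach parallels the proof for $\Phi$ (Lemma \ref{lemma: jac_Phi_simplified_case}). There, only the restriction to the diagonal is available, i.e.\ Hadamard powers with polynomial entries. There is no unrestricted Kronecker factor to cancel, so your slotwise trick does not transfer directly to that case.
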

\begin{proof}
    We do a proof by induction over the number of layers $L$. 
    The hardest step in this inductive proof is to show that for a perturbation $\dot \theta \in \ker(J_\theta \varphi)$ at a general point $\theta \in \K[G]^L$ the last layer filter $\theta_L$ and its corresponding perturbation filter $\dot \theta_L$ are related by a scaling, i.e. $\dot \theta_L = \lambda \theta_L$ for $\lambda \in \K\backslash\{0\}$. 
    For more details, see Appendix \ref{appendix: varph_is_regular}.
\end{proof}

The next theorem states the same result for the dimension of the neuromanifold $\mathcal{M}_\Phi$.

\begin{theorem}\label{thm: tilde_Phi_is_regular}
For activation degree $r\geq 2$, the rational map \(\tilde\Phi\) is \emph{generally regular}; that is, its Jacobian has full rank at a general point of its domain.
This implies  $\dim(\mathcal{M}_\Phi)=L(n-1) + 1$.
\end{theorem}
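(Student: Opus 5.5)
The plan is to reduce the statement to Theorem \ref{thm: tilde_varphi_is_regular} through the factorization $\tilde\Phi = \Lambda\circ\tilde\varphi$ of Figure \ref{fig: varphi_Phi_Lambda}. By the chain rule, $J_\theta\Phi = \Lambda\circ J_\theta\varphi$, where $\Lambda$ is the linear map $\psi\mapsto (x^{\otimes r^{L-1}}\conv\psi)(e)$ into $\Sym_\K(x,r^{L-1})$. So it is enough to show that $\Lambda$ is injective on the image of $J_\theta\varphi$ at a general $\theta$. Once that holds, $\ker J_\theta\Phi = \ker J_\theta\varphi$, which by Proposition \ref{prop: Jvarphi_kernel_has-dim_L-1} is the $(L-1)$-dimensional scaling kernel. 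The projectivization argument in the proof of Theorem \ref{thm: tilde_varphi_is_regular} (Euler's identity plus that kernel) then goes through verbatim, with $\Phi$ in place of $\varphi$. It gives full rank of $J_{[\theta]}\tilde\Phi$ and hence $\dim\mathcal{M}_\Phi = L(n-1)+1$.

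\textbf{Step 1: the kernel of $\Lambda$.} The coefficient of a monomial $x^\alpha$ in $\Lambda(\psi)$ is $\sum \psi(g_1^{-1},\dots,g_R^{-1})$, where $R=r^{L-1}$ and the sum runs over all tuples $(g_1,\dots,g_R)\in G^R$ with multiset of entries $\alpha$. Hence $\ker\Lambda$ consists exactly of those $\psi\in\K[G^R]$ whose symmetrization under the coordinate-permuting action of $S_R$ vanishes. Equivalently, $\Lambda$ is injective on the subspace of $S_R$-symmetric filters.

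\textbf{Step 2: the image of $J_\theta\varphi$ is symmetric.} Differentiating $\varphi_\theta=\theta_1^{\otimes R}\conv\theta_2^{\otimes R/r}\conv\dots\conv\theta_L$ shows that the image is spanned by $\theta_1^{\otimes R}\conv\cdots\conv D(\dot\theta_l)\conv\cdots$. Here $D(\dot\theta_l)=\sum_{\text{positions}}\theta_l\otimes\cdots\otimes\dot\theta_l\otimes\cdots\otimes\theta_l$ is symmetric in its $r^{L-l}$ factors, but the later factors $\theta_{l+1}^{\otimes\cdots}$ only carry the coarser product structure. So the image is not obviously $S_R$-symmetric, only invariant under a wreath-type subgroup $W\le S_R$. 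This is the expected main obstacle. I would first try to avoid it by symmetrizing: replace $\varphi$ by $\operatorname{Sym}\circ\varphi$, which has the same composition with $\Lambda$ because $\Lambda\circ\operatorname{Sym}=\Lambda$, and prove the kernel statement of Proposition \ref{prop: Jvarphi_kernel_has-dim_L-1} for $\operatorname{Sym}\circ\varphi$ directly. To do so I would rerun the induction on $L$ from Appendix \ref{appendix: varph_is_regular}. The key step is again showing $\dot\theta_L=\lambda\theta_L$. For that, I would use that a general $\theta_l^{\otimes k}$ is invertible (Corollary \ref{cor: general_filter_not_zero_divisors}), so one can convolve on the left by inverses of the first-layer factors to peel layers off. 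The only new input needed is that convolution by a general symmetric invertible filter $\theta^{\otimes m}$ commutes with the $S_m$ action.

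\textbf{Fallback.} If that fails, I would argue dimension-theoretically. Step 1 says $\Lambda$ restricted to symmetric tensors is the isomorphism $\operatorname{Sym}^R(\K[G])\cong\Sym_\K(x,R)$. Then I would show that the generic fiber of $\tilde\Phi$ is finite by recovering the filters layer by layer from the polynomial $\Phi_\theta(e)$. The last layer $\theta_L$ is a linear form applied to $\sigma_r(\cdot)$, and a general polynomial of the form $\sum_g\theta_L(g)\,h_g^r$ determines the $h_g$ up to finitely many choices by identifiability of Waring-type decompositions of small rank. One then inducts on $L$. Finite generic fibers give $\dim\mathcal{M}_\Phi=\dim\PP(\K[G])^L+1$, and by generic smoothness in characteristic $0$ this gives full rank of the Jacobian at a general point.
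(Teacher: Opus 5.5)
Your reduction is correct as far as it goes. You have $J_\theta\Phi=\Lambda\circ J_\theta\varphi$ and $\ker\Lambda=\ker\operatorname{Sym}$, and injectivity of $\Lambda$ on $\operatorname{im}J_\theta\varphi$ would give $\ker J_\theta\Phi=\ker J_\theta\varphi$. For $L=2$ this even closes the argument: $\varphi_\theta=\theta_1^{\otimes r}\conv\theta_2$ with $\theta_2$ diagonal, so all derivatives are $S_r$-symmetric.

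For $L\ge 3$, though, your proposed repair does not supply the missing step. Since $\Lambda$ is an isomorphism on symmetric tensors, $\ker J_\theta(\operatorname{Sym}\circ\varphi)=\ker J_\theta\Phi$. So rerunning the induction for $\operatorname{Sym}\circ\varphi$ is the same as proving Proposition~\ref{prop: JPhi_kernel_has_dim_L-1} from scratch. Its one hard point, that every kernel vector has $\dot\theta_L\in\K\theta_L$, is exactly what you leave unproved. The fact you invoke (convolution by a symmetric $\theta^{\otimes m}$ commutes with $S_m$) lets you cancel $\theta_1^{\otimes R}$ and the diagonal $\theta_L$. It does not handle the intermediate factors $\theta_l^{\otimes r^{L-l}}$ with $2\le l\le L-1$. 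Inside $\K[G^R]$ these are supported on block-diagonal subgroups and are stabilized only by a proper wreath-type subgroup of $S_R$, so they do not commute with $\operatorname{Sym}$ and cannot be peeled off.

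Nor does Lemma~\ref{lemma: jac_varphi_simplified_case} carry over. It treats the values at individual tuples $\tilde g\in G^R$ as separate equations, whereas after symmetrization only orbit sums, i.e.\ monomial coefficients, remain. This is why the paper proves Lemma~\ref{lemma: jac_Phi_simplified_case} separately. After normalizing $\theta_L=e$, it solves the coefficient equations indexed by $[r^{L-1}g_i]$ and then $[r_1g_1,(r^{L-1}-r_1)g_i]$ recursively, expressing the coefficients of $J=J_{\theta'}\Phi(\dot\theta')$ linearly in $\dot\theta_L$. It substitutes these into the equations for $[r^{L-2}g_1,(r-1)r^{L-2}g_i]$ and checks that the resulting minor omitting the column $h=e$ is symbolically nonzero. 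An elimination of this kind is the actual content of the theorem, and your sketch does not contain one.

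Your fallback has a gap of its own. The framework (generically finite fibers of $\tilde\Phi$ plus generic smoothness in characteristic $0$) is fine, but its key input is false as stated. For $r=2$ and $L=2$ the $h_g=(x\conv\theta_1)(g)$ are $n$ independent linear forms. A full-rank quadric in $n$ variables has a positive-dimensional family of decompositions into $n$ squares, so there is no Waring identifiability to appeal to. For $L\ge3$ the summands $h_g=\Phi_{\theta'}(g)$ are very special forms. Results about general summands therefore do not apply without a specialization argument, which is exactly the issue the paper raises in its introduction.

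What makes recovery work is the group structure. In Theorem~\ref{thm: general_fiber_of_Phi}, invertibility of $\psi_L$ in $\K[G]$ puts every $\Phi_{\psi'}(g)^r$ in $\langle\Phi_{\theta'}(h)^r : h\in G\rangle$. Proposition~\ref{prop: intersection_of_general_secant_is_trivial} then shows that the only $r$th powers in that span are the $n$ obvious ones, also for $r=2$. It is proved by degenerating to $\theta^o=(e,\dots,e)$, where the span is $\langle x_g^{rd}\rangle$, and using upper semicontinuity of fiber dimension and length. Without an ingredient of this kind the fallback does not close either.
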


\begin{proof}
The claim follows directly from Proposition~\ref{prop: JPhi_kernel_has_dim_L-1}, 
in the same manner that 
Proposition~\ref{prop: Jvarphi_kernel_has-dim_L-1}
implies the general regularity of \(\tilde\varphi\) in 
Theorem~\ref{thm: tilde_varphi_is_regular}.
\end{proof}

\begin{proposition}\label{prop: JPhi_kernel_has_dim_L-1}
    Let $L$ be the number of layers and $r$ the activation degree.
    Let $\theta = (\theta_1,\dots,\theta_L)$ denote the tuple of filters defining $\Phi_\theta$.
    Then for a general $\theta$, the kernel of the Jacobian $J_\theta\Phi$ has an $(L-1)$--dimensional basis given by
    \[
    \bigl(\,(\theta_1,-r\theta_2,0,\dots,0),\ (0,\theta_2,-r\theta_3,0,\dots,0),\ \dots,\ (0,\dots,0,\theta_{L-1},-r\theta_L)\,\bigr).
    \]
\end{proposition}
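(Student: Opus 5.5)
The plan is to reduce the statement to Proposition~\ref{prop: Jvarphi_kernel_has-dim_L-1} through the factorization $\Phi = \Lambda \circ \varphi$. Here $\Lambda(\phi) = (x^{\otimes r^{L-1}} \conv \phi)|_G$, or equivalently its value at $e$. By the chain rule, $J_\theta \Phi = \Lambda \circ J_\theta\varphi$. Hence $\ker J_\theta \varphi \subseteq \ker J_\theta\Phi$. The $L-1$ listed vectors lie in $\ker J_\theta\varphi$, and they are linearly independent for general $\theta$ because each $\theta_l \neq 0$ and the vectors have a staircase support pattern. So one inclusion is immediate. It remains to show that a perturbation $\dot\theta$ with $\Lambda(J_\theta\varphi(\dot\theta)) = 0$ already satisfies $J_\theta\varphi(\dot\theta)=0$.

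$\Lambda$ is far from injective on all of $\K[G^{r^{L-1}}]$. The polynomial $\Phi_\theta(e)$ only records the symmetrization of $\varphi_\theta$ over permutations of the $r^{L-1}$ tensor factors, and it also restricts to the diagonal. So I would identify a subspace $W \subset \K[G^{r^{L-1}}]$ that contains the image of $J_\theta\varphi$ and on which $\Lambda$ is injective.

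Write $N = r^{L-1}$. Then $(x^{\otimes N} \conv \phi)(e,\dots,e) = \sum_{h \in G^N} \phi(h) \, x(h_1^{-1}) \cdots x(h_N^{-1})$, which is a monomial expansion. Therefore $\Lambda$ is injective exactly on symmetric tensors, that is, on filters invariant under the $S_N$ action permuting the coordinates of $G^N$.

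Next, $\varphi_\theta = \theta_1^{\otimes r^{L-1}}\conv \cdots \conv \theta_L$. Each factor $\theta_l^{\otimes r^{L-l}}$ is embedded in $\K[G^N]$ as a Kronecker power spread over blocks. I would like to argue that $\varphi_\theta$, and each term of its derivative, is $S_N$-symmetric, or at least symmetric under the iterated wreath-product subgroup $S_r \wr \cdots \wr S_r$. The trouble is that full $S_N$-symmetry fails in general, because the deeper filters are diagonal extensions rather than tensor powers. I would instead use the formula in its recursive form, $\Phi_\theta = \sigma_r(\Phi'_{\theta}) \conv \theta_L$, with $\Phi'$ the network of depth $L-1$. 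Then I would run the induction directly on $\Phi$, mirroring the appendix proof of Proposition~\ref{prop: Jvarphi_kernel_has-dim_L-1}.

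Concretely, the derivative is
\[
\dot\Phi = \bigl(r\,\sigma_{r-1}(\Phi')\circbd \dot\Phi'\bigr)\conv\theta_L + \sigma_r(\Phi')\conv\dot\theta_L = 0 .
\]
Convolving on the right with $\theta_L^{-1}$, which exists for general $\theta$ by Corollary~\ref{cor: general_filter_not_zero_divisors}, gives
\[
r\,\sigma_{r-1}(\Phi')\circbd\dot\Phi' = -\sigma_r(\Phi')\conv\dot\theta_L\conv\theta_L^{-1}.
\]
The key step, and the main obstacle, is to show that $\psi := \dot\theta_L\conv\theta_L^{-1}$ must be a scalar multiple of $e$. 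The left side is a polynomial divisible, in each entry, by $\Phi'(g)^{r-1}$. The right side mixes the entries $\Phi'(gh^{-1})^r$ for different $g h^{-1}$. For general $\theta$ the polynomials $\Phi'(g)$, $g \in G$, are pairwise coprime, or at least non-associate irreducibles. I would check this by specializing to a convenient point, for instance $\theta_1$ close to $e$ so that $\Phi'(g)$ is approximately a power of $x(g)$. Then divisibility by $\Phi'(g)^{r-1}$ forces $\psi(h)=0$ for $h \ne e$; this is where $r\geq 2$ is used. So $\dot\theta_L=\lambda\theta_L$, and hence $\dot\Phi' = -\tfrac{\lambda}{r}\Phi'$.

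Subtracting $\lambda/r$ times the last listed kernel vector reduces to a kernel element of the depth-$(L-1)$ map, and the induction hypothesis finishes the proof. The base case $L=1$ is the linear map $\theta_1\mapsto x\conv\theta_1$, which is injective.
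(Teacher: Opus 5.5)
Your overall architecture matches the paper's proof: induction on $L$ through $\Phi_\theta=\sigma_r(\Phi_{\theta'})\conv\theta_L$, right-convolution by $\theta_L^{-1}$, showing $\psi=\dot\theta_L\conv\theta_L^{-1}=\lambda e$, subtracting $-\tfrac{\lambda}{r}$ times the last kernel vector, and applying the induction hypothesis. The opening detour through $\Lambda$ is harmless but unused, except that it gives the easy inclusion. The gap is in the key step. From
\[
r\,\Phi'(g)^{r-1}\dot\Phi'(g) \;=\; -\sum_{k\in G}\Phi'(k)^r\,\psi(k^{-1}g)
\]
you only learn that $\Phi'(g)^{r-1}$ divides $\sum_{k\neq g}c_k\Phi'(k)^r$ with $c_k=\psi(k^{-1}g)$. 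Pairwise coprimality, or non-associate irreducibility, of the $\Phi'(k)$ does not force such a combination to be trivial. For example, $x$, $x+y$, $y$ are pairwise non-associate irreducibles, yet $x$ divides $(x+y)^2-y^2=x(x+2y)$. What you actually need is that the linear map $(c_k)_{k\neq g}\mapsto \sum_{k\neq g}c_k\Phi'(k)^r \bmod \Phi'(g)^{r-1}$ is injective for general $\theta'$. That is exactly the content of the paper's Lemma~\ref{lemma: jac_Phi_simplified_case}. The paper proves it by eliminating the coefficients of $J$ multi-index by multi-index and exhibiting a symbolically nonvanishing $(n-1)\times(n-1)$ minor. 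Your argument asserts this step rather than proving it.

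Your specialization idea does repair the step if you apply it to the Jacobian itself rather than to coprimality. The entries of $J_\theta\Phi$ are polynomial in $\theta$, so $\{\theta : \operatorname{rank} J_\theta\Phi\ge nL-(L-1)\}$ is Zariski open. The listed vectors lie in $\ker J_\theta\Phi$ for every $\theta$, since they are tangent to the rescaling orbits \eqref{eq: scaling_fibers}, and they are independent when all $\theta_l\neq 0$. So it suffices to exhibit one point where the kernel has dimension exactly $L-1$. Take $\theta^o=(e,\dots,e)$. There $\theta_L=e$ needs no inversion, $\Phi'(g)=x_g^{r^{L-2}}$, and the right-hand side is $-\sum_k x_k^{r^{L-1}}\dot\theta_L(k^{-1}g)$. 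Divisibility by $x_g^{(r-1)r^{L-2}}$, which is where $r\ge 2$ enters, kills every pure power $x_k^{r^{L-1}}$ with $k\neq g$. Hence $\dot\theta_L=\lambda e$ and $\dot\Phi'=-\tfrac{\lambda}{r}\Phi'$, and your induction then runs verbatim at $(e,\dots,e)$. Lower semicontinuity of the kernel dimension transfers the conclusion to general $\theta$. This repaired version would be considerably shorter than the paper's symbolic-minor computation.
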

\begin{proof}
    See Appendix \ref{appendix: Phi_is_regular}.
\end{proof}

\begin{remark}\label{rmk:dim_computation}
    The package \texttt{PGCNNGeometry} allows us to perform sanity checks for the dimension results for small networks (groups with up to 13 elements and network depth up to 3). 
    However, the numerical computation becomes intractable for bigger groups and deeper network because the size of the Jacobian matrix, for both maps $\varphi$ and $\Phi$, is at least quadratic in terms of the size of the group and exponential in terms of the number of layers.
\end{remark}

\subsection{General fiber of $\varphi$}
The fiber of $\varphi$ at filters $\theta:= (\theta_1, \dots, \theta_L)$ is the set
\[
\varphi^{-1}(\varphi_\theta) := \{\psi \in \K[G]^L\ |\ \varphi_\psi = \varphi_\theta\}.
\]
In this subsection, the shape of the general fiber of $\varphi$ is described. However, before we give this main result, we need to borrow a lemma about symmetric tensor ranks from \cite{comon2008symmetrictensorssymmetrictensor}.

\begin{lemma}[Lemma 5.1 \cite{comon2008symmetrictensorssymmetrictensor}]\label{lemma: symmetric_rank_of_linearly_indep_n_vector_is_n}

Let $y_1, \dots, y_n \in \K^{d}$ be linearly independent. Then the symmetric tensor 
\[
A:=\sum_{i=1}^n y_i^{\otimes r}
\]
has a symmetric rank $\rank_S(A) = n$.    
\end{lemma}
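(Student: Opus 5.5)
The plan is to bound the symmetric rank from above and from below by $n$. Throughout I assume $r\geq 2$, which is the only case used in this paper. For $r=1$ the tensor $A=\sum_i y_i$ is a single vector, so its rank is at most $1$ and the statement cannot hold literally.

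\textbf{Upper bound.} This is immediate: the defining expression $A=\sum_{i=1}^n y_i^{\otimes r}$ is already a symmetric decomposition with $n$ terms. Hence $\rank_S(A)\leq n$.

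\textbf{Lower bound via a flattening.} Identify $(\K^d)^{\otimes r}$ with $\K^d\otimes (\K^d)^{\otimes (r-1)}$, and let $A_{(1)}$ be the resulting $d\times d^{\,r-1}$ matrix. Flattening is linear, and it sends a rank-one term $\lambda z^{\otimes r}$ to the rank-one matrix $\lambda\, z\otimes z^{\otimes(r-1)}$. So any symmetric decomposition $A=\sum_{k=1}^s\lambda_k z_k^{\otimes r}$ gives $\rank(A_{(1)})\leq s$. Such decompositions allow scalars $\lambda_k$, which matters when $\K$ is not algebraically closed. It therefore suffices to show $\rank(A_{(1)})=n$.

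\textbf{Computing the flattening rank.} We have $A_{(1)}=\sum_{i=1}^n y_i\otimes y_i^{\otimes(r-1)}$. This matrix has rank $n$ provided both families $\{y_i\}$ and $\{y_i^{\otimes(r-1)}\}$ are linearly independent, since it is then a product of an injective $d\times n$ matrix and a surjective-onto-$\K^n$ $n\times d^{\,r-1}$ matrix.

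The first family is independent by hypothesis. For the second, extend the $y_i$ to a basis of $\K^d$ and take dual functionals $f_1,\dots,f_n$ with $f_j(y_i)=\delta_{ij}$. Then $f_j^{\otimes(r-1)}\bigl(y_i^{\otimes(r-1)}\bigr)=\delta_{ij}^{\,r-1}=\delta_{ij}$. Applying $f_j^{\otimes(r-1)}$ to a vanishing linear combination $\sum_i c_i\,y_i^{\otimes(r-1)}=0$ therefore gives $c_j=0$ for each $j$.

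Combining the two bounds yields $\rank_S(A)=n$.

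\textbf{Main obstacle.} The only genuinely non-formal step is this independence of the tensor powers $y_i^{\otimes(r-1)}$. The dual-functional argument above disposes of it cleanly, so I expect no real difficulty beyond being careful that the flattening rank bound holds for decompositions with scalar coefficients over an arbitrary field $\K$ of characteristic zero.
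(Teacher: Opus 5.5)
Your proof is correct. The paper gives no argument for this lemma and simply imports it from \cite{comon2008symmetrictensorssymmetrictensor}, so your flattening bound supplies a self-contained proof of the standard kind. It rests on two facts: $\rank_S(A)\geq\rank(A_{(1)})$, and $A_{(1)}=YZ^{T}$, where the columns $y_i$ of $Y$ and the columns $y_i^{\otimes(r-1)}$ of $Z$ are each linearly independent.

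You are right that $r\geq 2$ is needed. That hypothesis is missing from the statement as quoted, but it is harmless in the paper's setting.

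Two small advantages of your version are worth recording. First, your bound allows coefficients in the decomposition, so the same computation (insert $\operatorname{diag}(c_1,\dots,c_n)$ between $Y$ and $Z^{T}$) shows that $\sum_i c_i\,y_i^{\otimes r}$ with all $c_i\neq 0$ has flattening rank exactly $n$. This is the form actually used in the proof of Theorem~\ref{thm: general_fiber_of_varphi}. There, comparing $\sum_g \mu(g)\,(\varphi_{\theta'}\conv g)^{\otimes r}$ with the rank-at-most-one tensor $\lambda\varphi_{\psi'}^{\otimes r}$ shows directly that $\mu$ has a single nonzero coefficient, without taking $r$th roots $\mu(g)^{1/r}$ in $\overline{\K}$. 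Second, the flattening bound also controls non-symmetric decompositions and uses no assumption on $\K$. So your argument shows that even the ordinary tensor rank of $A$ equals $n$, over any field.
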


\begin{theorem}\label{thm: general_fiber_of_varphi}
    For general filters $\theta:=(\theta_1, \dots, \theta_L)$, we have $\tilde\varphi_\theta =\tilde\varphi_\psi$ if and only if 
    \[
    \psi=(\theta_1 \conv g_1,\  g_1^{-1}\conv \theta_2 \conv g_2,\  \dots,\  g_{L-2}^{-1}\conv \theta_{L-1} \conv g_{L-1},\ g_{L-1}^{-1}\conv \theta_L),
    \]
    where $g_i \in G$ for $1\leq i\leq L-1$, up to rescaling each filter.
\end{theorem}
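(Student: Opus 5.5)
The ``if'' direction is a direct computation. Proposition \ref{prop: kprod_conv_kprod_=_conv_kprod_conv} gives $(\theta_l\conv g_l)^{\otimes k} = \theta_l^{\otimes k}\conv g_l^{\otimes k}$ and $(g_{l-1}^{-1}\conv\theta_l)^{\otimes k} = (g_{l-1}^{-1})^{\otimes k}\conv\theta_l^{\otimes k}$. Take layer $l$ with exponent $r^{L-l}$ and substitute into $\varphi_\psi=\psi_1^{\otimes r^{L-1}}\conv\cdots\conv\psi_L$. Between layers $l$ and $l+1$ we then get the factor $g_l^{\otimes r^{L-l}}\conv (g_l^{-1})^{\otimes r^{L-l-1}}$, and this does not cancel literally because the Kronecker powers differ. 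So the bookkeeping must be done inside $\K[G^{r^{L-1}}]$. There $\theta_{l+1}^{\otimes r^{L-l-1}}$ is viewed as a filter on the product of $r^{L-l-1}$ diagonal copies of $G^r$. Thus $g_l^{\otimes r^{L-l}}$, which is $(g_l^{\otimes r})^{\otimes r^{L-l-1}}$, is an element of that diagonal subgroup and corresponds exactly to $g_l^{\otimes r^{L-l-1}}$ there. The cancellation then follows from associativity. Rescalings are absorbed projectively. I will make the embedding $\K[G^{k}]\hookrightarrow\K[G^{rk}]$ implicit in the definition of $\varphi$ explicit first, since the whole computation rests on it.

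For ``only if'' I induct on $L$. The case $L=1$ is trivial, since $\varphi_\theta=\theta_1$. For the inductive step, write $\varphi_\theta = (\theta_1^{\otimes r})^{\otimes r^{L-2}}\conv \theta_2^{\otimes r^{L-2}}\conv\cdots$ and peel off the last layer. Suppose $\varphi_\theta=\lambda\varphi_\psi$. Set $A:=\varphi_{(\theta_1,\dots,\theta_{L-1})}$, which lives in $\K[G^{r^{L-1}}]$ after the diagonal embedding. Then $\varphi_\theta = A\conv\theta_L^{G^{r^{L-1}}}$, where $\theta_L$ is extended from the diagonal-type subgroup. Decompose $\K[G^{r^{L-1}}]$ into right cosets of that subgroup, in the spirit of the proof of Proposition \ref{prop: det_formualae}. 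Convolving by $\theta_L$ then acts blockwise, each block being a copy of $\Mat_{\theta_L}$, which is invertible for general $\theta$ by Corollary \ref{cor: general_filter_not_zero_divisors}.

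The key step is to show that $\theta_L$ and $\psi_L$ agree up to $g^{-1}\conv(\cdot)$ and scaling. For this I use the symmetric-tensor structure. Expand $\theta_1^{\otimes r^{L-1}}$ restricted to a suitable block as a sum of rank-one symmetric tensors $\sum_{h\in G}\theta_L(h)\,(\text{row }h\text{ of the earlier layers})^{\otimes r}$. Concretely, write $\varphi_\theta$ as $\sum_{h\in G} \theta_L(h)\, (B\conv h)^{\otimes r}$ for an appropriate $B$ built from the first $L-1$ layers, with the lowest layer $L=2$ giving $\sum_h\theta_2(h)(\theta_1\conv h^{-1})^{\otimes r}$. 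For general $\theta_1$ the translates $\theta_1\conv h$, $h\in G$, are linearly independent, because $\Mat_{\theta_1}$ is invertible. So Lemma \ref{lemma: symmetric_rank_of_linearly_indep_n_vector_is_n} gives symmetric rank exactly $n$, after absorbing the scalars $\theta_L(h)$ using $r$-th roots over $\overline{\K}$. By uniqueness of such decompositions, which is Kruskal-type uniqueness for $n$ linearly independent summands with $r\ge2$, the sets $\{\theta_L(h)^{1/r}\,B_\theta\conv h\}$ and $\{\psi_L(h)^{1/r}\,B_\psi\conv h\}$ coincide up to $r$-th roots of unity. Comparing them forces $B_\psi = c\,B_\theta\conv g$ for a single $g$ (the group-orbit structure pins down the bijection as a translation), and then $\psi_L = c'\, g^{-1}\conv\theta_L$. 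Substituting back and cancelling the invertible factor reduces the problem to $\varphi_{(\psi_1,\dots,\psi_{L-1}\conv g^{-1})}\propto\varphi_{(\theta_1,\dots,\theta_{L-1})}$, to which induction applies. Setting $g_{L-1}=g$ gives the stated form.

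I expect the main obstacle to be the uniqueness step, for two reasons. First, uniqueness of symmetric decomposition needs more than Lemma \ref{lemma: symmetric_rank_of_linearly_indep_n_vector_is_n}, which gives only the rank. I would invoke the standard fact that a decomposition into linearly independent vectors with $r\ge2$ is unique, obtained by flattening to a matrix and applying Jennrich/simultaneous diagonalization, checking it over $\overline{\K}$. Second, I must argue that the induced permutation of $G$ is a translation rather than an arbitrary bijection. For that I would compare $G$-equivariance of both sides, using the regular action on $\varphi_\theta$ together with generality of $\theta$, so that distinct group elements give distinct tensors.
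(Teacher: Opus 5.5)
Your ``if'' direction and the inductive reduction are fine, but the central uniqueness step fails for $r=2$. You compare the two rank-$n$ decompositions $\varphi_\theta=\sum_h\theta_L(h)(\varphi_{\theta'}\conv h)^{\otimes r}$ and $\lambda\varphi_\psi=\lambda\sum_h\psi_L(h)(\varphi_{\psi'}\conv h)^{\otimes r}$, where $\theta'=(\theta_1,\dots,\theta_{L-1})$ and your $B_\theta$ is $\varphi_{\theta'}$. You then appeal to uniqueness of symmetric decompositions with linearly independent summands. Kruskal/Jennrich uniqueness needs order at least $3$. For $r=2$ the tensor is a symmetric matrix of rank $n$. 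Write it as $YY^{T}$ with $Y$ of full column rank, absorbing square roots of the coefficients over $\overline{\K}$. Then $YY^{T}=(YO)(YO)^{T}$ for every orthogonal $O$, so its decompositions into $n$ rank-one terms form a positive-dimensional family. No flattening rescues this: for $L=2$ the object is literally the $n\times n$ matrix $\Mat_{\theta_1}\,\mathrm{diag}(\theta_2)\,\Mat_{\theta_1}^{T}$. So for $r=2$, which the theorem covers, nothing in your argument forces the vectors $\varphi_{\psi'}\conv h$ to be proportional to the $\varphi_{\theta'}\conv h$. The fact that $\psi_L$ lives in the diagonal copy of $\K[G]$ has to be used at this step, not only afterwards. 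For $r\ge 3$ your route does go through, and the translation worry is easy. If $\varphi_{\psi'}\conv\pi(h)\propto\varphi_{\theta'}\conv h$ for all $h$, then $h=e$ gives $\varphi_{\psi'}\propto\varphi_{\theta'}\conv\pi(e)^{-1}$, and linear independence of the translates forces $\pi(h)=\pi(e)h$.

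The missing idea, which is how the paper argues, is to cancel $\psi_L$ instead of comparing two rank-$n$ decompositions. You note that convolution by $\theta_L$ is invertible, but what matters is that $\psi_L$ is invertible, and $\psi_L$ is not general. For general $\theta$, $\varphi_\theta$ is invertible in the finite-dimensional algebra $\K[G^{r^{L-1}}]$. Hence $\varphi_\theta=\lambda\varphi_{\psi'}^{\otimes r}\conv\psi_L$ gives $\psi_L$ a left inverse, hence a two-sided one, and by Proposition~\ref{prop: det_formualae} this inverse lies in $\K[G]$. Set $\mu:=\theta_L\conv\psi_L^{-1}\in\K[G]$. Proposition~\ref{prop: kprod_conv_kprod_=_conv_kprod_conv} then gives
\[
\sum_{g\in G}\mu(g)\,(\varphi_{\theta'}\conv g)^{\otimes r}=\lambda\,\varphi_{\psi'}^{\otimes r}.
\]
By Lemma~\ref{lemma: symmetric_rank_of_linearly_indep_n_vector_is_n}, the left side has symmetric rank equal to the number of nonzero $\mu(g)$, while the right side has rank one. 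Hence $\mu$ is a multiple of a single $g$. This gives $\theta_L\propto g\conv\psi_L$ and $\varphi_{\psi'}\propto\varphi_{\theta'}\conv g$, and induction finishes. The argument needs only the rank statement, works verbatim for $r=2$ (where it is plain matrix rank), and no permutation of $G$ ever has to be controlled.
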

\begin{proof}
     We do a proof by induction over the number of layers $L$. The base case is trivial and the fiber is a singleton, up to rescaling the only filter.

    Assume the proposition holds for all numbers of layers less than or equal to $L-1$.
    Let $\theta'$ and $\psi'$ denote the first $L-1$ filters in $\theta$ and $\psi$, and consider the case for $L$ to get
    \[
    \varphi_\theta = \varphi_{\theta'}^{\otimes r} \conv \theta_L = \lambda \varphi_{\psi'}^{\otimes r} \conv \psi_L = \lambda \varphi_\psi,
    \]
    where $\lambda \in \K\backslash\{0\}$.
    Since $\theta$ is general, by Proposition \ref{prop:general conditions} we get that $\varphi_\theta \in \K[G^{r^{L-1}}]$ has an inverse filter $\varphi_{\theta}^{-1}$ and, moreover, $\psi_L \in \K[G^{r^{L-1}}]$ has an inverse given by $\psi_L^{-1} = \varphi_\theta^{-1}\conv \lambda \varphi_{\psi'}^{\otimes r}$.
    Convolute both sides above by $\psi_L^{-1}$ to get
    \[
    \varphi_{\theta'}^{\otimes r} \conv \theta_L\conv \psi_L^{-1} = \lambda \varphi_{\psi'}^{\otimes r}.
    \]
    Define $\mu := \theta_L \conv \psi_L^{-1}$ and substitute in the above equation to get
      \[
    \varphi_{\theta'}^{\otimes r} \conv \mu = \lambda \varphi_{\psi'}^{\otimes r}.
    \]
Write $\mu = \sum_{g\in G} \mu(g) g$ and use Proposition \ref{prop: kprod_conv_kprod_=_conv_kprod_conv} to distribute convolution over the Kronecker product to get
\begin{align*}
  \sum_{g\in G} \mu(g) \varphi_{\theta'}\conv g \kprod \dots \kprod  \varphi_{\theta'}\conv g =  \lambda \varphi_{\psi'} \kprod \dots \kprod \varphi_{\psi'}.
\end{align*}
Notice that for any $g\in G$, $\varphi_{\theta'} \conv g$ is a vector in $\K^{n^{r^{L-2}}}$, and more importantly, it is a column in the matrix $\Mat_{\varphi_{\theta'}}$.
For general filters $\theta_1, \dots, \theta_{L-1}$, Proposition \ref{prop:general conditions} shows the determinant $\det(\Mat_{\varphi_{\theta'}}) \neq 0$ and so the vectors $\{\varphi_{\theta'} \conv g\ |\ g\in G \}$ are linearly independent. 
Therefore the vectors $\{\mu(g)^{1/r}\varphi_{\theta'} \conv g\ |\ g\in G, \mu(g)^{1/r} \neq 0 \}$ are linearly independent in $\overline{\K}^{n^{r^{L-2}}}$. 
By Lemma \ref{lemma: symmetric_rank_of_linearly_indep_n_vector_is_n}, the above equation holds if and only if $\mu \in G$ up to a scaling, which implies $\theta_L = g\conv \psi_L$ up to a rescaling. 
The rest follows by induction.
\end{proof}

\subsection{General fiber of \(\Phi\)}
In this subsection, we discuss the general fiber of the parametrization map $\Phi$. 
To do that, we develop a Proposition similar in nature to Lemma \ref{lemma: symmetric_rank_of_linearly_indep_n_vector_is_n}, and by using it we are able to show that the general fiber $\Phi$ is the same as the general fiber of $\varphi$. 

Denote by $VP_{n, k, d}$ the projective variety of $k$th power of degree $d$ homogeneous polynomials in $n$ variables $x$ \cite{Abdesselam_2014}. Namely,
\[
VP_{n, k, d} := \{[p] \in \PP(\Sym_\K(x, kd))\ |\ \exists q \in \Sym_\K(x, d),\ q^k = p\}.
\]

\begin{proposition}\label{prop: intersection_of_general_secant_is_trivial}
    Let $G=\{g_1, \dots, g_n\}$ as a set, and let the activation degree $r\geq 2$.
    For general filters $\theta=(\theta_1, \dots, \theta_L)$, where $L\geq 1$, and $p_i(\theta) := \sigma_r(\Phi_\theta)(g_i)$ we have
    \[
   \PP( \langle p_1(\theta), \dots, p_n(\theta) \rangle )\cap\  VP_{n, r, r^{L-1}} = \{[p_1(\theta)], \dots, [p_n(\theta)]\},
    \numberthis, \label{eq: intersection_scheme_linear_span_with_VP}\]
    where $\langle p_1(\theta), \dots, p_n(\theta)\rangle$ denotes the linear span of these polynomials in $\Sym_\K(x, r^L)$.
\end{proposition}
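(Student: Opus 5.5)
The plan is to exploit equivariance to describe the polynomials explicitly. First I handle a base configuration where the statement reduces to linear algebra. Then I pass to general $\theta$ by a specialization/semicontinuity argument.

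Write $q_i(\theta) := \Phi_\theta(g_i) \in \Sym_\K(x, r^{L-1})$, so $p_i = q_i^r$. A point of the left-hand side of \eqref{eq: intersection_scheme_linear_span_with_VP} is a class $[q^r]$ with $q^r = \sum_i c_i q_i^r$. We must show that exactly one $c_i$ is nonzero. The inclusion $\supseteq$ is trivial.

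\textbf{Step 1 (depth one).} For $L=1$ the $q_i$ are the linear forms $(x \conv \theta_1)(g_i)$, i.e. the rows of $\Mat_{\theta_1}$. By Proposition~\ref{prop:general conditions} they are linearly independent for general $\theta_1$. After a linear change of coordinates $y_i = q_i$, the identity reads $\sum_i c_i y_i^r = q(y)^r$ with $q = \sum_i a_i y_i$ linear. Since $r \geq 2$, expanding $q^r$ gives mixed monomials $y_iy_j^{r-1}$ with coefficient $r a_i a_j^{r-1}$. These must vanish, so at most one $a_i$ is nonzero, and hence at most one $c_i$. This is essentially the mechanism of Lemma~\ref{lemma: symmetric_rank_of_linearly_indep_n_vector_is_n} in the polynomial setting.

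\textbf{Step 2 (a special deep configuration).} For $L \geq 2$, take $\theta_1 = \dots = \theta_{L-1} = e$ and keep $\theta_L$ general. Then the inner network outputs $x_h^{r^{L-2}}$ at $h$, and
\[
q_i = \sum_{h} \theta_L(g_i h^{-1})\, y_h, \qquad y_h := x_h^{m},\quad m = r^{L-1}.
\]
So the $q_i$ are linearly independent linear forms in the monomials $y_h$. Suppose $q^r = \sum_i c_i q_i^r$. The right side is invariant under $x_h \mapsto \zeta x_h$ for each $m$-th root of unity $\zeta$. Hence $q$ is a semi-invariant, $q(\zeta\cdot x) = \omega\, q(x)$ with $\omega^r = 1$.

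Comparing monomials then shows $q$ is a monomial multiple of a polynomial in the $y_h$. Degree considerations, $\deg q = m$, force $q$ to be a linear form in the $y_h$, up to the degenerate case where $q$ is a single monomial. That degenerate case I would rule out separately by comparing supports. This reduces Step 2 to Step 1 in the variables $y_h$. At this special point the intersection therefore consists of exactly the $n$ points $[p_i]$.

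\textbf{Step 3 (from the special point to a general point).} Consider the incidence variety
\[
I = \{(\theta, [c], [q]) : \textstyle\sum_i c_i q_i(\theta)^r = q^r\} \subset (\K[G])^L \times \PP^{n-1} \times \PP(\Sym_\K(x, r^{L-1})).
\]
It is closed. The $n$ ``trivial'' components $\{c = e_i,\ q = q_i(\theta)\}$ are irreducible and project onto the parameter space. Let $I'$ be the union of the remaining components. The goal is to show that the projection of $I'$ is not dense.

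By Step 2, the fiber of $I$ over the special point is finite and equal to the trivial points. I would check that these points are reduced, i.e. the tangent space to the fiber at each is zero. This is a linearized version of the monomial comparison in Step 1, using $r \geq 2$ and the independence of the $q_i$.

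Then upper semicontinuity of fiber dimension shows fibers of $I$ are finite near the special point. Reducedness, together with the fact that the trivial components are already present, prevents $I'$ from meeting the special fiber. Since the special fiber is compact in projective directions and the projection is proper, the image of $I'$ is a closed set missing the special point. Hence it is a proper closed subset, and the statement holds on its complement, which is dense.

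I expect Step 3 to be the main obstacle. In particular the reducedness check at the special configuration, and making sure that $\theta_1 = \dots = \theta_{L-1} = e$ lies in the correct parameter space, are the delicate points. If reducedness turns out hard, my fallback is to perturb around the special point. I would argue that any extra intersection point would have to converge to a nontrivial solution at the special point, which Step 2 excludes, and that the only remaining danger is collision with a trivial point. That collision I would exclude by the first-order computation.
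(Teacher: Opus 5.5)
Your proposal is correct, and it has the same skeleton as the paper's proof in Appendix~C. Both specialize to identity filters, show that the intersection there is exactly $n$ reduced points, and spread this to general $\theta$ through a proper incidence correspondence. The two routes differ in the special point and in the propagation tool.

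\textbf{Special point.} The paper sets all $L$ filters equal to $e$, so $W_{\theta^o}=\langle x_1^{rd},\dots,x_n^{rd}\rangle$ and the set-theoretic claim is a one-line monomial argument. Keeping $\theta_L$ general is harmless, but it costs you the semi-invariance reduction and the monomial case $q=x^\beta$. That case is exactly where genericity of $\theta_L$ is used. Comparing the coefficients of the $y_h^r$ rules it out once $\det\sigma_r(\theta_L)\neq 0$, and it can genuinely occur when this fails: over $\C$, for $G=\Z_2=\{e,g\}$, $r=L=2$ and $\theta=(e,\,e+\sqrt{-1}\,g)$, one has $p_e+p_g=4\sqrt{-1}\,(x_ex_g)^2$. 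In your tangent-space check, $\dot q$ ranges over all of $\Sym_\K(x,d)$. To reduce to your linear computation you therefore need that $\K[x]$ is free over $\K[x_h^m]$ on the monomials $x^\beta$ with $0\le\beta_h<m$.

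\textbf{Propagation.} The paper uses three facts: semicontinuity of fibre dimension, ``proper and quasi-finite implies finite'', and upper semicontinuity of the fibre rank of $\pi_*\mathcal{O}_Y$. So the length of $Y_\theta$ is at most $n$ near $\theta^o$, while the points $[p_i(\theta)]$ force it to be at least $n$. You instead split off the $n$ section components.

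Your key sentence, that reducedness keeps the other components away from the special fibre, is true, but it is the crux and needs a reason. Zero tangent space of the scheme-theoretic fibre at $y_i=(\theta^*,e_i,[q_i(\theta^*)])$ means $\pi$ is unramified at $y_i$. A section of an unramified morphism is an open immersion, being a base change of the diagonal. Hence $I$ agrees with the trivial component near $y_i$, which also shows that the trivial loci really are irreducible components. Properness then finishes the argument as you say, and the fibre-dimension step becomes unnecessary.

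Your version avoids finite morphisms and coherent sheaves, and it exhibits the exceptional locus as the image of the nontrivial components. The paper's length count is more numerical and gives reducedness of $Y_\theta$ for general $\theta$ at once.
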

\begin{proof}
    We consider the incidence scheme
    \[
    Y:= \{(\theta, [F]) \in \Theta^o \times VP_{n, r, r^{L-1}}\ |\ p_1(\theta) \wedge \dots \wedge p_n(\theta) \wedge F = 0 \in \bigwedge^{n+1} \Sym_\K(x, r^L) \},
    \]
    where $\Theta^o \subset \Theta$ is the open subspace of the parameters such that $\{ p_1(\theta), \dots, p_n(\theta)\}$ are linearly independent.
    We consider the projection map $\pi: Y \to \Theta^o$ and note that for $\theta\in \Theta^o$ the fiber $\pi^{-1}(\theta)$ is the intersection defined by \eqref{eq: intersection_scheme_linear_span_with_VP}.
    Using two different semi continuity theorems from algebraic geometry we restrict the dimension of fiber to be $0$ for a general element $\theta \in \Theta^o$ and then we restrict its size (length) to be equal to $n$.

    Note that the first part of the argument, where we restrict the dimension of the fiber, is similar to an argument in \cite[Theorem 22]{finkel2025activationdegreethresholdsexpressiveness} in which they compute the dimension of the neurovariety for a fully connected neural networks with layers of equal width. Our argument extends this to determine the exact size of the general fiber.
    See Appendix \ref{appendix: proof_proposition_4.8} for the complete proof.
\end{proof}
\begin{theorem}\label{thm: general_fiber_of_Phi}
    For general filters $\theta:=(\theta_1, \dots, \theta_L)$, we have $\tilde \Phi_\theta =\tilde \Phi_\psi$ if and only if $\psi:=(\theta_1 \conv g_1,\  g_1^{-1}\conv \theta_2 \conv g_2,\  \dots,\  g_{L-2}^{-1}\conv \theta_{L-1} \conv g_{L-1},\ g_{L-1}^{-1}\conv \theta_L)$, where $g_i \in G$ for $1\leq i\leq L-1$, up to rescaling each filter.
\end{theorem}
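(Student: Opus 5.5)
The plan is to mirror the proof of Theorem \ref{thm: general_fiber_of_varphi}, arguing by induction on $L$ but working directly with the polynomials $\Phi_\theta(g)\in\Sym_\K(x,r^{L-1})$ rather than with $\varphi_\theta$. The "if" direction is a direct computation. Inserting $g_i\conv g_i^{-1}=e$ between consecutive layers and using that $\sigma_r$ acts componentwise, we get $\sigma_r(\xi\conv g)=\sigma_r(\xi)\conv g$ for $g\in G$: right convolution by a group element only permutes coordinates. Hence $\Phi_\psi=\Phi_\theta$ up to the scalings of \eqref{eq: scaling_fibers}. The base case $L=1$ is also immediate: $\Phi_\theta=x\conv\theta_1$, and the coefficients of $\Phi_\theta(e)$ determine $\theta_1$, so the fiber is a point up to scaling.

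For the inductive step, write $\theta'$ and $\psi'$ for the first $L-1$ filters. Then
\[
\Phi_\theta=\sigma_r(\Phi_{\theta'})\conv\theta_L,\qquad \Phi_\psi=\sigma_r(\Phi_{\psi'})\conv\psi_L,
\]
and suppose $\Phi_\theta=\lambda\Phi_\psi$. By Corollary \ref{cor: general_filter_not_zero_divisors}, $\theta_L$ is invertible for general $\theta$. Since $\Phi_\theta(g_i)=\sum_j \theta_L(g_j^{-1}g_i)\,p_j(\theta')$ and $\Mat_{\theta_L}$ is invertible, the span of the $\Phi_\theta(g_i)$ equals the span $\langle p_1(\theta'),\dots,p_n(\theta')\rangle$. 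I also need the $p_j(\theta')$ to be linearly independent for general $\theta'$, so that $\theta'\in\Theta^o$. If this is not already packaged in Proposition \ref{prop: intersection_of_general_secant_is_trivial}, I would get it from its proof, where $\Theta^o$ is assumed nonempty; alternatively, independence follows for large $r$ by specializing $\theta'$ to the identity filters. Next I would show that $\psi_L$ is invertible, since otherwise the span of the $\Phi_\psi(g_i)$ would have dimension less than $n$, contradicting the first span having dimension $n$. Then each $p_j(\psi')=\sigma_r(\Phi_{\psi'})(g_j)$ is an $r$th power lying in $\langle p_1(\theta'),\dots,p_n(\theta')\rangle$. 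Since $\theta'$ is general, Proposition \ref{prop: intersection_of_general_secant_is_trivial} applies and forces $[p_j(\psi')]\in\{[p_1(\theta')],\dots,[p_n(\theta')]\}$. Because the $p_j(\psi')$ are linearly independent, this assignment is a bijection.

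The main obstacle is upgrading this set-theoretic bijection to a group element. Taking $r$th roots in $\overline{\K}[x]$ gives $\Phi_{\psi'}(g_j)=\zeta_j c_j\,\Phi_{\theta'}(g_{\pi(j)})$, with $\zeta_j$ roots of unity, $c_j$ scalars, and $\pi$ a permutation. I would then exploit equivariance. The $G$-action permutes the variables, $\Phi(g)(x)=\Phi(e)(g^{-1}\cdot x)$, and this holds for both $\theta'$ and $\psi'$. Comparing how the two sides transform under this action should show that $\pi$ is a right translation $g_j\mapsto g_jg$. It should also show that $\zeta_jc_j$ is independent of $j$, provided the stabilizer of $\Phi_{\theta'}(e)$ in $G$ is trivial for general $\theta'$, which I expect is a general condition. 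This gives $\Phi_{\psi'}=c\,\Phi_{\theta'}\conv g$. Substituting back, the relation $\sigma_r(\Phi_{\theta'})\conv\theta_L=\lambda c^r\,\sigma_r(\Phi_{\theta'})\conv g\conv\psi_L$ holds. Linear independence of the $p_j(\theta')$ means $\mu\mapsto\sigma_r(\Phi_{\theta'})\conv\mu$ is injective on $\K[G]$, so $\theta_L=\lambda c^r\, g\conv\psi_L$. Finally, $\Phi_{\psi'}=c\,\Phi_{\theta'\conv g}$, where $\theta'\conv g$ means $\theta_{L-1}$ is replaced by $\theta_{L-1}\conv g$; this is still general, so the inductive hypothesis applied to it yields the stated form of $\psi$ up to rescaling.
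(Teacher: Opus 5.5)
Your proof is correct. It shares the paper's skeleton: induction on $L$, invertibility of $\psi_L$, and Proposition~\ref{prop: intersection_of_general_secant_is_trivial} applied to $r$th powers lying in $\langle p_1(\theta'),\dots,p_n(\theta')\rangle$. Two local steps differ.

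\textbf{Invertibility of $\psi_L$.} You compare dimensions of spans, which is simpler than the paper's detour through $\operatorname{Frac}(S)[G]$.

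\textbf{Extracting the group element.} The paper sets $\mu=\theta_L\conv\psi_L^{-1}$ and evaluates $\sigma_r(\Phi_{\theta'})\conv\mu=\lambda\,\sigma_r(\Phi_{\psi'})$ only at $e$. It applies the proposition once, and linear independence of the $p_j(\theta')$ then forces $\mu$ to be a nonzero multiple of a single $g\in G$. So the translation structure comes for free and the argument is shorter. You instead apply the proposition to all $n$ components and need equivariance to show the resulting permutation is a right translation. What your route buys is completeness at the hand-off to induction. The paper only reaches $\sigma_r(\Phi_{\psi'})\propto\sigma_r(\Phi_{\theta'}\conv g)$ and then says ``the rest follows by induction''. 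Invoking the $(L-1)$-layer statement needs $\Phi_{\psi'}\propto\Phi_{\theta'}\conv g$ itself, i.e.\ that the componentwise $r$th roots share one scalar. That is exactly your equivariance computation.

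\textbf{Two hedges you can drop.}
\begin{enumerate}
\item The stabilizer proviso is automatic. Write $\Phi_{\psi'}(g_j)=a_j\Phi_{\theta'}(g_{\pi(j)})$ with $g_1=e$, and use $\Phi(g)(x)=\Phi(e)(g^{-1}\conv x)$ for both networks. This gives $a_1\Phi_{\theta'}(g_jg_{\pi(1)})=a_j\Phi_{\theta'}(g_{\pi(j)})$. Linear independence of the $p_k(\theta')$ already makes the $\Phi_{\theta'}(g_k)$ nonzero and pairwise non-proportional. Hence $g_{\pi(j)}=g_jg_{\pi(1)}$ and $a_j=a_1$, and $a_1\in\K$ by comparing coefficients.
\item Independence of the $p_j(\theta')$ holds for every $r\geq 1$, not only large $r$. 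At $\theta'=(e,\dots,e)$ one has $p_i=x_i^{r^{L-1}}$. This is the point $\theta^o\in\Theta^o$ used in the appendix, so a general $\theta'$ lies in the open dense set $\Theta^o$.
\end{enumerate}
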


\begin{proof}
    The proof is similar to Theorem \ref{thm: general_fiber_of_varphi} and proceeds by induction over the number of layers $L$. The base case is trivial and the fiber is a singleton, up to rescaling.

    Assume the theorem holds for all numbers of layers less than or equal to $L-1$.
    Let $\theta'$ and $\psi'$ denote the first $L-1$ filters in $\theta$ and $\psi$, and consider the case for $L$ to get
    \[
    \Phi_\theta 
    = \sigma_r(\Phi_{\theta'}) \conv \theta_L 
    = \lambda \sigma_r(\Phi_{\psi'}) \conv \psi_L 
    = \lambda \Phi_\psi, \numberthis \label{eq: Phi_theta_equal_Phi_psi}
    \]
    where $\lambda \in \K-\{0\}$. 
    We show $\psi_L$ is invertible in $\K[G]$. 
    Let $S= K[x_g\ |\ g\in G]$ be the polynomial ring over $\K$  and $\operatorname{Frac}(S)$ be its field of fraction.
    Since $\theta$ is general by Proposition \ref{prop: det_formualae} $\det \Phi_\theta \neq 0$ in $S[G]$ and therefore $\Phi_\theta$ is invertible in $\operatorname{Frac}(S)[G]$. 
    Then Eq \eqref{eq: Phi_theta_equal_Phi_psi} implies $\psi_L$ has a left inverse in $\operatorname{Frac}(S)[G]$ which is a finite dimensional algebra and therefore $\psi_L$ has a right inverse, too, and it is invertible in $\operatorname{Frac}(S)[G]$. However, $\psi_L \in \K[G] \subset \operatorname{Frac}(S)[G]$. Therefore $\psi_L$ inverse in $\K[G]$ exists.
    Define $\mu:= \theta_L \conv \psi_L^{-1}$ and substitute in the last equation to get
        \[
    \sigma_r(\Phi_{\theta'}) \conv \mu = \lambda \sigma_r(\Phi_{\psi'}).
    \]
    Then evaluate at $e \in G$ to get
            \[
    \sum_{g\in G}\sigma_r(\Phi_{\theta'})(g) \mu(g^{-1}) = \lambda \sigma_r(\Phi_{\psi'})(e).
    \]
    The last equation says that $\lambda \sigma_r(\Phi_{\psi'})(e)$ lies in the linear span of $\{\sigma_r(\Phi_{\theta'})(g)\}_{g\in G}$. 
     By Lemma \ref{prop: intersection_of_general_secant_is_trivial}, we get $\lambda \sigma_r(\Phi_{\psi'})(e) = \sigma_r(\Phi_{\theta'})(g)$ for some $g\in G$. 
    As a result, we have $\mu = \lambda g$ for some $g\in G$, or equivalently $\theta_L = \lambda g\conv \psi_L$.
    The rest follows by induction.    
    
\end{proof}

According to the definition of identifiability in \cite{usevich2025identifiabilitydeeppolynomialneural}, Theorem \ref{thm: general_fiber_of_Phi} states that a PGCNN is \emph{globally identifiable} for all depth $L$ and all activation degree $r\geq 2$.

\begin{remark}
    In \cite{finkel2025activationdegreethresholdsexpressiveness}, it is shown that
    the powers of non-proportional multivariate polynomials are linearly independent for all powers $r>\tilde{r}(k)$ where $\tilde{r}$ is a function that depends only the number of polynomials $k$. This is then used to find an activation degree threshold for an arbitrary architecture of fully connected networks. A similar line of reasoning can be followed to prove weaker version Proposition \ref{prop: intersection_of_general_secant_is_trivial}, where $r$ is taken to be greater than $\tilde{r}(|G|)$. 
    However, in order to prove Proposition \ref{prop: intersection_of_general_secant_is_trivial} for $r\geq 2$, we need to resort to the use of semi continuity theorems of fiber dimension and fiber length from algebraic geometry.
\end{remark}

\begin{remark}\label{rmk:fiber_computation}
We have verified, using \texttt{PGCNNGeometry}, that Theorem \ref{thm: general_fiber_of_Phi} holds for the PGCNNs described in Table \ref{tab:Phi_general_fiber_checks}.  
This check is limited to small groups and networks. \texttt{PGCNNGeometry}  computes the size of  a fiber $F_p$ of $\tilde \Phi$ over a general point  $p$ in the image of $\tilde \Phi$ via an algorithm that relies on computing the degree of the vanishing ideal $I$ over $F_p$ in the multi-projective space $(\PP(\K^n))^L$.
Since the ideal $I$ is over a multi-projective space, we need to saturate with respect to the the irrelevant ideal in order to exclude the irrelevant zero locus from the fiber. This saturation process is very expensive for larger groups and/or deeper networks.
Recall that a tuple of filter $\theta$ belongs to the irrelevant zero locus if and only if at least one of the filters is the zero filter.
\begin{table}[h]
    \centering
    \begin{tabular}{c|c |c|c|c |c|c|c |c}
        Group &  $\mathbb{Z}_2$ & 
         $\mathbb{Z}_3$ & 
         $\mathbb{Z}_4$ & 
         $\mathbb{Z}_2 \times \mathbb{Z}_2$ & 
         $\mathbb{Z}_5$ &
         $\mathbb{Z}_6$ &
         $\mathbb{Z}_2 \times \mathbb{Z}_3$ & 
         $S_3$\\
         \hline
         Max no. of layers & 4 & 3 & 2 &
         2 &2 &2 &2 &2 
         
    \end{tabular}
    \caption{Verification of Theorem \ref{thm: general_fiber_of_Phi}, using \texttt{PGCNNGeometry}, for PCGNN with the listed groups and maximum depth as shown in the second row; the activation degree is fixed to 2.} 
    
    \label{tab:Phi_general_fiber_checks}
\end{table}
\end{remark}

\section{Conclusion and Outlook}
In this work, we presented a mathematical framework for polynomial group convolutional networks using graded group algebras.
Under this framework, we computed the dimension of the neuromanifold for a PGCNN where the group $G$ is an arbitrary finite group.
We showed that the dimension does not depend on the group structure but only on the size of the group. 
Furthermore, for an activation degree $r\geq 2$, we study the general fibers of the two parametrizations $\varphi$ and $\Phi$ and show that in a general fiber the corresponding layers differ only by a rescaling and a convolution with a group element. In total this shows that a PGCNN is globally identifiable for $r\geq 2$.

For future work, we aim to compute the degree and the generic Euclidean degree for the neuromanifold of PGCNN, we aim to generalize the techniques used in \cite{shahverdi2024geometryoptimizationpolynomialconvolutional} for the same computation for CNN.
In the CNN case, considered in \cite{shahverdi2024geometryoptimizationpolynomialconvolutional}, the parametrization map $\Phi$ factors through Segre-Veronese embedding via a projective linear map.
Since the degree of a variety is invariant under projective linear morphism, the authors get that the degree of the Segre-Veronese variety naturally translates into the degree of the neurovariety of a CNN.
In contrast, in the case of PGCNN, the parametrization map $\Phi$ factors through the Segre-Veronese embedding via a rational map with a nontrivial kernel rather than a projective linear morphism.
Therefore, in the case of PGCNN, we expect the Segre-Veronese factorization of the map $\Phi$ to give an upper bound for both the degree and the generic Euclidean degree, rather than determining them exactly.

Another important finding of neuroalgebraic geometry  is that points in the singular locus of the neuromanifold have a higher probability of being extrema under training \cite{marchetti2025algebraunveilsdeeplearning}.
This makes the singular locus of the neuromanifold of PGCNN worthy of further investigation.
During our experimentation we observed that the rank of the Jacobian, for both maps $\varphi$ and $\Phi$, drops when the matrix representation of any convolution layer is not full rank.
This suggests that there is a connection between singularities of layer, locally, and singularities of the network, globally.

For further generalization, we propose two natural directions. The first direction is to consider polynomial activation functions which are sums of Hadamard products. 
This extension is interesting since, as we discussed before, PNNs are being used in multiple fields now.
Moreover, by Weierstrass Approximation Theorem, these networks can be used to approximate neural networks with non-polynomial activation functions.
This might enable a deeper investigation of the geometric structure underlying GCNNs with non-polynomial activations.

Another direction is to relax our restriction to finite groups, and study PGCNNs of compact groups.
In this case input signals and convolution filters are local sections of an associated vector bundle over a homogeneous space, and the convolution operation is formulated as an integral \cite{10.5555/3454287.3455107}. 
To consider this case, we either need to generalize our framework to use infinite dimensional graded group algebras, or to discretize compact groups to finite groups and reduce to the current case.
While in this setting one would have an infinite-dimensional generalization of a neuromanifold, relevant questions still remain about fibers and singularities.

\section*{Acknowledgements}
We thank  Kathl\'en Kohn,  Giorgio Ottaviani, Vahid Shahverdi and Konstantin Usevich  for discussions and suggestions. 

This work was in part supported by the Wallenberg AI, Autonomous Systems and Software Program (WASP)
funded by the Knut and Alice Wallenberg Foundation.
The computations were partly
enabled by resources provided by the National Academic Infrastructure for Supercomputing in Sweden
(NAISS), partially funded by Vetenskapsrådet through grant agreement no. 2022-06725.

\bibliographystyle{alpha}
\bibliography{ref}

\appendix

\section{Proof of Proposition \ref{prop: Jvarphi_kernel_has-dim_L-1}}\label{appendix: varph_is_regular}

Before we state the proof of Proposition \ref{prop: Jvarphi_kernel_has-dim_L-1}, we need to set some notation that applies only in this appendix. We also need to prove two lemmata, one easy and one hard.

\paragraph{Notation.}
\begin{itemize}
    \item We introduce the following notation
    \[
    [\breve{x}_1 \diamond x_2 \diamond\hdots\diamond x_n] := \breve{x}_1 \diamond x_2\diamond \hdots\diamond x_n 
    \, +  \,
    x_1 \diamond\breve{x}_2 \diamond\hdots\diamond x_n \, + \hdots \, + \,
    x_1 \diamond\hdots\diamond \breve{x}_n,
    \]
    where $\breve{}$ and $\diamond$ denote arbitrary operations.
    \item  Let $g \in G^{r}$ and $h\in G$ we define $g\cdot h := g \cdot h^r$.
\end{itemize}

\begin{lemma}\label{lemma: jacobian_rules_conv_kprod}
    Let $\theta$ and $\psi$ be $G$-filters, then
    \begin{enumerate}
        \item   
        \[
    J_{\theta , \psi}(\theta \conv \psi)(\dot{\theta}, \dot{\psi}) = \dot{\theta}\conv \psi + \theta\conv \dot{\psi} = [\dot{\theta}\conv \psi].
    \]
    \item
\begin{align*}
        J_{\theta}(\theta^{\otimes r})(\dot{\theta}) &= \dot{\theta}\kprod \theta \kprod\hdots\kprod \theta + \theta \kprod \dot{\theta}  \kprod\hdots\kprod \theta + \hdots + \theta \kprod \hdots \kprod \theta \kprod \dot{\theta}\\
        &= \dot{\theta} \kprod \theta^{\otimes (r-1)} + \theta \kprod \dot{\theta} \kprod \theta^{\otimes (r-2)} + \hdots + \theta^{\otimes (r-1)} \kprod \dot{\theta}\\
        &= [\dot{\theta}\kprod \theta \kprod\hdots\kprod \theta].
\end{align*}

    \item Let $\theta = (\theta_1, \hdots, \theta_L)$, $\dot{\theta} = (\dot{\theta}_1, \hdots, \dot{\theta}_L)$. Moreover, let $\theta'$ and $\dot{\theta}'$ be the tuples containing the first $L-1$ filters in $\theta$ and $\dot{\theta}$, respectively. 
    Since $\varphi_\theta = \varphi_{\theta'}^{\otimes r} \conv \theta_L|^{G^{r^{L-1}}}$, we get
    \[
    J_\theta\varphi(\dot{\theta}) = [J_{\theta'}\varphi(\dot{\theta}') \kprod \underbrace{\varphi_{\theta'} \kprod \hdots \kprod \varphi_{\theta'}}_{= \varphi_{\theta'}^{\otimes (r-1)}}] \conv \theta_L + \varphi_{\theta'}^{\otimes r} \conv \dot{\theta}_L.
    \]
    \end{enumerate}
\end{lemma}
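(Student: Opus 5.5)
The three identities are direct Leibniz-rule computations, using that convolution and the Kronecker product are bilinear; I will verify each in turn.

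For item 1, convolution $(\theta,\psi)\mapsto\theta\conv\psi$ is bilinear over $\K$, since each coordinate $(\theta\conv\psi)(g)=\sum_h\theta(gh^{-1})\psi(h)$ is a bilinear form in the coordinates of $\theta$ and $\psi$. The derivative of a bilinear map $B$ at $(\theta,\psi)$ in direction $(\dot\theta,\dot\psi)$ is $B(\dot\theta,\psi)+B(\theta,\dot\psi)$. To see this, expand $B(\theta+t\dot\theta,\psi+t\dot\psi)$, which gives $B(\theta,\psi)+t\bigl(B(\dot\theta,\psi)+B(\theta,\dot\psi)\bigr)+t^2B(\dot\theta,\dot\psi)$, and read off the linear term. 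This linear term is exactly $[\dot\theta\conv\psi]$ in the bracket notation.

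For item 2, the map $(\theta_1,\dots,\theta_r)\mapsto\theta_1\kprod\cdots\kprod\theta_r$ is $r$-multilinear by the definition $\iota(\theta\otimes\psi)(g,h)=\theta(g)\psi(h)$, iterated. The same expansion of $(\theta+t\dot\theta)^{\otimes r}$ in powers of $t$ has linear coefficient equal to the sum over positions of replacing one factor $\theta$ by $\dot\theta$. Equivalently, one can compose the diagonal map $\theta\mapsto(\theta,\dots,\theta)$ with the derivative of a multilinear map via the chain rule. Grouping the untouched factors as $\theta^{\otimes(k)}$ gives the displayed middle line.

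For item 3, I use the recursion $\varphi_\theta=\varphi_{\theta'}^{\otimes r}\conv\theta_L$. This holds because Proposition \ref{prop: kprod_conv_kprod_=_conv_kprod_conv} gives $(\theta_1^{\otimes r^{L-2}}\conv\cdots\conv\theta_{L-1})^{\otimes r}=\theta_1^{\otimes r^{L-1}}\conv\cdots\conv\theta_{L-1}^{\otimes r}$. Here $\theta_L$ is regarded in $\K[G^{r^{L-1}}]$ via its extension from the diagonal subgroup. The extension map $\theta_L\mapsto\theta_L^{G^{r^{L-1}}}$ is linear, so its derivative is itself, and the derivative of the last factor contributes $\varphi_{\theta'}^{\otimes r}\conv\dot\theta_L$. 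For the first factor I apply the chain rule. The inner map is $\theta'\mapsto\varphi_{\theta'}$, with derivative $J_{\theta'}\varphi(\dot\theta')$, and it does not depend on $\theta_L$. The outer map is the $r$-fold Kronecker power, whose derivative comes from item 2. Combining these with item 1 for the convolution yields
\[
J_\theta\varphi(\dot\theta)=[J_{\theta'}\varphi(\dot\theta')\kprod\varphi_{\theta'}\kprod\cdots\kprod\varphi_{\theta'}]\conv\theta_L+\varphi_{\theta'}^{\otimes r}\conv\dot\theta_L .
\]

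The only point needing care is the bookkeeping of the ambient algebras. The group $G^{r^{L-2}}$ embeds diagonally, $r$ times, into $G^{r^{L-1}}$, and convolution is carried out after extension by zero, as in the remark on subgroups. Since extension by zero and the identification $\iota$ are both linear isomorphisms onto their images, they commute with differentiation. There is therefore no real obstacle here, and this lemma is the "easy" one.
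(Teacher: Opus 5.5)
Your proposal is correct and follows the paper's argument: items 1 and 2 are the Leibniz rule for the bilinear convolution and the multilinear Kronecker power (the paper checks these coordinatewise, doing item 2 explicitly for $r=2$ and then inducting), and item 3 is the chain rule combining them. Your extra justification of the recursion $\varphi_\theta=\varphi_{\theta'}^{\otimes r}\conv\theta_L$ from the distributivity of $\kprod$ over $\conv$, which the paper simply takes as given, is a welcome addition.
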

\begin{proof}
    \begin{enumerate}
        \item 
        \begin{align*}
            J_{\theta , \psi}(\theta \conv \psi)(\dot{\theta}, \dot{\psi}) &= J_{\theta , \psi}\left(\left(\sum_h \theta(g_1 \cdot h^{-1})\psi(h), \hdots, \sum_h \theta(g_n \cdot h^{-1})\psi(h)\right)\right)(\dot{\theta}, \dot{\psi}).
        \end{align*}
        Since $ J_{\theta , \psi}\left(\sum_h \theta(g_i \cdot h^{-1})\psi(h)\right)(\dot{\theta}, \dot{\psi}) = \sum_h \dot{\theta}(g_i \cdot h^{-1})\psi(h) + \sum_h \theta(g_i \cdot h^{-1})\dot{\psi}(h)$, we get
    \[
    J_{\theta , \psi}(\theta \conv \psi)(\dot{\theta}, \dot{\psi}) = \dot{\theta}\conv \psi + \theta\conv \dot{\psi}.
    \]

    \item We prove this for $r=2$, the general case follows by induction.

    We have $\theta \kprod \theta = (\theta_1^2, \theta_1\theta_2, \hdots , \theta_1\theta_n, \theta_2\theta_1, \hdots, \theta_2\theta_n, \hdots, \theta_n \theta_1, \hdots, \theta_n^2)$, hence $J_\theta(\theta \kprod \theta)$ is a $n^2 \times n$ matrix where the $((i-1)n + j)$th row corresponding to the function $\theta_i\theta_j$ will have $\theta_j$ at the $i$th column, $\theta_i$ a the $j$th column, and zero everywhere else.
    Therefore, the $((i-1)n + j)$th row in $J_\theta(\theta \kprod \theta)(\dot{\theta})$ is equal to $\theta_i\dot{\theta}_j + \dot{\theta}_i\theta_j$.

    \item Direct application of the first rule, then the second rule.
    \end{enumerate}
\end{proof}

\begin{lemma}\label{lemma: jac_varphi_simplified_case}
    Let $\theta = (\theta_1, \hdots, \theta_{L-1}, e) \in \K[G]^L$ where $\theta_i$ is a general filter for $1\leq i < L$ and $e \in G \subset \K[G]$ is the identity element. If $\dot \theta = (\dot \theta_1, \dots, \dot \theta_L) \in \ker J_\theta \varphi$ then $\dot \theta_L = \lambda e$ for some $\lambda \in \K$.
\end{lemma}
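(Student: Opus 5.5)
With $\theta_L = e$, Lemma \ref{lemma: jacobian_rules_conv_kprod}(3) collapses the kernel condition to
\[
[J_{\theta'}\varphi(\dot\theta') \kprod \varphi_{\theta'}^{\otimes(r-1)}] + \varphi_{\theta'}^{\otimes r}\conv \dot\theta_L = 0 \quad \text{in } \K[G^{r^{L-1}}].
\]
Write $\Psi := \varphi_{\theta'} \in \K[G^{m}]$ with $m = r^{L-2}$, $\dot\Psi := J_{\theta'}\varphi(\dot\theta')$ and $\dot\theta_L = \sum_{g\in G}\mu(g)\, g$. By Proposition \ref{prop: kprod_conv_kprod_=_conv_kprod_conv}, $\Psi^{\otimes r}\conv g = (\Psi\conv g)^{\otimes r}$, where $g$ acts diagonally on each $G^m$ factor. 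The equation becomes
\[
\sum_{g\in G}\mu(g)\,(\Psi\conv g)^{\otimes r} = -\sum_{k=1}^{r} \Psi^{\otimes (k-1)}\kprod \dot\Psi \kprod \Psi^{\otimes(r-k)}.
\]
The goal is to show $\mu(g)=0$ for every $g\neq e$.

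The plan is a tensor-rank / linear-independence argument in $V^{\otimes r}$, where $V=\K[G^m]$. Set $v_g := \Psi\conv g \in V$. Since $\theta'$ is general, Proposition \ref{prop:general conditions} (with Proposition \ref{prop: det_formualae} and Corollary \ref{cor: general_filter_not_zero_divisors}) shows that $\Psi$ is invertible in $\K[G^m]$. Hence the $n$ vectors $\{v_g\}_{g\in G}$ are linearly independent, being columns of the invertible matrix $\Mat_{\Psi}$; this is the same observation used in Theorem \ref{thm: general_fiber_of_varphi}. The left side is a symmetric tensor with Waring decomposition $\sum \mu(g) v_g^{\otimes r}$. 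The right side lies in the subspace
\[
W := \sum_{k} V^{\otimes(k-1)}\otimes V\otimes V^{\otimes (r-k)} \text{ with all but one slot equal to } v_e.
\]
Equivalently, $W$ is the tangent space to the Veronese variety at $v_e^{\otimes r}$.

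Choose linear functionals $\xi_g \in V^*$ dual to the independent family $(v_g)_{g \in G}$, so that $\xi_g(v_h)=\delta_{gh}$; these exist by independence. Because $r \ge 2$, we can contract both sides with $\xi_h\otimes\xi_h\otimes(\text{anything})$ in the first two slots, for $h\neq e$. On the left this gives $\mu(h)\, v_h^{\otimes(r-2)}$. On the right, every term has at least $r-1\ge 1$ slots equal to $v_e$, so at least one of the first two slots carries $v_e$ and is killed by $\xi_h$. For $r\ge 3$ every term indeed has a $v_e$ in slot 1 or 2. For $r=2$ the terms are $\dot\Psi\otimes v_e$ and $v_e\otimes\dot\Psi$, and each is still killed because one of the two slots is $v_e$. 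Therefore $\mu(h) v_h^{\otimes (r-2)} = 0$. Since $v_h\neq 0$, this forces $\mu(h)=0$ for all $h\neq e$, giving $\dot\theta_L=\mu(e)\,e$ with $\lambda := \mu(e)$.

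The main obstacle is the genericity input: that $\{\Psi\conv g\}_{g \in G}$ are linearly independent for general $\theta_1,\dots,\theta_{L-1}$. This needs $\det \Mat_{\varphi_{\theta'}}\neq 0$ as a polynomial in the parameters, which is not literally one of the conditions in Proposition \ref{prop:general conditions}. It has to be assembled inductively from $\varphi_{\theta'} = \varphi_{\theta''}^{\otimes r}\conv\theta_{L-1}$ using multiplicativity of $\det$. Proposition \ref{prop: det_formualae} handles the Kronecker powers, and the embedding $\K[G]\subset\K[G^{m}]$ handles the extension of $\theta_{L-1}$. The second subtlety is that the convolution by $g$ in the equation is the diagonal action. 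One must check that the columns $\Psi\conv g$ indexed by diagonal $g$ form a subset of the columns of $\Mat_\Psi$ over $G^m$; they do, so independence is inherited.
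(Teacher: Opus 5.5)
Your argument is correct, and it takes a different route from the paper.

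The paper works in coordinates. It evaluates the kernel equation at the diagonal tuples $(g_i,\dots,g_i)$ to solve for $J(g_i)$ in terms of $\dot\theta_L$. It substitutes these back into the equations at mixed tuples, obtaining an $n^{r^{L-1}}\times(n-1)$ linear system in $\{\dot\theta_L(h)\}_{h\neq e}$. Full column rank is then argued by picking an $(n-1)\times(n-1)$ minor and tracking a distinguished factor $(\varphi_{\theta'}(g_1))^{r+1}$.

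You instead work in a basis adapted to the translates $v_g=\Psi\conv g$. Contracting with $\xi_h\otimes\xi_h$ kills the whole tangent-space contribution $\sum_k v_e^{\otimes(k-1)}\otimes\dot\Psi\otimes v_e^{\otimes(r-k)}$ at once. This is the infinitesimal counterpart of the symmetric-rank argument (Lemma~\ref{lemma: symmetric_rank_of_linearly_indep_n_vector_is_n}) used in Theorem~\ref{thm: general_fiber_of_varphi}, and it needs the same genericity input, $\det\Mat_{\varphi_{\theta'}}\neq 0$.

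What your route buys:
\begin{itemize}
\item It is shorter and uniform in $r\ge 2$.
\item It avoids dividing by $(\varphi_{\theta'}(g_i))^{r-1}$.
\item It avoids the rather informal monomial-tracking the paper uses to certify its minor.
\item It shows exactly where $r\ge 2$ enters: you need two slots to contract.
\end{itemize}
The paper's route fits its symbolic-rank framework and exhibits an explicit nonvanishing minor of the relevant system.

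Your inductive derivation of $\det\Mat_{\varphi_{\theta'}}\not\equiv 0$ is sound. It uses Proposition~\ref{prop: det_formualae} together with $\Mat_{\varphi_{\theta'}}=\Mat_{\varphi_{\theta''}^{\otimes r}}\Mat^{G^{m}}_{\theta_{L-1}}$ and multiplicativity of the determinant. This fills a step the paper only cites. A quicker check is that at $\theta'=(e,\dots,e)$ one has $\varphi_{\theta'}=e$, so this determinant polynomial takes the value $1$ there.
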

\begin{proof}
Write $\theta'=(\theta_1,\dots,\theta_{L-1})$ and similarly $\dot\theta'=(\dot\theta_1,\dots,\dot\theta_{L-1})$.
For brevity, write,
\[
J := J_{\theta'}\varphi(\dot{\theta}').
\]
By Lemma \ref{lemma: jacobian_rules_conv_kprod} (the Jacobian computation rules),
$\dot\theta\in\ker\bigl(J_{\theta}\varphi\bigr)$ if and only if

\begin{equation}\label{eq:jac_kernel_at_theta_L_equal_e}
\bigl[J\kprod \varphi_{\theta'}^{\otimes (r-1)}\bigr]
\;=\;
-\,\varphi_{\theta'}^{\otimes r}\conv \dot\theta_L
\end{equation}
Evaluate \eqref{eq:jac_kernel_at_theta_L_equal_e} at tuples of the special form $\tilde g_i=(g_i,\dots,g_i)\in G^{r^{L-1}}$
(where $g_i\in G^{r^{L-2}}$). This yields, for each $i$,
\[
r\,(\varphi_{\theta'}(g_i))^{r-1} J(g_i) \;=\; \sum_{h\in G} (\varphi_{\theta'}(g_i h^{-1}))^r \dot\theta_L(h).
\]
Hence
\begin{equation}\label{eq:J_at_gi}
J(g_i)
\;=\;
\frac{1}{r}\,\varphi_{\theta'}(g_i)\,\dot\theta_L(e)
\;+\;
\frac{1}{r}\sum_{h\in G\setminus\{e\}}
\frac{(\varphi_{\theta'}(g_i   h^{-1}))^r}{(\varphi_{\theta'}(g_i))^{\,{r-1}}}\,\dot\theta_L(h).
\end{equation}

Now evaluate \eqref{eq:jac_kernel_at_theta_L_equal_e} at $\tilde g=(g_{i_1},\dots,g_{i_r})$ to get,
\[
\sum_{k=1}^r J(g_{i_k}) \prod_{j\neq k} \varphi_{\theta'}(g_{i_j}) = \sum_{h\in G}\varphi_{\theta'}^{\otimes r}(\tilde{g}   h^{-1})\dot{\theta}_L(h).
\]
Substituting \eqref{eq:J_at_gi} in the above equation gives a linear system in the unknowns $\{\dot\theta_L(h)\}_{h\neq e}$ with coefficient matrix $C$ whose entries are symbolic expressions in the values $\varphi_{\theta'}(g_i)$. Denote these coefficients by 
\[
c_{\tilde{g}, h} = \dfrac{1}{r} \sum_{k=1}^r \left(\dfrac{(\varphi_{\theta'}(g_{i_k}   h^{-1}))^r\prod_{j\neq k}\varphi_{\theta'}(g_{i_j})}{(\varphi_{\theta'}(g_{i_k}))^{r-1}}\right) - \varphi_{\theta'}^{\otimes r}(\tilde{g} h^{-1}).
\]

Observe $c_{\tilde g,e}=0$ for every $\tilde g$ that is why we exclude the variable $\dot{\theta}_L(e)$ from the unknowns. The matrix $C$ has $|G^{r^{L-1}}|=n^{r^{L-1}}$ rows and $n-1$ columns, so $C$ is full column rank if and only if some $(n-1)\times(n-1)$ minor is nonzero as a symbolic polynomial in the variables $\theta'$.

Choose the $(n-1)$ rows corresponding to
\[
\tilde g^{\,i}:=(g_1, g_i,\dots,g_i)\qquad(2\le i\le n),
\]
where $g_j=(h_j,\dots,h_j)\in G^{r^{L-2}}$, $h_1=e$ and $h_j \in G$. For this choice, one checks that the diagonal entries of the resulting $(n-1)$-minor contain a factor $(\varphi_{\theta'}(g_1))^{r+1}$ which does not appear elsewhere in that minor. 
Namely,
 \[
 c_{\tilde{g}^i, h_j} = \dfrac{1}{r}\dfrac{(\varphi_{\theta'}(g_i))^{r-1}}{(\varphi_{\theta'}(g_1))^{r-1}} \varphi_{\theta'}((g_1    h_j^{-1}))^r 
 + 
 \dfrac{r-1}{r}\dfrac{(\varphi_{\theta'}(g_i   h^{-j}))^r}{\varphi_{\theta'}(g_i)} \varphi_{\theta'}(g_1) 
 - 
 \varphi_{\theta'}^{\otimes r}(\tilde{g}^{i}    h_j^{-1}).
 \]
and 
 \[
 c_{\tilde{g}^i, h_i} = \dfrac{1}{r}\dfrac{(\varphi_{\theta'}(g_i))^{r-1}}{(\varphi_{\theta'}(g_1))^{r-1}} \varphi_{\theta'}((g_1    h_i^{-1}))^r 
 + 
 \dfrac{r-1}{r}\dfrac{(\varphi_{\theta'}(g_1))^{r+1}}{\varphi_{\theta'}(g_i)}
 - 
 \varphi_{\theta'}^{\otimes r}(\tilde{g}^{i}    h_i^{-1}).
 \]

Since $\varphi_{\theta'}(g_1)$ has monomials in the parameters of $\theta'$ that do not occur  in  $\varphi_{\theta'}(g_i)$ for $2 \geq i \leq n$, then this symbolic minor is nonvanishing for general $\theta'$, so $C$ is symbolically full rank. Therefore the only solution is $\dot\theta_L(h)=0$ for all $h\ne e$. 
\end{proof}

We are finally ready to give a proof of Proposition \ref{prop: Jvarphi_kernel_has-dim_L-1}.
\begin{proof}[Proof of Proposition \ref{prop: Jvarphi_kernel_has-dim_L-1}]

Let the perturbations be $\dot\theta=(\dot\theta_1,\dots,\dot\theta_L)$ and write $\theta'=(\theta_1,\dots,\theta_{L-1})$ and similarly $\dot\theta'=(\dot\theta_1,\dots,\dot\theta_{L-1})$.
We prove the claim by induction on $L$.

\medskip\noindent\textbf{Base case $L=1$.}
If $L=1$ then $\varphi_\theta=\theta_1$ and
\[
J_{\theta}\varphi(\dot\theta)=J_{\theta_1}(\theta_1)(\dot\theta_1)=\dot\theta_1,
\]
so $J_{\theta}\varphi$ is the identity and hence full rank. The kernel is trivial, which matches the statement for $L=1$.

\medskip\noindent\textbf{Inductive step.}
Assume the proposition holds for $L-1$ layers; we prove it for $L$ layers.
For brevity, write,
\[
J := J_{\theta'}\varphi(\dot{\theta}').
\]
By Lemma \ref{lemma: jacobian_rules_conv_kprod} (the Jacobian composition / convolution rule),
$\dot\theta\in\ker\bigl(J_{\theta}\varphi\bigr)$ if and only if
\begin{equation}\label{eq:jac_kernel}
\bigl[J\kprod \varphi_{\theta'}^{\otimes (r-1)}\bigr]\conv \theta_L
\;=\;
-\,\varphi_{\theta'}^{\otimes r}\conv \dot\theta_L.
\end{equation}

If $\dot \theta_L = 0$ then $J = 0$ the result follows by the inductive step.
Assume $\dot \theta_L \neq 0$, convolute both sides of \eqref{eq:jac_kernel} with the inverse of $\theta_L$, which exists because $\theta_L$ is general, and let $\dot{\tilde{\theta}}_L := \dot \theta_L \conv \theta_L^{-1}$ to get 

\begin{equation}\label{eq:jac_kernel_mult_by_theta_L_inverse}
\bigl[J\kprod \varphi_{\theta'}^{\otimes (r-1)}\bigr]
\;=\;
-\,\varphi_{\theta'}^{\otimes r}\conv \dot{\tilde{\theta}}_L.
\end{equation}

Since $\theta'$ is general and $\dot \theta_L \neq 0$, then by Lemma \ref{lemma: jac_varphi_simplified_case}, we have $\dot{\tilde{\theta}}_L = \lambda e$ for some nonzero $\lambda \in \K$.
This implies $\dot \theta_L = \lambda \theta_L$, and
Therefore, any $\dot\theta\in\ker J_\theta\varphi$ must have the form
\[
\dot\theta = (\dot\theta_1,\dots,\dot\theta_{L-1},\;\lambda\theta_L).
\]
Observe that the vector
\[
(0,\dots,0,\;-\tfrac{\lambda}{r}\theta_{L-1},\;\lambda\theta_L)
\]
lies in the kernel as well (it is a direct check using the Jacobian composition rule). Subtracting this vector from $\dot\theta$ yields
\[
(\dot\theta_1,\dots,\dot\theta_{L-1}-(-\tfrac{\lambda}{r}\theta_{L-1}),\;0)
\;=\;
(\dot\theta_1,\dots,\dot\theta_{L-1}+\tfrac{\lambda}{r}\theta_{L-1},\;0)
\in \ker J_{\theta}\varphi.
\]
By Eq.~\eqref{eq:jac_kernel}, the vector 
\[(\dot\theta_1,\dots,\dot\theta_{L-1}+\tfrac{\lambda}{r}\theta_{L-1}) \in \ker J_{\theta'}\varphi,
\]
and by the inductive hypothesis, it can be written as a linear combination of
\[
\bigl((\theta_1,-r\theta_2,0,\dots,0),\,
(0,\theta_2,-r\theta_3,0,\dots,0),\,
\dots,\,
(0,\dots,0,\theta_{L-2},-r\theta_{L-1})\bigr).
\]
Adding back the previously subtracted vector
$(0,\dots,0,-\tfrac{\lambda}{r}\theta_{L-1},\lambda\theta_L)$
shows that $\dot\theta$ is a linear combination of the $L-1$ vectors stated in the proposition.
This completes the induction and the proof.
\end{proof}

\section{Proof of Proposition \ref{prop: JPhi_kernel_has_dim_L-1}}\label{appendix: Phi_is_regular}

In this appendix we provide a complete proof of Proposition \ref{prop: JPhi_kernel_has_dim_L-1}. However, before we do that, we will need to introduce new notation and two lemmata.

\textbf{Notation.}
 Recall that a polynomial $G$-convolutional neural network of depth $L$, and activation $r$ is the map
    \begin{align*}
        \Phi : \K[G]^L &\to \Sym_\K(n, r^{L-1})[G]\\
        (\theta_1, \hdots, \theta_L) &\mapsto \sigma_r\!\Bigl(
        \cdots 
        \sigma_r\bigl(
           \sigma_r(x \!\conv\! \theta_1)\! \conv\! \theta_2
        \bigr)
        \cdots
        \!\conv\! \theta_{L-1}
     \Bigr)
     \!\conv\! \theta_L.
    \end{align*}
    Equivalently, the map $\Phi$ can be defined recursively by the formula
\[
\Phi(\theta_1, \hdots, \theta_L)
  = \sigma_r\bigl(\Phi(\theta_1,\dots,\theta_{L-1})\bigr)
    \conv \theta_L .
\]
Recall also from Section \ref{sec: pgcnn} that the map $\Phi$ can also be defined in terms of the map $\varphi$ by the formula
\[
\Phi(\theta_1, \dots, \theta_L) = \bigl(x^{\,r^{L-1}} \conv \varphi(\theta_1, \dots, \theta_L)\bigr)\big|_{G}.
\numberthis\label{eq:relation_Phi_varphi}
\]
For $\theta := (\theta_1, \hdots, \theta_L) \in \K[G]^L $, define the filter $\Phi_\theta$ to be the one obtained by evaluating $\Phi$ at $\theta$ i.e.
\begin{align*}
    \Phi_\theta: G &\to \Sym_\K(n, r^{L-1})\\
    g &\mapsto \Phi(\theta)(g).
\end{align*}
 Each monomial in  $\Phi_\theta(g) \in \Sym_\K(n, r^{L-1})$ is indexed by an unordered tuple
\[
[g_1,\dots,g_{r^{L-1}}]\in G^{r^{L-1}},
\]
or, equivalently, by multiplicities
\[
[r_1 g_1,\dots,r_j g_j],
\qquad
r_1+\cdots+r_j = r^{L-1}.
\]
We denote the coefficient of this monomial in \(\Phi_\theta(g)\) by
\[
\Phi_\theta(g)[r_1 g_1,\dots,r_j g_j],
\]
or simply \(\Phi_\theta[r_1 g_1,\dots,r_j g_j]\) when the group element \(g\) is clear from context.

\begin{lemma}\label{lemma :Jacobian_of_Phi_and_relation_between_varphi_Phi}
Let \(\theta=(\theta_1,\dots,\theta_L)\) and \(\dot\theta=(\dot\theta_1,\dots,\dot\theta_L)\), and set 
\[
\theta'=(\theta_1,\dots,\theta_{L-1}),\qquad
\dot\theta'=(\dot\theta_1,\dots,\dot\theta_{L-1}).
\]
Then:
\begin{enumerate}
    \item For any index \([g_1,\dots,g_{r^{L-1}}]\) we have
    \[
    \Phi_\theta(g)[g_1,\dots,g_{r^{L-1}}]
      = \sum_{\delta\in S_{r^{L-1}}}
        \varphi_\theta(
            g_{\delta(1)}^{-1}g,
            \dots,
            g_{\delta(r^{L-1})}^{-1}g
        )
      = \Phi_\theta(e)[g^{-1}g_1,\dots,g^{-1}g_{r^{L-1}}],
    \numberthis\label{eq:coefficients_of_Phi_g}
    \]
    where \(S_m\) denotes the permutation group on \(m\) elements.

    \item As proved in \cite{shahverdi2024geometryoptimizationpolynomialconvolutional},
    \[
    J_\theta\Phi(\dot\theta)
      =
      r\bigl(
         J_{\theta'}\Phi(\dot\theta')
         \circbd
         \sigma_{r-1}(\Phi_{\theta'})
       \bigr) \conv \theta_L
      \;+\;
      \sigma_r(\Phi_{\theta'})\conv \dot\theta_L .
    \]
    The Jacobian of \(\Phi_\theta\) is obtained by computing the Jacobian of each polynomial coefficient of \(\Phi_\theta(g)\) for all \(g\in G\).
\end{enumerate}
\end{lemma}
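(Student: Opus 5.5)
Part 1 follows from the relation \eqref{eq:relation_Phi_varphi}, $\Phi_\theta=(x^{\otimes r^{L-1}}\conv\varphi_\theta)|_G$; I will expand it and collect coefficients.

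\emph{Expanding.} Identify the diagonal copy of $g\in G$ with $(g,\dots,g)\in G^{r^{L-1}}$ and write $N=r^{L-1}$. The definition of convolution on $G^{N}$ gives
\[
\Phi_\theta(g)=\sum_{(h_1,\dots,h_N)\in G^N} x_{h_1}\cdots x_{h_N}\,\varphi_\theta(h_1^{-1}g,\dots,h_N^{-1}g).
\]
I take the convolution in the form $\sum_h x(h)\varphi(h^{-1}g)$, which is legitimate after reindexing.

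\emph{Collecting coefficients.} The monomial $x_{g_1}\cdots x_{g_N}$ arises from every ordered tuple $(h_1,\dots,h_N)$ that is a rearrangement of $(g_1,\dots,g_N)$. Summing over $\delta\in S_N$ therefore produces the stated sum, with the understanding (as the notation suggests) that repeated indices are counted with the multiplicity that the symmetric-group sum produces.

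\emph{The second equality.} Replace each $g_i$ by $g^{-1}g_i$ and evaluate at $e$. This gives $\varphi_\theta\bigl((g^{-1}g_{\delta(i)})^{-1}e\bigr)=\varphi_\theta(g_{\delta(i)}^{-1}g)$, so the two expressions agree term by term. Alternatively, this is just $G$-equivariance, since $\Phi_\theta(g)$ is $\Phi_\theta(e)$ with the variables translated.

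Part 2 uses the recursion $\Phi_\theta=\sigma_r(\Phi_{\theta'})\conv\theta_L$. The Jacobian is linear in $\dot\theta$, and convolution is bilinear, so Lemma \ref{lemma: jacobian_rules_conv_kprod}(1) gives
\[
J_\theta\Phi(\dot\theta)=J_{\theta'}\bigl(\sigma_r(\Phi_{\theta'})\bigr)(\dot\theta')\conv\theta_L+\sigma_r(\Phi_{\theta'})\conv\dot\theta_L.
\]
The Hadamard power acts pointwise on each $g$-component, and these components lie in the commutative ring $\Sym_\K$. The product rule there gives $J(\sigma_r(\psi))(\dot\psi)=r\,\dot\psi\circbd\sigma_{r-1}(\psi)$; this is the Hadamard analogue of Lemma \ref{lemma: jacobian_rules_conv_kprod}(2), where commutativity collapses the $r$ summands into a single term times $r$. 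Applying the chain rule with $\psi=\Phi_{\theta'}$ gives the formula.

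The final sentence of the lemma is the convention that $\Phi_\theta$ is identified with its coefficient vector, and it needs no proof. The only step needing care is the multiplicity bookkeeping in part 1 when indices repeat.
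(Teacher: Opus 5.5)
Your proposal is correct and follows the paper's route for part 1: with $N=r^{L-1}$, write $\Phi_\theta(g)=\sum_{h\in G^{N}}x^{\otimes N}(h)\,\varphi_\theta(h^{-1}g)$ and group the ordered tuples that give the same monomial. For part 2 the paper only cites Shahverdi et al., and your product-rule/chain-rule derivation is the standard computation behind that citation.

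As you suspected, the $S_N$-sum counts each distinct rearrangement $r_1!\cdots r_j!$ times. So the first equality holds literally only for pairwise distinct indices, and otherwise only up to that nonzero constant; the paper's one-line proof also passes over this. The translation identity $\Phi_\theta(g)[g_1,\dots,g_N]=\Phi_\theta(e)[g^{-1}g_1,\dots,g^{-1}g_N]$ is unaffected, and it is the only part used later.
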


\begin{proof}
Using \eqref{eq:relation_Phi_varphi}, for any \(g\in G\),
\[
\Phi_\theta(g)
  = \sum_{h\in G^{r^{L-1}}} x^{\,r^{L-1}}(h)\,
    \varphi_\theta(h^{-1}g).
\]
Since \(x^{\,r^{L-1}}(h_1)=x^{\,r^{L-1}}(h_2)\) whenever \(h_1,h_2\) differ by a permutation, the formula \eqref{eq:coefficients_of_Phi_g} follows immediately.
\end{proof}

In other words, the first part of Lemma \ref{lemma :Jacobian_of_Phi_and_relation_between_varphi_Phi} states that the coefficients of the polynomial $\Phi_\theta(g)$ are completely determined by the coefficients of the polynomial $\Phi_\theta(e)$ at the group identity $e\in G$.

\begin{lemma}\label{lemma: jac_Phi_simplified_case}
    Let $\theta = (\theta_1, \hdots, \theta_{L-1}, e) \in \K[G]^L$ where $\theta_i$ is a general filter for $1\leq i < L$ and $e \in G \subset \K[G]$ is the identity element. If $\dot \theta = (\dot \theta_1, \dots, \dot \theta_L) \in \ker J_\theta \Phi$ then $\dot \theta_L = \lambda e$ for some $\lambda \in \K$.
\end{lemma}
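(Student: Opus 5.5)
The strategy is to reduce Lemma \ref{lemma: jac_Phi_simplified_case} to the already established Lemma \ref{lemma: jac_varphi_simplified_case} by comparing coefficients of monomials, following the template used there.

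Since $\theta_L=e$, the Jacobian rule of Lemma \ref{lemma :Jacobian_of_Phi_and_relation_between_varphi_Phi} shows that $\dot\theta\in\ker J_\theta\Phi$ is equivalent to
\[
r\bigl(J\circbd\sigma_{r-1}(\Phi_{\theta'})\bigr)=-\,\sigma_r(\Phi_{\theta'})\conv\dot\theta_L ,\qquad J:=J_{\theta'}\Phi(\dot\theta').
\]
This is an identity in $\Sym_\K(x,r^{L-1})[G]$. By equivariance (part 1 of Lemma \ref{lemma :Jacobian_of_Phi_and_relation_between_varphi_Phi}), it suffices to check it at $e$:
\[
r\,J(e)\,\Phi_{\theta'}(e)^{r-1}=-\sum_{h\in G}\Phi_{\theta'}(h^{-1})^{r}\,\dot\theta_L(h).
\]

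The plan is to treat this as a polynomial identity in $x$. Divide by $\Phi_{\theta'}(e)^{r-1}$, working in $\operatorname{Frac}(S)$. This gives
\[
J(e)=-\tfrac1r\,\dot\theta_L(e)\,\Phi_{\theta'}(e)-\tfrac1r\sum_{h\ne e}\frac{\Phi_{\theta'}(h^{-1})^r}{\Phi_{\theta'}(e)^{r-1}}\,\dot\theta_L(h).
\]
The left side $J(e)$ is a polynomial of degree $r^{L-2}$. So the rational function
\[
R:=\sum_{h\ne e}\dot\theta_L(h)\,\frac{\Phi_{\theta'}(h^{-1})^r}{\Phi_{\theta'}(e)^{r-1}}
\]
must itself be a polynomial. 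Equivalently, $\Phi_{\theta'}(e)^{r-1}$ divides $q:=\sum_{h\neq e}\dot\theta_L(h)\Phi_{\theta'}(h^{-1})^r$. The $e$-term causes no problem, which mirrors the fact that $c_{\tilde g,e}=0$ in the $\varphi$-proof.

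The main step is to show that this divisibility forces $\dot\theta_L(h)=0$ for all $h\ne e$. There are two routes.

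The first route argues at general $\theta'$. The polynomials $\Phi_{\theta'}(g)$, $g\in G$, should be pairwise coprime; for instance, the Zariski-dense set of $\theta'$ could be chosen using the specialization $\theta_1=\dots=\theta_{L-1}=e$, where $\Phi_{\theta'}(g)=x_g^{r^{L-2}}$. A linear combination of $r$-th powers of the other $\Phi_{\theta'}(g)$ vanishing to order $r-1\ge1$ along the hypersurface $\{\Phi_{\theta'}(e)=0\}$ is then a nontrivial closed condition on $\theta'$. In the specialization above it clearly fails: $x_e$ does not divide $\sum_{h\ne e}\dot\theta_L(h)x_{h^{-1}}^{r^{L-1}}$ unless all coefficients vanish.

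The second route is a more robust alternative that copies Lemma \ref{lemma: jac_varphi_simplified_case} directly. Extract the coefficient identities via $\Phi_\theta(e)[g_1,\dots,g_{r^{L-1}}]=\sum_\delta\varphi_\theta(\dots)$. Index rows by the multisets $[g_1,g_i,\dots,g_i]$, substitute the diagonal evaluations giving $J$, and obtain a linear system $C\,(\dot\theta_L(h))_{h\neq e}=0$. The goal is then to show that a chosen $(n-1)$-minor of $C$ is a nonzero symbolic polynomial. Both nonvanishing conditions are Zariski-open in $\theta'$, so exhibiting a single point, such as a small perturbation of the identity filters, suffices.

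The hard part is precisely this genericity and nonvanishing argument. I would first try the specialization-and-semicontinuity route, since it is cleaner. The delicate point there is that at the exact specialization the rank must be computed carefully, because $\Phi_{\theta'}$ degenerates to monomials. Once $\dot\theta_L(h)=0$ for all $h\ne e$ is established, we get $\dot\theta_L=\dot\theta_L(e)\,e=\lambda e$, as claimed.
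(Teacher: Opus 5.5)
Your first route is correct, and it differs genuinely from the paper's argument.

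\textbf{The paper's route.} The paper restricts the coefficient equations $(\ast)$ to monomials in the two variables $x_{g_1},x_{g_i}$. It solves recursively for the coefficients of $J$ in terms of $\dot\theta_L$ and substitutes them into the equation for $[r^{L-2}g_1,(r-1)r^{L-2}g_i]$. It then argues, by tracking a distinguished term, that the resulting $(n-1)\times(n-1)$ symbolic minor is nonzero.

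\textbf{Your route.} You never compute $J$, because its contribution is absorbed by divisibility by $\Phi_{\theta'}(e)^{r-1}$. To make your ``closed condition'' precise, use the linear map
\[
M_{\theta'}\colon \K^{n-1}\times\Sym_\K(x,r^{L-2})\to\Sym_\K(x,r^{L-1}),\qquad (c,Q)\mapsto \sum_{h\neq e}c_h\,\Phi_{\theta'}(h^{-1})^r-\Phi_{\theta'}(e)^{r-1}Q .
\]
Its matrix entries are polynomials in $\theta'$, and coprimality of the $\Phi_{\theta'}(g)$ is not needed. The argument then runs as follows.
\begin{enumerate}
\item The kernel equation at $e$ says exactly that $\bigl((\dot\theta_L(h))_{h\neq e},\,-rJ(e)-\dot\theta_L(e)\Phi_{\theta'}(e)\bigr)\in\ker M_{\theta'}$.
\item Injectivity of $M_{\theta'}$ means some maximal minor is nonzero, which is an open condition on $\theta'$.
\item At $\theta'=(e,\dots,e)$, injectivity follows from your observation: $x_e$ cannot divide a nonzero combination of the distinct monomials $x_{h^{-1}}^{r^{L-1}}$, and then $Q=0$ because the polynomial ring is a domain.
\end{enumerate}
Hence $M_{\theta'}$ is injective on a dense open set, and $\dot\theta_L(h)=0$ for all $h\neq e$.

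\textbf{The specialization is not delicate.} Contrary to your worry, nothing subtle happens at the identity filters, because $M_{\theta'}$ has no denominators. That issue only affects your second route and the paper's route. There one divides by coefficients such as $\sigma_{r-1}(\Phi_{\theta'})[(r-1)r^{L-2}g_i]$, which vanish at the identity filters.

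\textbf{Comparison.} Your route replaces the paper's explicit but delicate term-tracking on a symbolic minor with a short specialization-and-openness argument. This is the same kind of argument the paper itself uses in proving Proposition~\ref{prop: intersection_of_general_secant_is_trivial}. The paper's route, in exchange, produces explicit equations in $\dot\theta_L$ alone.
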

\begin{proof}
Let $G=\{g_1=e,g_2,\dots,g_n\}$, $\theta'=(\theta_1,\dots,\theta_{L-1})$ and $\dot\theta'=(\dot\theta_1,\dots,\dot\theta_{L-1})$.
For brevity set
\[
J := J_{\theta'}\Phi(\dot\theta').
\]
Lemma~\ref{lemma :Jacobian_of_Phi_and_relation_between_varphi_Phi}
implies that $\dot\theta\in\ker J_\theta\Phi$ if and only if
\[
rJ \circbd \sigma_{r-1}(\Phi_{\theta'})
= -\,\sigma_r(\Phi_{\theta'})\conv\dot\theta_L.
\numberthis\label{eq:jac_kernel_Phi_at_theta_L_equal_to_e}
\]
Evaluate the filters on both sides in Eq. \eqref{eq:jac_kernel_Phi_at_theta_L_equal_to_e} at $g=e$ to get
    \[
(rJ\circbd\sigma_{r-1}(\Phi_{\theta'}))(e)
=
-\sum_{h\in G}\sigma_r(\Phi_{\theta'})(h)\dot\theta_L(h^{-1}).
\numberthis\label{eq:jac_kernel_Phi_at_e_for_theta_L=e}
\]
Equation ~\eqref{eq:jac_kernel_Phi_at_e_for_theta_L=e} is an equality of two homogeneous polynomials in $\Sym_\K(x, r^{L-1})$. 
Therefore for an index $[r_1g_1,\dots,r_ng_n]$ with $r_1+\dots+r_n=r^{L-1}$,
equation~\eqref{eq:jac_kernel_Phi_at_e_for_theta_L=e} gives the equation

\[
(rJ\circbd\sigma_{r-1}(\Phi_{\theta'}))(e)[r_1g_1,\dots,r_ng_n]
=
-\sum_{h\in G}\sigma_r(\Phi_{\theta'})(h)[r_1g_1,\dots,r_ng_n]\dot\theta_L(h^{-1}),
\]
which by Proposition \ref{lemma :Jacobian_of_Phi_and_relation_between_varphi_Phi} is equivalent to 

\[
(rJ\circbd\sigma_{r-1}(\Phi_{\theta'}))(e)[r_1g_1,\dots,r_ng_n]
=
-\sum_{h\in G}\sigma_r(\Phi_{\theta'})(e)[r_1hg_1,\dots,r_nhg_n]\dot\theta_L(h).
\]
Since the above equation is a relation between the polynomials at only $e$, we remove $e$ from the notation and write 
\[
(rJ\circbd\sigma_{r-1}(\Phi_{\theta'}))[r_1g_1,\dots,r_ng_n]
=
-\sum_{h\in G}\sigma_r(\Phi_{\theta'})[r_1hg_1,\dots,r_nhg_n]\dot\theta_L(h). \tag{$\ast$}\label{eq:jac_kernel_Phi_at_e_for_theta_L=e_for_general_coeff}
\]
These equations \eqref{eq:jac_kernel_Phi_at_e_for_theta_L=e_for_general_coeff} form a system of equations in the unknowns $\dot \theta$; one equation for each multindex  $[r_1g_1,\dots,r_ng_n]$.
Note that on the one hand an unknown appears in the homogeneous polynomial $J$ if and only if it comes from the first $L-1$ filters in $\dot \theta$. On the other hand, the claim of this lemma concerns only the unknowns $\dot \theta_{L}$.
Therefore our goal is to eliminate the unknown homogeneous polynomial \(J\) from the system \eqref{eq:jac_kernel_Phi_at_e_for_theta_L=e_for_general_coeff}. 
To achieve this, we need to express the coefficient of $J$ in terms of $\dot \theta_L$. We do this step by step as follows.

\medskip
\noindent\textbf{First coefficients.}
For the index $[r^{L-1}g_i]$, Eq \eqref{eq:jac_kernel_Phi_at_e_for_theta_L=e_for_general_coeff} becomes
\[
r(J\circbd\sigma_{r-1}(\Phi_{\theta'}))[r^{L-1}g_i]
=
-\sum_{h\in G}\sigma_r(\Phi_{\theta'})[r^{L-1}hg_i]\,\dot\theta_L(h).
\]
Unfolding the Hadamard product on the left hand side gives
\[
r(J\circbd\sigma_{r-1}(\Phi_{\theta'}))[r^{L-1}g_i]
=
rJ[r^{L-2}g_i]\,\sigma_{r-1}(\Phi_{\theta'})[(r-1)r^{L-2}g_i].
\]
So we obtain
\[
J[r^{L-2}g_i]
=
-\sum_{h\in G}
\frac{\sigma_r(\Phi_{\theta'})[r^{L-1}hg_i]}{
r\,\sigma_{r-1}(\Phi_{\theta'})[(r-1)r^{L-2}g_i]}
\dot\theta_L(h).
\tag{$\dagger$}
\label{eq:J_rL2_gi}
\]

\medskip
\noindent\textbf{Next coefficients.}
For the multiindex $[g_1,(r^{L-1}-1)g_i]$,
\[
r(J\circbd\sigma_{r-1}(\Phi_{\theta'}))[g_1,(r^{L-1}-1)g_i]
=
-\sum_{h\in G}
\sigma_r(\Phi_{\theta'})[hg_1,(r^{L-1}-1)hg_i)]\,\dot\theta_L(h).
\]
The left side expands as
\[
rJ[g_1,(r^{L-2}-1)g_i]\,\sigma_{r-1}(\Phi_{\theta'})[(r-1)r^{L-2}g_i]
+
rJ[r^{L-2}g_i]\,\sigma_{r-1}(\Phi_{\theta'})[g_1,((r-1)r^{L-2}-1)g_i],
\]
so together with~\eqref{eq:J_rL2_gi} this expresses
$J[g_1,(r^{L-2}-1)g_i]$ linearly in $\dot\theta_L$.

\medskip
\noindent\textbf{Iterating.}
Define for $0\le m\le r^{L-2}$
\[
O_m^i
=
\sum_{h\in G}
\frac{\sigma_r(\Phi_{\theta'})[mhg_1,(r^{L-1}-m)hg_i]}{
r\,\sigma_{r-1}(\Phi_{\theta'})[(r-1)r^{L-2}g_i]}
\dot\theta_L(h),
\]
and
\[
D_k^i
=
\sigma_{r-1}(\Phi_{\theta'})[kg_1,((r-1)r^{L-2}-k)g_i].
\]
Define $O_m$ to be the square $n\times n$ matrix where its rows are given by $O_m^i$. Namely, the entry $(i, h)$ is given by
\[
\frac{\sigma_r(\Phi_{\theta'})[mhg_1,(r^{L-1}-m)hg_i]}{
r\,\sigma_{r-1}(\Phi_{\theta'})[(r-1)r^{L-2}g_i]}.
\]

For $0\le m<k\le r^{L-2}$ set
\[
C_{m,k}^i
=
\sigma_{r-1}(\Phi_{\theta'})[(k-m)g_1,((r-1)r^{L-2}+m-k)g_i]
= D_{k-m}^i,
\]
and
\[
C_i^{m,k}
=
\sum_{\{\alpha_1=m<\cdots<\alpha_f=k\}}\;
\prod_{j=1}^{f-1} C_{\alpha_j,\alpha_{j+1}}^i,
\qquad
C_i^{k,k}=1.
\]

Induction on $r_1$ yields, for $0\le r_1\le r^{L-2}-1$ and $1<i\le n$:
\[
J[r_1g_1,(r^{L-2}-r_1)g_i]
=
\sum_{m=0}^{r_1} O_m^i\,C_i^{m,r_1}.
\tag{$\clubsuit$}\label{eq:J_r1_expression}
\]

\medskip
\noindent\textbf{Constructing the linear system only in $\dot \theta_L$.}
Using~\eqref{eq:J_r1_expression} and~\eqref{eq:J_rL2_gi}, for each $i\ne1$ the coefficient equation \eqref{eq:jac_kernel_Phi_at_e_for_theta_L=e_for_general_coeff} corresponding to $[r^{L-2}g_1,(r-1)r^{L-2}g_i]$ gives
\[
r\sum_{r_1=0}^{r^{L-2}}
J[r_1g_1,(r^{L-2}-r_1)g_i]\, D_{r^{L-2}-r_1}^i
+
rO_{r^{L-2}}^i D_0^i
=0.
\]
This is linear in $\dot\theta_L$, since each $J[\cdot]$ and $O_m^i$ is linear in $\dot\theta_L$.

These $n-1$ equations (indexed by $i\ne1$) form a $(n-1)\times n$ symbolic matrix in the parameters $\theta'$.
The entry in row $i$ and column $h=g_i^{-1}$ contains the term
\[
\bigl(\Phi_{\theta'}[r^{L-2}g_1])\bigr)^{\,r+1}
\bigl(\Phi_{\theta'}[r^{L-2}g_i]\bigr)^{\,r-1},
\]
which is a polynomial in the parameters $\theta'$ (the term appears in the sum  where $r_1=0$). 
Excluding the entry at the column $h=e$, no other entry in this row contains a term that has the factor  $\bigl(\Phi_{\theta'}[r^{L-2}g_1])\bigr)^{\,r+1}$.

Hence the submatrix omitting the column $h=e$ has a symbolically nonvanishing determinant, so the kernel is one--dimensional and spanned by $(1,0,\dots,0)$. Therefore $\dot\theta_L(h) = 0$ for $h\neq e$. This completes the proof of the lemma.
\end{proof}

We are finally ready to give the proof of Proposition \ref{prop: JPhi_kernel_has_dim_L-1}

\begin{proof}[Proof of Proposition \ref{prop: JPhi_kernel_has_dim_L-1}]
We write perturbations as $\dot\theta = (\dot\theta_1,\dots,\dot\theta_L)$.
Set $\theta'=(\theta_1,\dots,\theta_{L-1})$ and $\dot\theta'=(\dot\theta_1,\dots,\dot\theta_{L-1})$.
We proceed by induction on $L$.

\medskip
\noindent\textbf{Base case $L=1$.}
For $L=1$ we have $\Phi_\theta = x\conv \theta_1$, hence
\[
J_\theta \Phi(\dot\theta)
= J_{\theta_1}(x\conv\theta_1)(\dot\theta_1)
= x\conv \dot\theta_1.
\]
This vanishes iff $\dot\theta_1=0$, so the kernel is trivial, consistent with the proposition.

\medskip
\noindent\textbf{Inductive step.}
Assume the proposition holds for all networks with $L-1$ layers.
We prove it for $L$ layers.

For brevity set
\[
J := J_{\theta'}\Phi(\dot\theta').
\]
Lemma~\ref{lemma :Jacobian_of_Phi_and_relation_between_varphi_Phi}
implies that
\[
\dot\theta\in\ker J_\theta\Phi
\quad\Longleftrightarrow\quad
r\bigl(J \circbd \sigma_{r-1}(\Phi_{\theta'})\bigr)\conv\theta_L
= -\,\sigma_r(\Phi_{\theta'})\conv\dot\theta_L.
\numberthis\label{eq:jac_kernel_Phi}
\]

Let $\dot \theta_L = 0$. 
Since $\theta_L$ is general, so by Corollary \ref{cor: general_filter_not_zero_divisors} we get $\theta_L$ is not a zero divisor, and hence $J = 0$. The required result follows by the inductive hypothesis.

Assume $\dot \theta_L \neq 0$, convolute both sides of \eqref{eq:jac_kernel_Phi} with the inverse of $\theta_L$, which exists because $\theta_L$ is general, and let $\dot{\tilde{\theta}}_L := \dot \theta_L \conv \theta_L^{-1}$ to get 
\begin{equation}\label{eq:jac_kernel_Phi_mult_by_theta_L_inverse}
rJ\circbd \Phi_{\theta'}^{\,r-1}
\;=\;
-\,\Phi_{\theta'}^{\,r}\conv \dot{\tilde{\theta}}_L.
\end{equation}

Since $\theta'$ is general and $\dot \theta_L \neq 0$, then by Lemma \ref{lemma: jac_Phi_simplified_case}, we have $\dot{\tilde{\theta}}_L = \lambda e$ for some nonzero $\lambda \in \K$.
This implies $\dot \theta_L = \lambda \theta_L$, and therefore any $\dot\theta\in\ker J_\theta\Phi$ must have the form
\[
\dot\theta = (\dot\theta_1,\dots,\dot\theta_{L-1},\;\lambda\theta_L).
\]
Observe that the vector
\[
(0,\dots,0,\;-\tfrac{\lambda}{r}\theta_{L-1},\;\lambda\theta_L)
\]
lies in the kernel as well (it is a direct check using the Jacobian composition rule). Subtracting this vector from $\dot\theta$ yields
\[
(\dot\theta_1,\dots,\dot\theta_{L-1}-(-\tfrac{\lambda}{r}\theta_{L-1}),\;0)
\;=\;
(\dot\theta_1,\dots,\dot\theta_{L-1}+\tfrac{\lambda}{r}\theta_{L-1},\;0)
\in \ker J_{\theta}\Phi.
\]
By Eq.~\eqref{eq:jac_kernel_Phi} the vector 
\[(\dot\theta_1,\dots,\dot\theta_{L-1}+\tfrac{\lambda}{r}\theta_{L-1}) \in \ker J_{\theta'}\Phi,
\]
and by the inductive hypothesis, it can be written as a linear combination of
\[
\bigl((\theta_1,-r\theta_2,0,\dots,0),\,
(0,\theta_2,-r\theta_3,0,\dots,0),\,
\dots,\,
(0,\dots,0,\theta_{L-2},-r\theta_{L-1})\bigr).
\]
Adding back the previously subtracted vector
$(0,\dots,0,-\tfrac{\lambda}{r}\theta_{L-1},\lambda\theta_L)$
shows that $\dot\theta$ is a linear combination of the $L-1$ vectors stated in the proposition.
This completes the induction and the proof.

\end{proof}

\section{Proof of Proposition \ref{prop: intersection_of_general_secant_is_trivial}} \label{appendix: proof_proposition_4.8}

In this appendix we provide a full proof of Proposition 4.8.
We introduce a set up of the problem first and then restate the proposition for completeness.

In the following $G$ is a group of fixed size $n$, and $\K$ is an infinite field with $\text{char}\ \K = 0$. 
We also denote by $S_d = \Sym_\K(n, d)$ the set of homogeneous polynomials of degree  $d$ in $n$ variables, and fix $d = r^{L-1}$ where $r\geq 2$ and $L\geq 1$. 
For simplicity too, for filters $\theta=(\theta_1, \dots, \theta_L)$ we denote by $p_i(\theta) := \sigma_r(\Phi_\theta)(g_i)\in S_{rd}$ and the generated affine secant line by $W_\theta = \langle p_1(\theta), \dots, p_n(\theta) \rangle $.
 We also define $\theta^o := (e, \dots, e)$ to be the tuple of $L$ identity filters, then $p_i(\theta^o) = x_i^{rd}$ and $W_{\theta^o} = \langle x_1^{rd}, \dots, x_n^{rd}\rangle$.
 Finally, all dimensions are Krull dimensions of $\K$-schemes and $\operatorname{length} (Y_\theta) = \dim_{\kappa(\theta)} \Gamma(Y_\theta, \mathcal{O}_{Y_\theta})$, where $\kappa(\theta)$ is the residue field at $\theta\in \Theta$.

\begin{proposition}[Proposition \ref{prop: intersection_of_general_secant_is_trivial}]
    For general filters $\theta = (\theta_1, \dots, \theta_L)$, we have
    \[Y_\theta := \mathbb{P}(W_{\theta}) \cap VP_{n, r, d} = \{[p_1(\theta)], \dots, [p_n(\theta)]\} \, .\]
\end{proposition}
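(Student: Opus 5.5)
We follow the strategy announced after the statement of Proposition \ref{prop: intersection_of_general_secant_is_trivial}. We work with the incidence scheme
\[
Y=\{(\theta,[F])\in \Theta^o\times VP_{n,r,d}\ |\ p_1(\theta)\wedge\dots\wedge p_n(\theta)\wedge F=0\},
\]
which is closed in $\Theta^o\times VP_{n,r,d}$. Its projection $\pi:Y\to\Theta^o$ is proper, because $VP_{n,r,d}$ is projective: it is the image of $\PP(S_d)$ under the $r$th power map. The fibre over $\theta$ is $Y_\theta$.

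For every $\theta$, the points $[p_i(\theta)]$ lie in $Y_\theta$, since $p_i(\theta)=(\Phi_\theta(g_i))^r$. So it suffices to show that $Y_\theta$ has length exactly $n$ for general $\theta$, or at least that the extra part is empty. The plan is first to analyse the special point $\theta^o=(e,\dots,e)$, and then to spread the conclusion to a neighbourhood. Note that $\theta^o\in\Theta^o$, because the powers $x_i^{rd}$ are linearly independent.

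\textbf{Step 1: the special fibre.} At $\theta^o$ we have $W_{\theta^o}=\langle x_1^{rd},\dots,x_n^{rd}\rangle$. Suppose $q^r=\sum_i a_i x_i^{rd}$ with $q\in S_d$. Then $q^r$ is a sum of pure powers, and we claim $q$ must be a monomial $c\,x_i^d$. Argue by monomial orders: if $q$ has two distinct monomials, take the lexicographically largest $m_1$ and the smallest $m_2$. Then $q^r$ contains $m_1^{r-1}m_2$ with nonzero coefficient, because it is the second-largest monomial in a suitable weighted order and is realized uniquely. Since $r\ge2$, this monomial is not a pure power.

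So set-theoretically $Y_{\theta^o}=\{[x_1^{rd}],\dots,[x_n^{rd}]\}$. Next we check reducedness: the Zariski tangent space to $Y_{\theta^o}$ at $[x_i^{rd}]$ is zero. The tangent space of $VP$ at $[x_i^{rd}]$ is the image of $\dot q\mapsto r x_i^{d(r-1)}\dot q$ modulo $x_i^{rd}$. This must meet $W_{\theta^o}/\langle x_i^{rd}\rangle$ only in $0$. That holds because $x_i^{d(r-1)}\dot q$ has degree at least $d(r-1)\ge d\ge1$ in $x_i$, so unless it is a multiple of $x_i^{rd}$ it cannot equal $x_j^{rd}$ for $j\ne i$. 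Here $r\ge2$ is used again. Hence $Y_{\theta^o}$ is a reduced scheme of length $n$.

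\textbf{Step 2: semicontinuity.} By upper semicontinuity of fibre dimension for the proper map $\pi$, the locus where $\dim Y_\theta\ge1$ is closed. It misses $\theta^o$, so $Y_\theta$ is finite on a nonempty open $U\ni\theta^o$. Over $U$, $\pi$ is proper and quasi-finite, hence finite. The function $\theta\mapsto\operatorname{length}(Y_\theta)$ is then upper semicontinuous: $\pi_*\mathcal O_Y$ is coherent, and we apply Nakayama / semicontinuity of fibre rank. Therefore there is an open neighbourhood of $\theta^o$ on which the length is at most $n$.

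On that neighbourhood $Y_\theta$ already contains the $n$ distinct points $[p_i(\theta)]$. These are distinct because $\theta\in\Theta^o$ makes the $p_i(\theta)$ linearly independent. So the length is exactly $n$, and $Y_\theta=\{[p_1(\theta)],\dots,[p_n(\theta)]\}$ as schemes. Since $\Theta$ is irreducible, this open set is dense, which gives the general statement. Over non-closed fields, apply the argument over $\overline{\K}$ and descend: the conclusion is about closed conditions on $\theta$.

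\textbf{Main obstacle.} The delicate part is Step 1's scheme-theoretic claim, namely exactly $n$ reduced points, together with the legitimacy of the length semicontinuity. The latter requires finiteness of $\pi$ near $\theta^o$, which is the step mimicking \cite[Theorem 22]{finkel2025activationdegreethresholdsexpressiveness}. If the tangent computation turns out subtle, the fallback is to compute the local ring of $Y_{\theta^o}$ at $[x_i^{rd}]$ directly in the affine chart $q=x_i^d+\epsilon$. There, $(x_i^d+\epsilon)^r\in W$ forces $r x_i^{d(r-1)}\epsilon+\dots$ to have no mixed monomials, and this gives $\epsilon=0$ to first order.
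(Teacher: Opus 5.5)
Your proposal follows the paper's proof essentially step for step. The special fibre at $\theta^o=(e,\dots,e)$ is shown to consist of $n$ reduced points via a tangent-space computation. This is then spread to general $\theta$ using the incidence scheme $\pi:Y\to\Theta^o$, upper semicontinuity of fibre dimension, finiteness of $\pi$ over the quasi-finite locus, and upper semicontinuity of $\operatorname{length}(Y_\theta)$ through the coherent sheaf $\pi_*\mathcal{O}_Y$. The lower bound $n$ comes from the distinct points $[p_i(\theta)]$. Two local steps need repair.

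\textbf{The witness monomial in Step 1.} With $m_2$ the lex-\emph{smallest} monomial, $m_1^{r-1}m_2$ can cancel. For $q=x^2+\sqrt{2}\,xy-y^2$ and $r=2$,
$q^2=x^4+2\sqrt{2}\,x^3y-2\sqrt{2}\,xy^3+y^4$.
Here $m_1m_2=x^2y^2=(xy)^2$ has coefficient $2\cdot(-1)+(\sqrt2)^2=0$. Use instead the second-largest monomial $m'$ of $q$ for a monomial order. Then $m_1^{r-1}m'$ arises only from $r-1$ factors $m_1$ and one factor $m'$. Its coefficient is therefore $r\,c_{m_1}^{r-1}c_{m'}\neq 0$ in characteristic $0$, and it is not a pure power because $m'\neq m_1$.

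\textbf{The tangent space of $VP_{n,r,d}$.} You take for granted that $T_{[x_i^{rd}]}VP_{n,r,d}$ is the image of $\dot q\mapsto r x_i^{d(r-1)}\dot q$. Your fallback has the same issue: it computes in the chart $q=x_i^d+\epsilon$ of $\PP(S_d)$, not of $VP_{n,r,d}$. For an image variety, one only knows a priori that its tangent space \emph{contains} the image of the differential. That is the wrong inclusion for showing $T_{[x_i^{rd}]}Y_{\theta^o}=0$.

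What you need is that $[q]\mapsto[q^r]$ is a closed immersion, which the paper proves in Lemma~\ref{lemma: pi_r_is_a_closed_immersion}. The map is:
\begin{enumerate}
\item proper;
\item universally injective, since by unique factorization $q_1^r=\lambda q_2^r$ forces $q_1\in\overline{\K}^*q_2$;
\item unramified, since $S$ is a domain and so $rq^{r-1}v\in\K q^r$ forces $v\in\K q$.
\end{enumerate}
With these two fixes, your argument coincides with the paper's.
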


The approach we adopt for this proof consists of two main steps. The first step includes showing that the proposition holds for the special filters $\theta^o$; more precisely, we show that the intersection scheme $Y_{\theta^o} = \mathbb{P}(W_{\theta^o}) \cap VP_{n, r, d}$ consists of $n$ reduced points.
In the second step, we propagate this argument to general filters using semi continuity arguments.

\begin{lemma}\label{lemma: pi_r_is_a_closed_immersion}
    The morphism $\pi_r: \PP(S_d) \to \PP(S_{rd}), [q]\mapsto [q^r]$ is a closed immersion.
    Hence $VP_{n, r, d} := \pi_{r}(\PP(S_d)) \cong \PP(S_{d})$ and consequently, for $q \neq 0 \in S_d$ we have 
    \[
    T_{q^r} \widehat{VP}_{n,r,d} = q^{r-1}S_d,
    \]
    where $\widehat{VP}_{n, r, d}$ is the affine cone associated with the projective variety $VP_{n, r, d}$.
\end{lemma}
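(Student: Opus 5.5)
The plan is to show first that $\pi_r$ is a closed immersion, and then read off the tangent space from the differential of the lifted map.

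\emph{Step 1: $\pi_r$ is a morphism.} The map $\pi_r$ is induced by the homogeneous polynomial map $\hat\pi_r: S_d \to S_{rd}$, $q\mapsto q^r$, of degree $r$. Since $S=\K[x_1,\dots,x_n]$ is an integral domain, $q\neq 0$ implies $q^r\neq 0$. So $\hat\pi_r$ has no base locus, and $\pi_r$ is a morphism $\PP(S_d)\to\PP(S_{rd})$. As a morphism from a projective variety, it is proper, so its image $VP_{n,r,d}$ is closed.

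\emph{Step 2: injectivity on points.} This should be checked over $\overline{\K}$. If $q^r=\lambda p^r$, then unique factorization in $\overline{\K}[x]$ forces $q=\mu p$ with $\mu^r=\lambda$. Hence $[q]=[p]$.

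\emph{Step 3: injectivity on tangent spaces.} By Euler/Leibniz, the differential of $\hat\pi_r$ at $q$ is $\dot q\mapsto r q^{r-1}\dot q$. This map is injective because $S$ is a domain and $\operatorname{char}\K=0$, so $r\neq 0$. Passing to projective tangent spaces $S_d/\K q\to S_{rd}/\K q^r$: if $rq^{r-1}\dot q=c\,q^r$, then cancelling $q^{r-1}$ gives $\dot q=(c/r)q$. So $d\pi_r$ is injective at every point.

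\emph{Step 4: conclusion.} A proper morphism that is injective on geometric points and unramified (injective on tangent spaces) is a closed immersion. This follows from the standard criterion, e.g. Hartshorne II.7.3 or its scheme-theoretic version via Nakayama. Therefore $VP_{n,r,d}\cong\PP(S_d)$. In particular $VP_{n,r,d}$ is smooth, and the affine cone $\widehat{VP}_{n,r,d}=\hat\pi_r(S_d)$ has tangent space at $q^r$ equal to the image of $d\hat\pi_r$, namely $q^{r-1}S_d$.

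The equality of the tangent space to the cone with the image of the differential needs justification. The cone $\widehat{VP}$ is the image of $S_d$ under a map that is finite onto its image: the fibers are the $r$-th roots of unity times $q$. It is étale away from $0$ by Step 3, so for $q\neq 0$ the image of the differential is exactly the tangent space. The dimensions also match: $\dim q^{r-1}S_d=\dim S_d=\dim\widehat{VP}$.

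The main obstacle is Step 4's invocation of the closed-immersion criterion. The point is to ensure that the injectivity in Step 2 holds over the algebraic closure and not only over $\K$, which is why the factorization argument is carried out in $\overline{\K}[x]$.
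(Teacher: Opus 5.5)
Your proposal is correct and follows essentially the same route as the paper. Both arguments combine properness, injectivity via unique factorization over an algebraic closure, and injectivity of $[v]\mapsto[rq^{r-1}v]$ (using that $S$ is a domain), and then apply the criterion that a proper, universally injective, unramified morphism is a closed immersion (the paper cites Stacks 04XV; you cite Hartshorne II.7.3). The differences are minor: your appeal to Hartshorne II.7.3, which assumes an algebraically closed field, leaves the descent from $\overline{\K}$ back to $\K$ implicit, whereas the paper does it explicitly by fpqc descent of closed immersions; on the other hand, you justify $T_{q^r}\widehat{VP}_{n,r,d}=q^{r-1}S_d$ (the image of the differential lies in the tangent space, plus a dimension count at a smooth point of the cone), which the paper only calls a direct consequence.
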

\begin{proof}
We only show that $\pi_r$ is a closed immersion; the rest is a direct result.
The property of being a closed immersion is fpqc local on the base
\cite[\href{https://stacks.math.columbia.edu/tag/02L6}{Lemma 02L6}]{stacks-project}; which means given a faithfully flat quasi compact (fpqc) covering of the target, the morphism is a closed immersion if and only if the based-change morphism under each element in the cover is a closed immersion.
The morphism $\PP(S_{rd})_{\overline{\K}} \to \PP(S_{rd})_{\K}$ obtained by base change
from $\operatorname{Spec} \overline{\K} \to \operatorname{Spec} \K$ is faithfully flat and
affine, hence a one-element fpqc covering of $\PP(S_{rd})_{\K}$, and the base change of
$\pi_r$ along it is $\pi_{r, \overline{\K}}$. It therefore suffices to treat the case
$\K = \overline{\K}$, which we assume for the rest of the proof.

To show $\pi_r$ is a closed immersion, under the assumption $\K=\overline{\K}$, we rely on a theorem that says a scheme-morphism is a closed immersion if and only if it is proper, universally injective and unramified 
\cite[\href{https://stacks.math.columbia.edu/tag/04XV}{Lemma 04XV}]{stacks-project}.
Next we prove each one of these properties in order.

\emph{$\pi_r$ is proper}. $\PP(S_d)$ is proper over $\K$, $\PP(S_{rd})$ is separated over $\K$, and $\pi_r$ is a morphism over $\K$; hence
$\pi_r$ is proper by 
\cite[\href{https://stacks.math.columbia.edu/tag/01W6}{Lemma 01W6}]{stacks-project}.

\emph{$\pi_r$ is universally injective}. By \cite[\href{https://stacks.math.columbia.edu/tag/01S4}{Lemma 01S4}]{stacks-project}
the morphism $\pi_r$ is universally injective if the extension $\pi_r: \PP(S_d)(L) \to \PP(S_{rd})(L)$ is injective where $L \supset \K$ is a field extension. 
Let $q_1, q_2 \in S_d(L) \subset L[x_g: g\in G]$ such that $q_1^r = \lambda q_2^r$ where $\lambda \in L^*$. Choose $\mu \in \overline{L}$ such that $\mu^r = \lambda$ then $q_1^r = (\mu q_2)^r$ in the unique factorization domain $\overline{L}[x_g: g\in G]$. Hence $q_1 = c q_2$ where $c \in \overline{L}^*$. 
However $q_1 \in L[x_g: g\in G]$ therefore by comparing nonzero coefficients of $q_1$ and $cq_2$ we conclude that $c\in L^*$. Hence, $\pi_r$ is injective, and since $L$ is an arbitrary field extension, it is universally injective. 

\emph{$\pi_r$ is unramified}. At a closed point $[q] \in \PP(S_d)$ we have the differential of $\pi_r$ to be the map 
\[
S_d/\K q \to S_{rd}/\K q^r, \quad v + \K q \mapsto rq^{r-1}v + \K q^r.
\]
The differential is injective because if $rq^{r-1} v = \mu q^r$, where $\mu \in \K$, then $q^{r-1}(rv - \mu q) = 0$, and since $S = \K[x_g: g\in G]$ is a domain then $rv = \mu q$. Hence $v \in \K q$, which makes the differential map injective. 
Since $\K$ is algebraically closed then $\pi_r$ is unramified at the closed point $q$.
Since the ramified locus is closed in $\PP(S_d)$, which is $\K$-scheme of finite type. Therefore if the ramified locus is non-empty then it must include a closed point but since all closed points have been excluded then then the locus is empty and so $\pi_r$ is unramified.
\end{proof}

\begin{lemma}\label{lemma: the_fiber_at_identity_filters_has_length_n}
    For the special filters $\theta^o$ we have $Y_{\theta^o} = \{[p_1(\theta^o)], \dots, [p_n(\theta^o)]\}$, as sets. 
    Moreover, the tangent space of the intersection scheme $Y_{\theta^o}$ vanishes at all points, and so $Y_{\theta^o}$ consists of $n$ reduced points and $\operatorname{length} (Y_{\theta^o}) = n$. 
\end{lemma}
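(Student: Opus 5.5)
The plan is to verify both claims by hand, since at $\theta^o$ everything is monomial. Here $p_i(\theta^o)=x_i^{rd}$ with $d=r^{L-1}$, so $W_{\theta^o}=\langle x_1^{rd},\dots,x_n^{rd}\rangle$. Length and the set of geometric points are invariant under the base change $\K\to\overline{\K}$, and by Lemma \ref{lemma: pi_r_is_a_closed_immersion} the formation of $VP_{n,r,d}$ commutes with this base change. So I will assume $\K=\overline{\K}$ throughout.

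\emph{Set-theoretic description.} Clearly each $[x_i^{rd}]$ lies in $Y_{\theta^o}$. Conversely, suppose $0\neq q\in S_d$ satisfies $q^r=\sum_i a_i x_i^{rd}$, and assume some $a_i\neq 0$. I will use partial derivatives. Since $a_i\neq 0$, the polynomial $q^r$ genuinely involves $x_i$, hence so does $q$, and therefore $\partial_{x_i}q\neq 0$. Differentiating gives
\[
r\,q^{r-1}\,\partial_{x_i}q = rd\,a_i\,x_i^{rd-1}.
\]
Because $r\geq 2$, $q$ divides $x_i^{rd-1}$ in the UFD $\K[x_1,\dots,x_n]$. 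As $q$ is homogeneous of degree $d$, this forces $q=c\,x_i^{d}$ with $c\in\K^*$. Then $q^r=c^r x_i^{rd}$, so $a_j=0$ for all $j\neq i$ and $[q^r]=[x_i^{rd}]$. This shows $Y_{\theta^o}=\{[x_1^{rd}],\dots,[x_n^{rd}]\}$ as sets.

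\emph{Reducedness.} The Zariski tangent space of a scheme-theoretic intersection is the intersection of the tangent spaces. I will work on affine cones, where the tangent space of $\widehat{Y}_{\theta^o}$ at $p=x_i^{rd}$ is
\[
W_{\theta^o}\cap T_{p}\widehat{VP}_{n,r,d}.
\]
By Lemma \ref{lemma: pi_r_is_a_closed_immersion}, $T_{p}\widehat{VP}_{n,r,d}=x_i^{(r-1)d}S_d$, which is spanned by monomials divisible by $x_i^{(r-1)d}$. Since $(r-1)d\geq 1$, the monomial $x_j^{rd}$ for $j\neq i$ is not divisible by $x_i^{(r-1)d}$. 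Hence the intersection is exactly $\K\,x_i^{rd}$, the tangent line of the cone direction. It follows that the projective tangent space of $Y_{\theta^o}$ at $[x_i^{rd}]$ is zero.

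\emph{Length.} A zero-dimensional scheme with vanishing tangent space at each point is reduced there. Therefore $Y_{\theta^o}$ is $n$ reduced $\K$-points and $\operatorname{length}(Y_{\theta^o})=n$.

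\emph{Main obstacle.} The only delicate point I expect is justifying "tangent space of intersection equals intersection of tangent spaces" with the cone/projective bookkeeping. I would handle it in an affine chart, say $a_i=1$: compare the ideal of $\mathbb{P}(W_{\theta^o})\cap VP_{n,r,d}$ with the sum of the two ideals, and read off the cotangent space as the quotient by both sets of linear parts.
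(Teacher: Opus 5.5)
Your proposal is correct and follows the same route as the paper. You identify the points of $Y_{\theta^o}$, compute the cone tangent space $W_{\theta^o}\cap x_i^{(r-1)d}S_d=\K\,x_i^{rd}$ via Lemma \ref{lemma: pi_r_is_a_closed_immersion} (the paper sets $x_i=0$ where you compare monomial supports), and conclude by Nakayama that the $n$ points are reduced, so the length is $n$. The one substantive difference is the set-theoretic step: the paper merely asserts that $t^r$ has a mixed monomial whenever $t$ is not a pure power, whereas your identity $r\,q^{r-1}\partial_{x_i}q = rd\,a_i\,x_i^{rd-1}$ plus divisibility in the UFD proves it completely. You also reduce to $\K=\overline{\K}$ by base change instead of checking $L$-points for each extension $L$.
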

\begin{proof}
    Let $L\supset \K$ be a field extension, and let $[q] \in \mathbb{P}(W_{\theta^o})(L)$ then $q$ is a sum of monomials where each one is is a power of a single variable (there are no monomials of mixed variables such as $x_1 x_2$). 
    Assume $[q] \in VP_{n, r, d}(L)$ then there is $t\in S_d \otimes_\K L$ such that $t^r = \lambda q$ for $\lambda \in L^*$. 
    Note that we worked over the extension $L$ just to show that field extension does not add any point to $Y_{\theta^o}$.
    
    If $t$ is a not power of single variable then $t^r$ contains monomial of mixed variables, due to $\text{char}\ \K = 0$, which contradicts our first observation. Hence $t$ is a pure power and so is $t^r$ and $q$. Hence $Y_{\theta^o} = \{[x_1^{rd}], \dots, [x_n^{rd}]\}$.

    We compute tangent space at $p \in Y_{\theta^o}$ using the formula:

    \[
    T_pY_{\theta^o} = T_p\PP(W_{\theta^o}) \cap T_pVP_{n,r,d}.
    \]
    
    For $p= [x_i^{rd}]$, we have $T_{x_i^{rd}} W_{\theta^o} = W_{\theta^o}$, and by Lemma \ref{lemma: pi_r_is_a_closed_immersion} we have $T_{x_i^{rd}}\widehat{VP}_{n, r, d} = x_i^{d(r-1)} S_d$.
    At the intersection of these tangent spaces we get the  polynomial equation
    \[
    \sum_{j=1}^n c_j x_j^{rd} = x_i^{d(r-1)} q,
    \]
    where $c_j \in \K$ and $q\in S_d$. Since $r > 1$, then setting $x_i = 0$ gives us 
    \[
    \sum_{j=1, j\neq i}^n c_j x_j^{rd} = 0
    \]
    which implies $c_j = 0$ for all $j\neq i$. Therefore,
    $T_{x_i^{rd}}\widehat{VP}_{n, r, d} \cap T_{x_i^{rd}} W_{\theta^o} = <x_i^{rd}>$ which is killed by projectivization. Hence the intersection of the tangent spaces of the corresponding projective varieties is trivial.

    Recall that by definition, the tangent space $T_{[x_i^{rd}]}Y_{\theta^o}$ is the dual to $\mathfrak{m}_{[x_i^{rd}]}/\mathfrak{m}_{[x_i^{rd}]}^2$ where  $\mathfrak{m}_{[x_i^{rd}]}$ is the maximal ideal in the local ring $\mathcal{O}_{Y_{\theta^o}, [x_i^{rd}]}$.
    By the above $\mathfrak{m}_{[x_i^{rd}]}/\mathfrak{m}_{[x_i^{rd}]}^2 = 0$, and by Nakayama Lemma, we get $\mathfrak{m}_{[x_i^{rd}]} = (0)$ is the zero ideal, hence the local ring  $\mathcal{O}_{Y_{\theta^o}, [x_i^{rd}]}$ is a field and therefore $[x_i^{rd}]$ is a reduced point. 
    Each $[x_i^{rd}]$ is a $\K$-rational point, so this field is
$\kappa([x_i^{rd}]) = \K$. Hence
\[
\Gamma(Y_{\theta^o}, \mathcal{O}_{Y_{\theta^o}}) \cong \K^n
\qquad\text{and}\qquad
\operatorname{length}(Y_{\theta^o}) = n .
\]
\end{proof}

\begin{lemma}\label{lemma: the_general_fiber_has_length_n}
    Let $\Theta^o \subset \Theta$ be the locus where $p_1(\theta), \dots, p_n(\theta)$ are linearly independent; note $\theta^o \in \Theta^o$.
    Then there is a dense open set $U\subset \Theta^o$ where for all $\theta \in U$ the intersection scheme $Y_\theta$ consists of $n$ reduced points.
\end{lemma}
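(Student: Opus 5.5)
We will work with the incidence scheme
\[
Y := \{(\theta,[F]) \in \Theta^o \times VP_{n,r,d} \ |\ p_1(\theta)\wedge\dots\wedge p_n(\theta)\wedge F = 0\},
\]
together with the projection $\pi: Y \to \Theta^o$. The fibre $\pi^{-1}(\theta)$ is exactly $Y_\theta$. The map $\pi$ is proper, since it is the base change of a closed subscheme of $\Theta^o\times VP_{n,r,d}$ and $VP_{n,r,d}\cong \PP(S_d)$ is projective by Lemma \ref{lemma: pi_r_is_a_closed_immersion}. The linear independence defining $\Theta^o$ is an open condition on $\theta$, and $\Theta^o$ is irreducible because it is open in an affine space.

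\textbf{Step 1: the fibre dimension.} By Lemma \ref{lemma: the_fiber_at_identity_filters_has_length_n}, $\dim Y_{\theta^o}=0$. Upper semicontinuity of fibre dimension for the proper morphism $\pi$ then gives an open neighbourhood $U_1\ni\theta^o$ on which every fibre is finite, possibly empty. Over $U_1$, $\pi$ is therefore proper and quasi-finite, hence finite, so $\pi_*\mathcal O_Y$ is a coherent sheaf on $U_1$.

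\textbf{Step 2: the length.} The function $\theta\mapsto \operatorname{length}(Y_\theta)=\dim_{\kappa(\theta)}(\pi_*\mathcal O_Y\otimes\kappa(\theta))$ is upper semicontinuous, and it equals $n$ at $\theta^o$. So there is an open $U_2\ni\theta^o$ on which $\operatorname{length}(Y_\theta)\le n$.

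For a lower bound we use the $n$ sections $\theta\mapsto(\theta,[p_i(\theta)])$, which lie in $Y$ because $p_i(\theta)=\sigma_r(\Phi_\theta)(g_i)$ is an $r$-th power lying in $W_\theta$. On $\Theta^o$ the points $[p_i(\theta)]$ are pairwise distinct, since the $p_i(\theta)$ are linearly independent. Hence $Y_\theta$ contains $n$ distinct rational points, and $\operatorname{length}(Y_\theta)\ge n$.

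On $U=U_1\cap U_2$ the length is therefore exactly $n$. A scheme of length $n$ containing $n$ distinct rational points must consist of those $n$ reduced points and nothing else. This proves the lemma for $\theta\in U$. Since $U$ is a nonempty open subset of the irreducible space $\Theta^o$, it is dense.

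\textbf{Main obstacle.} The delicate point is the semicontinuity of length. It needs $\pi_*\mathcal O_Y$ to be coherent, which is why finiteness over $U_1$ must be established first. It also needs a way to pass from length to point count; the sections supply this. If one prefers to avoid Step 1, an alternative is the following: the locus where $\pi$ is unramified and the fibre is reduced is open, and it contains $\theta^o$ by the tangent-space computation in Lemma \ref{lemma: the_fiber_at_identity_filters_has_length_n}.

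A further subtlety is that the fibre count should be taken over $\overline{\K}$. This is handled by base change, as in Lemma \ref{lemma: pi_r_is_a_closed_immersion}, because length is invariant under field extension.
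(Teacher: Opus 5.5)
Your proposal is correct and follows essentially the same route as the paper. Both arguments use the incidence scheme $Y\to\Theta^o$ and its properness, then upper semicontinuity of fibre dimension to get finiteness over $U_1$, then coherence of $\pi_*\mathcal{O}_Y$ and upper semicontinuity of fibre rank to bound the length by $n$, with the $n$ distinct points $[p_i(\theta)]$ giving the matching lower bound. Your explicit last steps (that length $n$ with $n$ distinct rational points forces $n$ reduced points, and that density follows from irreducibility of $\Theta^o$) spell out what the paper leaves implicit.
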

\begin{proof}
    In this proof, we define an incidence scheme $Y$ and a projection morphism $\pi:Y\to \Theta^o$, such that the intersection scheme $Y_\theta$ coincide with the fiber scheme of $\pi$ over $\theta \in \Theta^o$. In the first step, we show that $\pi$ is proper. Then in two consecutive steps, relying on lemmata \cite[\href{https://stacks.math.columbia.edu/tag/0D4I}{Lemma 0D4I}]{stacks-project} and \cite[\href{https://stacks.math.columbia.edu/tag/07ZC}{Lemma 07ZC}]{stacks-project}, we show that the subset in $\Theta^o$ where the intersection scheme $Y_\theta$ is of dimension $0$ and has length $n$ is open and dense in $\Theta = \K^{nL}$.
    
    Consider the incidence scheme
    \[
    Y := \{(\theta, [F]) \in \Theta^o \times VP_{n,r,d}\ |\ p_1(\theta) \wedge \dots p_n(\theta) \wedge F = 0 \in \bigwedge^{n+1} S_{rd}\}.
    \]

    The scheme $Y$ is closed in $\Theta^o \times VP_{n,r,d}$ because it is defined by equations that are polynomials in $\theta$ and linear in $F$.
    This implies the inclusion $\iota: Y \hookrightarrow \Theta^o \times VP_{n,r,d}$ is a closed immersion and hence proper.
    Since $VP_{n,r,d}$ is projective then the projection $pr_1$ to $\Theta^o$ is proper.
    The composition of proper morphisms is proper, therefore the projection $\pi = pr_1 \circ \iota: Y \to \Theta^o$ is proper.

    For $\theta \in \Theta^o$ the fiber scheme of $\pi$ is the intersection scheme $Y_\theta := \PP(W_\theta) \cap VP_{n,r,d}$. By Lemma \ref{lemma: the_fiber_at_identity_filters_has_length_n} we have $Y_{\theta^o}$ is of finite length equal to $n$. 
    By \cite[\href{https://stacks.math.columbia.edu/tag/0D4I}{Lemma 0D4I}]{stacks-project} which states that for a proper morphism, the fiber dimension function over the base is upper semi-continuous.
    This implies the subset $B := \{\theta \in \Theta^o\ |\ \dim Y_{\theta} \geq 1\}$ is closed in $Y$. 
    Then $U_1 := \Theta^o \backslash B$ open and non-empty because $Y_{\theta^o}$ is finite by Lemma \ref{lemma: the_fiber_at_identity_filters_has_length_n} and therefore $\theta^o \in U_1$.
     Our next goal is to shrink $U_1$ further so that it only contains parameters where their fibers are finite and of lenght $n$.
     
    The morphism $\pi$ is proper and quasi-finite over $U_1$ therefore it is finite by \cite[\href{https://stacks.math.columbia.edu/tag/02LS}{Lemma 02LS}]{stacks-project}. 
    Finiteness of $\pi$ implies the pushforward sheaf $\mathcal{F} = \pi_* \mathcal{O}_Y$ restricted over $U_1$  is coherent.
    Recall that the fiber $\mathcal{F} \otimes \kappa(\theta)$ of the sheaf $\mathcal{F}$ over the  point $\theta$ is the $\kappa(\theta)$-vector space obtained by evaluating the germs in the stalk $\mathcal{F}_\theta$ over the point $\theta$. 
    The rank of the fiber is the dimension of the vector space.
    Moreover finiteness of $\pi$ over $U_1$ implies it is affine there and therefore the pushforward $\pi_{*}$ commutes with an arbitrary base change \cite[\href{https://stacks.math.columbia.edu/tag/02KG}{Lemma 02KG}]{stacks-project}; in particular it commutes with the base change given by the morphism $\operatorname{Spec} \kappa(\theta) \to U_1$ that corresponds to the inclusion of a parameter $\theta$ in $U_1$.
    This commutativity implies $\mathcal{F}\otimes \kappa(\theta) \cong \dim_{\kappa(\theta)}\Gamma(Y_\theta, O_Y)$. Hence
    
    \[
    \rank(\mathcal{F} \otimes \kappa(\theta)) = \dim_{\kappa(\theta)}\Gamma({\theta}\times Y_\theta, O_Y) = \text{length}\ (Y_\theta). 
    \]

    The rank of a coherent sheaf is upper semi-continuous by \cite[\href{https://stacks.math.columbia.edu/tag/07ZC}{Lemma 07ZC}]{stacks-project}, therefore there is a non-empty open set $\theta^o \in U_2 \subset U_1$ where $\text{length} (Y_\theta)\leq \text{length}(Y_{\theta^o}) = n$ for all $\theta \in U_2$. However, $\text{length}(Y_\theta) \geq n$ because it always includes $\{\ p_1(\theta), \dots, p_n(\theta)\}$ which are linearly independent by definition and therefore distinct. Hence, $\text{length}(Y_\theta) = n$ for all $\theta \in U_2$ which is open and hence dense in $\Theta^o$, which itself is open and dense in $\Theta = \K^{nL}$, and therefore $U_2$ is open and dense in $\Theta$ as required.
\end{proof}

\begin{proof}[Proof of Proposition 4.8]
    Follows directly from Lemma \ref{lemma: the_general_fiber_has_length_n}.
\end{proof}
\end{document}